\documentclass{article}
\usepackage{iclr2027_conference,times}
\iclrfinalcopy
\usepackage[utf8]{inputenc}
\usepackage[T1]{fontenc}
\usepackage{amsmath,amssymb,amsthm}
\usepackage{graphicx}
\usepackage{pgf}

\usepackage{booktabs}
\usepackage{enumitem}
\usepackage{microtype}
\usepackage{longtable}
\usepackage{placeins}
\usepackage{etoolbox}
\usepackage{array}
\usepackage{hyperref}
\makeatletter
\patchcmd{\LT@output}{\vss}
  {\vskip\z@\@plus1fil\@minus\normalbaselineskip}{}{}
\patchcmd{\LT@output}{\vss}
  {\vskip\z@\@plus1fil\@minus\normalbaselineskip}{}{}
\makeatother

\ifdefined\pdfsuppressptexinfo \pdfsuppressptexinfo=-1 \fi
\ifdefined\pdftrailerid \pdftrailerid{} \fi
\hypersetup{hidelinks, pdfauthor={Zhiyu Zhang, Yupeng Li}, pdftitle={Tracking States or Tracking Cosets? An Algebraic Account of Learned State Tracking},
            pdfsubject={}, pdfkeywords={}, pdfcreator={}, pdfproducer={}}

\newtheorem{lemma}{Lemma}
\newtheorem{theorem}{Theorem}
\newtheorem{proposition}{Proposition}
\newtheorem{corollary}{Corollary}

\title{Tracking States or Tracking Cosets? An\\
Algebraic Account of Learned State Tracking}
\author{Zhiyu Zhang \\
Conflux Labs Ltd \\
\texttt{zhiyu\_zhang1@alumni.brown.edu} \\
\And
Yupeng Li \\
Michigan State University \\
\texttt{yupengli@msu.edu}}
\date{}

\begin{document}
\maketitle
\lhead{Preprint.}
\suppressfloats[t]
\raggedbottom

\begin{abstract}
State tracking requires composing a sequence of updates, 
but accuracy alone does not reveal what a model has learned. 
We study neural networks trained to predict the running product 
of group elements. We identify quotient solutions in Transformers, 
where models recover the quotient class while predicting 
nearly uniformly among its members. The reciprocal of class size 
predicts partial accuracy without a fitted parameter, extending 
parity-based accounts to non-parity quotients. Our baseline 
Transformers’ predictions change little under prefix reordering 
beyond the exact-tracking frontier. We prove that, for finite 
groups under uniform i.i.d. full-group inputs, optimal order-blind 
exact accuracy converges to the reciprocal of abelianization 
class size as prefix length grows, consistent with the observed 
abelianization plateaus. Sequential updates permit more: 
any partition into right cosets of a subgroup, normal or not, 
survives sequential updates. In our census of standard Transformers, 
every recovered coset partition comes from a normal subgroup, whereas 
parameter-matched recurrent networks pass through both normal and 
non-normal right-coset stages during training. On $A_5$, we 
identify low-dimensional subspaces of the recurrent state that 
encode non-normal cosets. In the three-dimensional cases, coset mean vectors form approximate
dodecahedra, and swapping the state components 
in these subspaces transfers the donor's coset state through a shared
input suffix. Our results connect partial accuracy, learning
stages, and internal computation through the subgroup cosets that 
models learn to track.

\end{abstract}

\section{Introduction}
\label{sec:intro}

Tracking a changing state requires composing a sequence of updates. This problem
arises when language models follow entities in prompts \citep{kim2023entity},
execute programs \citep{nye2021scratchpads}, or predict game moves
\citep{li2023world}. Studies have found evidence of internal state representations
\citep{li2021implicit}. Yet strong predictive performance does not
imply faithful recovery of the underlying state space \citep{vafa2024world}.

\begin{figure}[t]
\centering
\includegraphics[width=\linewidth]{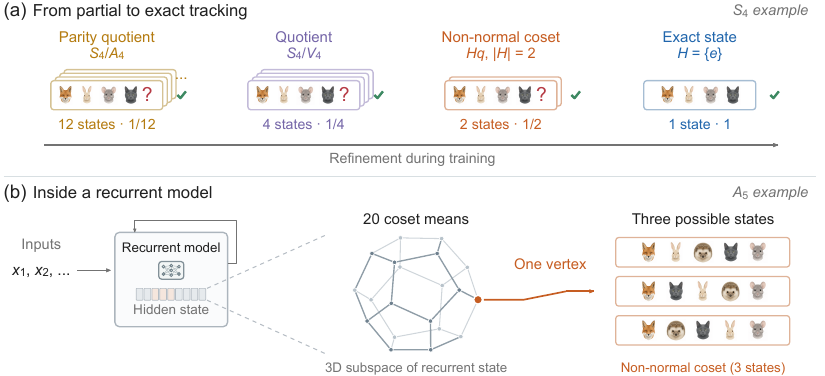}
\input{figures/fig1_overview_caption}
\label{fig:overview}
\end{figure}

Prior work establishes parallel shortcuts \citep{liu2023shortcut}, identifies
associative and parity-assisted scans \citep{li2025state}, and characterizes
the acquisition of group representations in two-layer networks
\citep{marchetti2026sgc}. We connect these accounts by asking which states the model 
can still tell apart. We characterize this information through 
group quotients and the uncertainty within their classes,
then show how a broader family of subgroup cosets describes recurrent
learning stages.

In our group composition task, the state is a running product \citep{liu2023shortcut}. 
Quotient tracking identifies a class of possible states without distinguishing its members
(Fig.~\ref{fig:overview}). For $S_4$, parity leaves 12 possible states and
the finer $S_3$ quotient leaves four. We find that class size predicts partial
accuracy when the model identifies the class but predicts nearly uniformly
within it. We prove that the product properties recoverable without 
input order are exactly those determined by the abelianization and show that, 
under uniform i.i.d. full-group inputs on finite groups, 
optimal order-blind accuracy converges to the reciprocal class size.

Sequential tracking permits more than quotient solutions. Any partition into
right cosets of a subgroup survives sequential updates, even when the subgroup
is not normal and its classes do not form a quotient group. We find these
non-normal coset stages in recurrent networks, including on $A_5$, which has
no nontrivial proper quotient group. We then test whether the network actually uses
these coset states. On $A_5$, a few directions of the recurrent state encode
the cosets, and exchanging those components transfers the donor's coset
state through a shared input suffix.

\begin{enumerate}[leftmargin=*,nosep]
\item \textbf{A quantitative account of partial tracking.}
We identify quotient solutions beyond parity \citep{li2025state}, account for their partial
accuracy through class size and output uncertainty, and establish both the
recoverable information and the asymptotic accuracy limit without input order
(\S\S\ref{sec:law}--\ref{sec:theory}).
\item \textbf{From persistent solutions to learning stages.}
We connect Transformer quotient solutions with normal and non-normal coset
stages in recurrent training, showing that refinement need not follow a
sequence of quotient groups (\S\ref{sec:levers}).
\item \textbf{An internal representation of coset states.}
We identify the geometry of coset mean vectors in hidden space and show through state
interventions that the corresponding subspaces carry subsequent coset
predictions (\S\ref{sec:coset-state}).
\end{enumerate}

\section{Background and preliminaries}
\label{sec:setup}
\label{sec:preliminaries}

\paragraph{Group composition as state tracking.} A group $G$ is a set of
invertible transformations with an associative composition operation and an
identity $e$. Each input $x_t\in G$ updates the state by $q_t=q_{t-1}x_t$,
starting from $q_0=e$. The task is to predict every running product
$q_t=x_1\cdots x_t$, a model of state tracking used in prior work
\citep{liu2023shortcut,li2025state}. A group is abelian if all elements commute.
For permutation groups, we compose left to right, $(pq)(i)=q(p(i))$.
We use $S_n$ for permutations of $n$ objects, $A_n$ for even permutations,
$C_n$ for the cyclic group of order $n$, and $V_4$ for the Klein four-group.

\paragraph{Cosets and quotient states.} A subgroup $H\le G$ partitions $G$
into \emph{right cosets} $Hq=\{hq:h\in H\}$, each containing $|H|$ states.
Tracking only the coset of $q_t$ leaves its members indistinguishable. For example,
if $H=\{e,h\}$, each class is the pair $\{q,hq\}$. A subgroup is normal when
$Hq=qH$ for every $q\in G$. We then write $N\trianglelefteq G$, and its classes form the quotient
group $G/N$, with $(qN)(rN)=(qr)N$. We also call a quotient class a \emph{fiber}.
Full tracking has $H=\{e\}$.

\paragraph{Updating and combining summaries.} Right cosets support sequential
updates because $(Hq)x=H(qx)$. Composing two classes by multiplying their
representatives is well-defined only for normal subgroups. More generally,
if state summaries can be combined to recover a summary of their product,
the states sharing $e$'s summary form a normal subgroup
(App.~\ref{app:algebra}). These constraints do not specify a neural implementation.

\paragraph{Abelianization.} The commutator subgroup $K=[G,G]$ is generated by
the commutators $[x,y]=xyx^{-1}y^{-1}$. The quotient $G^{\mathrm{ab}}=G/K$, called the
\emph{abelianization}, is commutative.
When $K$ is finite, we write $f=|K|$ for its class size (Fig.~\ref{fig:overview}a).

\paragraph{Input order.} The \emph{multiset} of an input sequence records
its elements and their counts, without their order. We call a model
\emph{order-blind} if its output depends only on these counts.
Section~\ref{sec:theory} establishes which product information such a model
can recover exactly on every input.

\section{Quotient tracking in Transformers}
\label{sec:law}

\subsection{Experimental setup}
\label{sec:experimental-setup}

We train models with cross-entropy loss at every position of length-100 sequences.
The finite-group experiments use independent uniform inputs from the full group
unless stated otherwise. We also test infinite groups with finite input alphabets.
Our group suite includes every non-abelian group of order at most 12,
selected larger groups through order 125, and the abelian control $C_8$.
Recipe A uses a 4-layer, width-256 GPT-NeoX transformer
\citep{black2022neox} with 3.16M parameters. Recipe B, adapted from
\citet{liu2023shortcut}, uses a 3-layer, width-512 GPT-2 transformer
\citep{radford2019gpt2}. The baseline uses three seeds per group; configurations
and exceptions are in Apps.~\ref{app:config} and~\ref{app:coverage}.

For the most probable output $\hat q_t$, we measure exact accuracy
$A_{\mathrm{exact}}:=\Pr(\hat q_t=q_t)$ and quotient accuracy 
$A_{\mathrm{quotient}}:= \Pr(\hat q_tN=q_tN)$.
Their ratio is exact accuracy conditional on a correct quotient,
$A_{\mathrm{exact}\mid Q}:=\Pr(\hat q_t=q_t\mid\hat q_tN=q_tN)$, and
$|G|A_{\mathrm{exact}}$ is exact accuracy in units of chance.
The exact and quotient \emph{frontiers} are the longest contiguous prefixes
with accuracy at least .75 and .90, respectively. Beyond the transition
region, we also measure uncertainty within the true quotient class.
Evaluation windows, sample counts, and threshold choices are detailed in
App.~\ref{app:readouts}.

\subsection{The class size predicts partial accuracy}
Under recipe A, we find that Transformers settle at group-dependent exact accuracies beyond
the exact frontier. Across the groups we examined, these accuracy levels often closely match the reciprocal abelianization class size, 
$1/f$ (Fig.~\ref{fig:quotient-output}a,d,e, Table~\ref{tab:law}). 
This match suggests that models retain the abelianization class while leaving its 
members unresolved. We test this interpretation without prescribing a subgroup. 
In a separate census across nine groups, we group states with similar
mean output distributions and test whether the recovered partitions form cosets.
The states that Transformers fail to distinguish are grouped 
exactly according to their abelianization class
(App.~\ref{app:blind-census}).

Without predictive information distinguishing $f$ equiprobable class members,
conditional exact accuracy is $1/f$. Allowing for class errors gives
$A_{\mathrm{exact}}\approx A_{\mathrm{quotient}}/f$, with no fitted parameter
(Table~\ref{tab:law}). Near-uniform probabilities alone do not imply this
argmax accuracy (we test output uniformity separately in \S\ref{sec:fiber}).
Comparing $A_4$ and $\mathrm{SL}(2,3)$ isolates class size from abelianization alone. 
Both have abelianization $C_3$, but $f$ equals four and eight, with conditional accuracies near
$1/4$ and $1/8$, respectively. The dihedral groups follow the same prediction.
We establish the special role of abelianization without input order in
\S\ref{sec:theory}.

This prediction requires finite classes, not a finite group. We compare finite
Heisenberg groups $H_3(\mathbb Z/p)$ with the infinite groups
$G_p=H_3(\mathbb Z)/\langle z^p\rangle$, where $z=(0,0,1)$.
In $G_p$, the coordinates $x,y$ are unbounded, but each abelianization class
contains only $p$ states. With full-group inputs for the finite groups and
a finite alphabet for $G_p$, both families accurately track the quotient
and approach $1/3$ exact accuracy for $p=3$
(Fig.~\ref{fig:quotient-output}d,e; Table~\ref{tab:law}).
Specific methods are given in App.~\ref{app:heisenberg-law}.

\begin{figure}[!t]
\centering
\includegraphics[width=\linewidth,trim={0 2.5bp 0 2.5bp},clip]{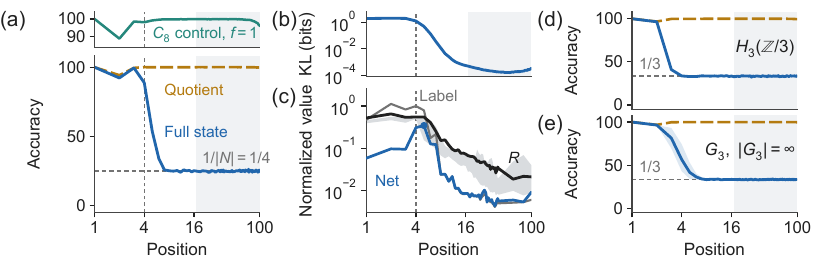}
\input{figures/fig2_a4_caption}
\label{fig:quotient-output}
\end{figure}

\subsection{Accurate classes, unresolved members}
\label{sec:fiber}

We now test the uniform-output assumption directly, using $A_4$ as an example.
Under recipe A, a model trained on $A_4$ identifies the correct class
in $A_4/V_4\cong C_3$ with 99.96\% accuracy across positions 17--100, 
while exact accuracy is only 24.98\% (Fig.~\ref{fig:quotient-output}a), 
and only 0.0527\% of its output mass lies
outside that four-state class. In contrast, with the same architecture and budget, we
obtain 99.26\% full-state accuracy on $C_8$, where every abelianization class contains
a single state. Repeated inputs account for the separate early dip at
position 2 (Fig.~\ref{fig:quotient-output}a, App.~\ref{app:early-dip}).

We next ask whether individual predictions distinguish states within the
correct quotient class. We restrict each prediction to its true class,
renormalize it, and compute its KL divergence from uniform before averaging.
Across 8,192 sequences and positions 17--100, the mean divergence is only
.000232 bits (Fig.~\ref{fig:quotient-output}b). Thus the within-class
distributions are close to uniform on average. The negligible log-likelihood gain likewise indicates no average advantage
for the true member (App.~\ref{app:readouts}). In hidden states, our linear
probes recover quotient identity but remain near chance for a state's index
within its class (App.~\ref{app:output-details}, 
Fig.~\ref{fig:linear-probes}).

\begin{table}[!htbp]
\centering
\caption{Quotient-class size and partial accuracy, min--max over the same runs.
Here $f=|[G,G]|$ is the size of an abelianization class.
Cond. is $A_{\mathrm{exact}}/A_{\mathrm{quotient}}$.
We report $\times$chance as $|G|A_{\mathrm{exact}}$. With accurate quotient
tracking, the class-size prediction gives $\times$chance approximately $|G|/f$.
$C_8$ has singleton classes and $A_5$ a single
class; we omit conditional and quotient accuracies for these controls.
$A_5$ and $S_5$ share $f$ but differ twofold in the predicted $\times$chance.
$^{*}$148,438 training steps; unmarked rows use 74,219. We use all available doubled-budget
runs where possible (App.~\ref{app:coverage}). $^{\dagger}$The $G_5$ window includes an early transition;
exact accuracy at positions 70--100 is .1995--.1997.}
\label{tab:law}
\footnotesize
\begingroup
\fontsize{8}{9.5}\selectfont
\setlength{\tabcolsep}{1pt}
\begin{minipage}[t]{.458\linewidth}
\centering
\begin{tabular}[t]{@{}lrrrrrr@{}}
\toprule
$G$ & $|G|$ & $f$ & $1/f$ & Cond. (\%) & Quot. (\%) & $\times$chance \\
\midrule
$C_8$ & 8 & 1 & 1 & -- & -- & 7.88--7.96 \\
$D_4^{*}$ & 8 & 2 & $1/2$ & 49.95--50.05 & 99.94--99.99 & 4.00 \\
$Q_8^{*}$ & 8 & 2 & $1/2$ & 49.91--49.96 & 99.92--99.93 & 3.99 \\
$S_3$ & 6 & 3 & $1/3$ & 33.32--33.39 & 99.89--99.92 & 2.00 \\
$D_6^{*}$ & 12 & 3 & $1/3$ & 33.31--33.47 & 99.86--99.97 & 3.99--4.01 \\
$\mathrm{Dic}_3$ & 12 & 3 & $1/3$ & 33.25--33.36 & 99.51--99.93 & 3.97--3.99 \\
$A_4^{*}$ & 12 & 4 & $1/4$ & 25.00--25.02 & 99.97 & 3.00 \\
$D_5$ & 10 & 5 & $1/5$ & 19.99--20.07 & 89.34--99.99 & 1.79--2.01 \\
$C_7\!\rtimes\!C_3$ & 21 & 7 & $1/7$ & 14.24--14.30 & 99.93--99.95 & 2.99--3.00 \\
\bottomrule
\end{tabular}
\end{minipage}\hfill
\begin{minipage}[t]{.525\linewidth}
\centering
\begin{tabular}[t]{@{}lrrrrrr@{}}
\toprule
$G$ & $|G|$ & $f$ & $1/f$ & Cond. (\%) & Quot. (\%) & $\times$chance \\
\midrule
$\mathrm{SL}(2,3)^{*}$ & 24 & 8 & $1/8$ & 12.45--12.49 & 99.95--99.98 & 2.99--3.00 \\
$S_4$ & 24 & 12 & $1/12$ & 8.30--8.36 & 99.77--99.98 & 1.99--2.00 \\
$S_5$ & 120 & 60 & $1/60$ & 1.65--1.66 & 79.36--99.99 & 1.57--1.99 \\
$A_5$ & 60 & 60 & $1/60$ & -- & -- & 0.99--1.01 \\
\midrule
$H_3(\mathbb Z/3)$ & 27 & 3 & $1/3$ & 33.32--33.43 & 99.82--99.96 & 8.98--9.02 \\
$H_3(\mathbb Z/5)^{*}$ & 125 & 5 & $1/5$ & 19.95--19.96 & 99.87--99.97 & 24.90--24.94 \\
\midrule
$G_3$ & $\infty$ & 3 & $1/3$ & 33.29--33.40 & 99.96 & -- \\
$G_4$ & $\infty$ & 4 & $1/4$ & 24.99--25.11 & 99.95--99.97 & -- \\
$G_5$ & $\infty$ & 5 & $1/5$ & 21.15--21.81$^{\dagger}$ & 99.92--99.93 & -- \\
\bottomrule
\end{tabular}
\end{minipage}
\endgroup

\end{table}

\subsection{Beyond the abelianization}
\label{sec:ladder}
Quotient tracking also extends beyond the abelianization. With recipe B and
odd-permutation inputs, we find a Transformer model tracking $S_4/V_4\cong S_3$ 
(Fig.~\ref{fig:decay}a).
This quotient refines the two parity classes into six classes of four states
and retains order-sensitive information. Quotient accuracy is near 100\%
through position 20 and falls to 72\% at position 100, while conditional
exact accuracy averages 25.01\% over positions 17--100. Within these classes,
mean per-example KL from uniform is .00272 bits. 
Across 21 runs on seven groups, we observe intermediate quotient behavior in
five runs, one on $S_4$, two on $D_{15}$, and two on $Q_{16}$.
Appendix~\ref{app:quotient-stages} gives coverage, selection thresholds,
distribution diagnostics, and reordering tests.
We also observe quotient stages before recurrent models reach exact tracking
(\S\ref{sec:lstm-stages} and App.~\ref{app:training-details}), although
their learning order varies across runs.

\subsection{Learning signals within quotient classes}
\label{sec:mechanism}

We ask whether the loss still supplies a signal for distinguishing members of
a learned class. We construct input variants with the same quotient target
but different full-state targets, covering every member of the class. For
each variant, we isolate the component of the cross-entropy gradient with
respect to all trainable parameters that distinguishes class members. Its \emph{Label} term contrasts the true target with a uniform label
on the class; its \emph{Model} term measures the model's departure from
within-class uniformity. Their sum is \emph{Net}. We average these vectors
before taking their norms (derivation in App.~\ref{app:gradient-details}).

For the $A_4$ model studied in \S\ref{sec:fiber}, the Net signal peaks
at position 5 and falls to 1.63\% of that peak at position 40
(Fig.~\ref{fig:quotient-output}c). To distinguish small individual signals
from cancellation between examples, we also measure
$R=\|\sum_i\ell_i\|/\sum_i\|\ell_i\|$, where $\ell_i$ is a variant's Label
gradient. A small $R$ 
means that contributions largely cancel when combined.
At position 40, $R$ is 0.8--3.3\% across nine $A_4$, $\mathrm{SL}(2,3)$, and
$S_4$ Transformers, covering class sizes \(f=4,8,12\).
Thus a nonzero signal on each example can leave little signal after averaging.

Cancellation suggests an obstacle to refinement. If logits and their parameter
sensitivities are identical across class members, averaging their distinct
targets produces a uniform class target. Uniform outputs alone do not
guarantee this condition, since their parameter sensitivities may differ.
We also modify backpropagated gradients while leaving the forward computation
unchanged. Generic gradient perturbations do not improve the frontier relative
to controls. An intervention aligned with a parameter-update direction obtained
from later training improves local accuracy but still does not advance the
frontier. These tests leave the causal role of cancellation in persistent
partial learning unresolved (App.~\ref{app:gradient-details}).

\section{The algebraic boundary without input order}
\label{sec:theory}

Next, we test whether models' full predictions depend on input order. Under recipe A
(uniform i.i.d. full-group inputs), prefix shuffling changes the full output
distribution little beyond the exact frontier in 28 runs across seven groups
(Fig.~\ref{fig:recipe-a-reordering}; App.~\ref{app:recipe-a-reordering}). 
What product information survives without order?

\begin{theorem}[State information available without order]
\label{thm:orderblind}
Fix $t\ge3$ and $\phi:G\to\mathbb R$. There exists an order-blind model that
outputs $\phi(x_1\cdots x_t)$ on every sequence in $G^t$ if and only if $\phi$
is constant on $[G,G]$-cosets.
\end{theorem}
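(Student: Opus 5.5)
The plan is to write $K=[G,G]$, reduce both directions to the behaviour of the product under reordering, and prove them separately. In this statement an order-blind model is any function of the multiset of $(x_1,\dots,x_t)$. So the claim is that $\phi(x_1\cdots x_t)$ factors through the multiset map exactly when $\phi$ factors through $G\to G/K$.

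\emph{Sufficiency.} Suppose $\phi$ is constant on $K$-cosets. Let $\pi:G\to G^{\mathrm{ab}}=G/K$ be the quotient map. Because $\pi$ is a homomorphism into a commutative group, $\pi(x_1\cdots x_t)=\pi(x_1)\cdots\pi(x_t)$. The right-hand side does not change when the factors are permuted. So any two sequences with the same multiset have products in the same $K$-coset, and $\phi$ agrees on them. The model is therefore well defined by the rule: given a multiset, list its elements in any fixed order, multiply, and apply $\phi$. This model is order-blind and outputs $\phi(x_1\cdots x_t)$ on every sequence. The construction does not need $G$ to be finite.

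\emph{Necessity.} Suppose an order-blind model $M$ outputs $\phi(x_1\cdots x_t)$ on every sequence in $G^t$. Take arbitrary $a,x,y\in G$ and compare the two length-$t$ sequences
\[
(a,x,y,e,\dots,e)\quad\text{and}\quad(a,y,x,e,\dots,e),
\]
padded with $t-3$ copies of $e$. This is where $t\ge3$ is used: one free slot carries an arbitrary prefix $a$, and two slots carry a swapped pair. The sequences have the same multiset, so $M$ gives the same output on both, and hence $\phi(axy)=\phi(ayx)$. Now set $g=ayx$; since $a$ is arbitrary, so is $g$. We get $axy=g\,(yx)^{-1}xy=g\,x^{-1}y^{-1}xy=g\,[x^{-1},y^{-1}]$. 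As $x,y$ range over $G$, the commutators $[x^{-1},y^{-1}]$ range over all commutators. Therefore $\phi(gc)=\phi(g)$ for every $g\in G$ and every commutator $c$.

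\emph{Extending to all of $K$.} Every element of $K$ is a finite product $c_1\cdots c_m$ of commutators. Inverses need no separate treatment, since $[x,y]^{-1}=[y,x]$ is again a commutator. Induction on $m$, applying the previous step with $g$ replaced by $gc_1\cdots c_{j-1}$, gives $\phi(gk)=\phi(g)$ for all $k\in K$. So $\phi$ is constant on each coset $gK$. Since $K$ is normal, left and right cosets coincide, which matches the statement's ``$[G,G]$-cosets''.

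The only real obstacle is the necessity direction, specifically finding a pair of same-multiset sequences whose products differ by an arbitrary commutator while starting from an arbitrary base point $g$. The padded swap above handles both requirements at once. It also explains the hypothesis $t\ge3$: with $t=2$ there is no free slot for the prefix $a$, and one only obtains $\phi(xy)=\phi(yx)$, which does not in general force $\phi$ to be constant on $K$-cosets.
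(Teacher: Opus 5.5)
Your proof is correct and is essentially the paper's argument. Sufficiency comes from the abelianization homomorphism, and for necessity the paper also swaps the second and third entries of $(x,y,z,e,\dots,e)$ behind a free first slot to get $\phi(g[y^{-1},z^{-1}])=\phi(g)$, then uses that commutators generate $[G,G]$.
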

The proof and length-two exception are in App.~\ref{app:algebra}.
We next ask how accurately a model can predict the full product from the multiset.

\begin{proposition}[Optimal order-blind accuracy]
\label{prop:ceiling}
Let $|[G,G]|=f<\infty$. For any distribution of length-$t$ words over a finite
alphabet in $G$, let $M$ be the input multiset and define
\begin{equation}
C_t=\mathbb E_M\!\left[\max_{g\in G}\Pr(q_t=g\mid M)\right].
\label{eq:orderblind-ceiling}
\end{equation}
The maximum exact accuracy of an order-blind model is $C_t\ge1/f$.
Equality holds precisely when the conditional product distribution is uniform
on its $[G,G]$-coset for almost every multiset.
\end{proposition}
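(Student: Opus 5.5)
The plan has three parts: identify the optimal order-blind predictor, prove the lower bound by locating the product's coset, and characterize when the bound is tight.

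\textbf{Optimal predictor.} An order-blind model is a function $\hat q=\hat q(M)$ of the multiset, possibly randomized. For any such model,
\[
\Pr(\hat q=q_t)=\mathbb E_M\bigl[\Pr(q_t=\hat q(M)\mid M)\bigr]\le C_t .
\]
The bound is attained by the MAP rule $\hat q(M)\in\arg\max_g\Pr(q_t=g\mid M)$. The maximum exists because, over a finite alphabet with fixed $t$, each multiset is compatible with only finitely many words, so the conditional product distribution has finite support. A randomized model cannot do better, since averaging over its internal randomness gives a convex combination of deterministic rules. This settles the first claim.

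\textbf{Lower bound $C_t\ge 1/f$.} The key observation is that the multiset determines the coset $q_tK$, where $K=[G,G]$. Every ordering of the word has the same image in the abelian group $G^{\mathrm{ab}}=G/K$, namely $\prod_x \bar x^{\,n_x(M)}$, where $\bar x$ is the image of $x$ and $n_x(M)$ its count. This is the easy direction of Theorem~\ref{thm:orderblind}, and it needs only this computation, not the restriction $t\ge3$. So, conditional on $M$, the product $q_t$ lies almost surely in a single coset $c(M)$ with $|c(M)|=f$. A probability distribution supported on $f$ points has maximum mass at least $1/f$, so $\max_g\Pr(q_t=g\mid M)\ge 1/f$ for every $M$. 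Taking the expectation gives $C_t\ge1/f$.

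\textbf{Equality case.} On a support of size $f$, the maximum mass equals $1/f$ exactly when the distribution is uniform on that support. Write
\[
\Delta(M):=\max_g\Pr(q_t=g\mid M)-1/f\ge0 .
\]
Then $C_t=1/f$ holds if and only if $\mathbb E_M[\Delta(M)]=0$, which for a nonnegative variable means $\Delta(M)=0$ for almost every $M$, i.e.\ the conditional law is uniform on $c(M)$ almost surely. Since $M$ takes countably many values, ``almost every'' simply means every multiset of positive probability.

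\textbf{Main obstacle.} I expect the only delicate point to be measure-theoretic and definitional bookkeeping rather than algebra. One must state clearly that order-blind models may be arbitrary, even randomized, functions of $M$. One must also ensure that $\Pr(\cdot\mid M)$ is well defined. This is immediate here because the alphabet is finite, so $M$ is discrete. The group itself may be infinite as long as $K$ is finite, and the argument does not need $G$ finite, only $f<\infty$, which is used in the pigeonhole step.
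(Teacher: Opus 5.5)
Your proposal is correct and takes essentially the paper's route. The optimal order-blind rule is the conditional MAP given $M$; Lemma~\ref{lem:reorder} confines $\Pr(q_t=\cdot\mid M)$ to a single $[G,G]$-coset of size $f$, which gives $\max_g\Pr(q_t=g\mid M)\ge 1/f$ pointwise; and equality holds because a nonnegative gap with zero mean vanishes almost surely. Your extra bookkeeping is sound and only makes explicit what the paper's shorter proof leaves implicit: randomized predictors, the existence of the maximum through finite support when $G$ is infinite, and the fact that $t\ge 3$ is not needed.
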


This includes infinite groups with finite classes, such as $G_p$.
Can input counts favor particular members of the correct class and raise
accuracy above $1/f$? For finite groups under uniform full-group inputs, we show that
this advantage is bounded and vanishes with sequence length.

\begin{theorem}[Asymptotic order-blind accuracy]
\label{thm:orderblind-asymptotic}
For a finite group $G$ under uniform i.i.d. inputs from $G$, $C_t$ is
non-increasing in $t$ and converges to $1/f$, where $f=|[G,G]|$. For every $t\ge1$,
\[
0\le C_t-\frac1f\le\left(1-\frac1f\right)\sqrt{\frac{t+3}{2^{t+1}}}.
\]
\end{theorem}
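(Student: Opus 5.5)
The lower bound $C_t\ge 1/f$ is Proposition~\ref{prop:ceiling}. The work lies in monotonicity and the explicit upper bound. Throughout I would use one fact about uniform i.i.d.\ inputs: conditional on the multiset $M$, the word $(x_1,\dots,x_t)$ is a uniformly random arrangement of $M$. By Theorem~\ref{thm:orderblind}, $q_t$ then lies with certainty in a single $[G,G]$-coset determined by $M$.

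\textbf{Monotonicity.} I would argue by refinement of information.
\begin{itemize}
\item Write $M_{t+1}=M_t+\{x_{t+1}\}$, so $M_{t+1}$ is a function of the pair $(M_t,x_{t+1})$.
\item Conditioning on a finer $\sigma$-algebra can only increase $\mathbb E[\max_g\Pr(q_{t+1}=g\mid\cdot)]$. Hence $C_{t+1}\le \mathbb E\bigl[\max_g \Pr(q_{t+1}=g\mid M_t,x_{t+1})\bigr]$.
\item Since $x_{t+1}$ is independent of $(x_1,\dots,x_t)$, we have $\Pr(q_{t+1}=g\mid M_t,x_{t+1})=\Pr(q_t=gx_{t+1}^{-1}\mid M_t)$.
\item Right translation by $x_{t+1}$ does not change the maximum over $g$, so the right-hand side equals $C_t$.
\end{itemize}
Convergence to $1/f$ then follows once the explicit bound is established.

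\textbf{Upper bound: reduction to a collision probability.} Fix $M$ and let $p=\Pr(q_t=\cdot\mid M)$. This is a distribution on a coset $K g_M$ with $K=[G,G]$, $|K|=f$. Write $u$ for the uniform distribution on that coset.
\begin{itemize}
\item Bound the maximum by an $\ell_2$ distance: $\max_g p(g)-1/f\le \|p-u\|_\infty$, and $\|p-u\|_2^2=\sum_g p(g)^2-1/f$.
\item A refined Cauchy--Schwarz uses that $p-u$ has total sum zero on $f$ points, so $\|p-u\|_\infty\le\sqrt{(1-1/f)}\,\|p-u\|_2$. This is where the prefactor $(1-1/f)$ should come from, perhaps after one more use of the same trick.
\item Averaging over $M$ with Jensen gives $C_t-1/f\le(1-1/f)^{1/2}\bigl(\mathbb E_M\sum_g p(g)^2-1/f\bigr)^{1/2}$.
\item The quantity $\mathbb E_M\sum_g p(g)^2$ is the probability that two independent uniform rearrangements $w,w'$ of the same random multiset have equal products. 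So it suffices to show that this excess collision probability is at most $(1-1/f)(t+3)/2^{t+1}$.
\end{itemize}

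\textbf{Main obstacle: the collision estimate.} I would try a Fourier/coupling argument.
\begin{itemize}
\item Pair positions $(1,2),(3,4),\dots$ and, independently for each pair, swap or not with probability $1/2$. This is a measure-preserving symmetrization of the random arrangement.
\item Swapping a pair $(a,b)$ changes the product by a conjugate of the commutator $[a,b]$. For a nontrivial irreducible representation $\rho$ of $K$ (or the induced pieces on the coset), the averaged operator for one pair is $\tfrac12(I+\rho(\text{conjugated }[a,b]))$.
\item The operator norm of this averaged factor is $<1$ unless the pair commutes in $\rho$. Under uniform inputs, a pair commutes with probability bounded away from one, so each independent pair should contribute a factor of roughly $1/2$ in expectation to the squared Fourier mass.
\item Multiplying over about $t/2$ pairs, and summing with a union/counting over the "bad" event that many pairs commute (a binomial tail), should produce $2^{-(t+1)}$ times a polynomial factor, which I would aim to make exactly $t+3$.
\end{itemize}
The delicate part is twofold. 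First, I need to keep the constant sharp rather than just exponential. Second, the conjugating prefixes vary with the arrangement, so I need the per-pair contractions to combine multiplicatively. If the pairing argument does not give the clean constant, my fallback is an exponential bound via the spectral gap of the adjacent-transposition walk, restricted to the fiber, and then tracking constants.
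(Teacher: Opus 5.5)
Your lower bound, monotonicity argument, and $\ell_2$ reduction are all sound. The monotonicity step refines $\sigma(M_{t+1})$ to $\sigma(M_t,x_{t+1})$ and absorbs the right translation by $x_{t+1}$. It is a clean data-processing version of the paper's identity $D_M(g)=\sum_x\frac{m_x}{t}D_{M-\{x\}}(gx^{-1})$ combined with convexity of the maximum. Your reduction is exactly the paper's sandwich $C_t-1/f\le\sqrt{(1-1/f)\Phi_t}$ with $\Phi_t=\mathbb E_M\sum_g D_M(g)^2-1/f$. (The coset fact you use is Lemma~\ref{lem:reorder}, not Theorem~\ref{thm:orderblind}.)

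The gap is the estimate $\Phi_t\le(1-1/f)(t+3)/2^{t+1}$. It carries all the quantitative content, and the pairing plan cannot deliver it, for three reasons.

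First, $\tfrac12\bigl(I+\rho(c)\bigr)$ has operator norm $1$ whenever the unitary $\rho(c)$ has eigenvalue $1$, not only when $\rho(c)=I$. For example, a $3$-cycle is a commutator in $S_4$ and fixes a nonzero vector of the $3$-dimensional standard representation. So there is no per-pair contraction to multiply. The conjugating prefixes you worry about are harmless by comparison, since they enter as unitary factors.

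Second, even granting a factor $1/2$ per pair, $\lfloor t/2\rfloor$ disjoint pairs give order $2^{-t/2}$. That is only the square root of the $2^{-t}$ you need.

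Third, no lossy contraction argument can work, because the target holds with equality for $S_3$. There $\Phi_t=(t+3)/(3\cdot2^t)$ exactly, and likewise for any group whose nonlinear irreducibles are all $2$-dimensional, such as $D_4$ and $Q_8$. Your spectral-gap fallback would at best give convergence with unspecified constants, which the paper's separate elementary block argument already provides. It would not give the displayed inequality.

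The missing idea is to compute $\Phi_t$ exactly by averaging over the letters before the ordering, instead of contracting inside a fixed multiset.

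By Plancherel, one-dimensional irreducibles contribute nothing. Also $\widehat U_M(\pi)=0$ when $d_\pi>1$, because the $K$-fixed subspace of a nonlinear irreducible is $G$-invariant and hence zero. Thus $\Phi_t=\frac1{|G|}\sum_{d_\pi>1}d_\pi\,\mathbb E\,\chi_\pi(qq'^{-1})$. Here $q=x_1\cdots x_t$ and $q'=x_{\theta(1)}\cdots x_{\theta(t)}$, with i.i.d.\ uniform $x_i$ and an independent uniform $\theta\in S_t$; this is your pair of rearrangements rewritten.

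For fixed $\theta$, each $x_j$ enters the trace once through $\pi(x_j)$ and once through $\pi(x_j)^*$. Schur orthogonality then gives $\mathbb E_x\chi_\pi(qq'^{-1})=d_\pi^{L(\theta)-t}$, where $L(\theta)\le t+1$ is the number of vertices of the corresponding polygon gluing. The Hultman-number distribution of $L(\theta)$ gives $\mathbb E_\theta d^{L(\theta)-t}=d^{-t}\bigl[\binom{d+t+1}{t+2}-\binom{d}{t+2}\bigr]$.

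Write $d_\pi\cdot d_\pi^{L(\theta)-t}=d_\pi^2\,d_\pi^{L(\theta)-t-1}$. Since $d_\pi\ge2$ and $L(\theta)-t-1\le0$, you may replace $d_\pi^{L(\theta)-t-1}$ by $2^{L(\theta)-t-1}$, whose mean is $(t+3)/2^{t+1}$. Finally, $\frac1{|G|}\sum_{d_\pi>1}d_\pi^2=1-1/f$, because exactly $|G|/f$ irreducibles are one-dimensional. This gives $\Phi_t\le(1-1/f)(t+3)/2^{t+1}$ and supplies the second factor $\sqrt{1-1/f}$ in the prefactor. Combined with your reduction, it proves the theorem.
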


Thus $1/f$ is the asymptotic limit of optimal order-blind accuracy, not just
a uniform-guessing baseline. Under these assumptions, an order-blind predictor's exact accuracy also
approaches its abelianization accuracy divided by $f$, without requiring
uniform outputs. We prove the bound and give sharper group-specific bounds
in App.~\ref{app:orderblind-asymptotic}.

At finite lengths, some multisets still favor particular products.
\begin{corollary}[Finite-length excess]
\label{cor:finite-excess}
For finite non-abelian $G$ under independent uniform inputs from $G$,
$C_t\ge 1/f+|G|^{-t}(1-1/f)>1/f$ at every finite $t\ge1$.
\end{corollary}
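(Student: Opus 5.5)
The plan is to bound the expectation in \eqref{eq:orderblind-ceiling} term by term over multisets. We use a general lower bound of $1/f$ for every multiset. We then use a sharper value of $1$ on at least one multiset of probability at least $|G|^{-t}$.

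\textbf{Step 1 (every multiset).} Fix a multiset $M$ with $\Pr(M)>0$. Every word with multiset $M$ has the same image in $G^{\mathrm{ab}}=G/[G,G]$, because the image of $x_1\cdots x_t$ is the product of the images of the $x_i$ in a commutative group, and that product depends only on the counts. So the conditional law of $q_t$ given $M$ is supported on a single $[G,G]$-coset, which has $f$ elements. The largest of at most $f$ probabilities summing to one is at least $1/f$. Hence $\max_g\Pr(q_t=g\mid M)\ge 1/f$. This is the same observation behind $C_t\ge1/f$ in Proposition~\ref{prop:ceiling}, which we may simply invoke.

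\textbf{Step 2 (a deterministic multiset).} Pick any $x\in G$, for instance $x=e$, and let $M_x$ be the multiset consisting of $t$ copies of $x$. Exactly one word realizes $M_x$, namely $(x,\dots,x)$. Its probability under independent uniform inputs is $p:=|G|^{-t}$, and given $M_x$ the product is deterministically $x^t$. Thus $\max_g\Pr(q_t=g\mid M_x)=1$.

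\textbf{Step 3 (combine).} Split the expectation into the event $\{M=M_x\}$ and its complement, and apply Step 1 on the complement:
\[
C_t\;\ge\; p\cdot 1+(1-p)\cdot\frac1f\;=\;\frac1f+|G|^{-t}\Bigl(1-\frac1f\Bigr).
\]
For non-abelian $G$ we have $[G,G]\neq\{e\}$, so $f\ge2$ and $1-1/f>0$. This gives the strict inequality $C_t>1/f$ at every finite $t\ge1$. For $t=1$ the same argument applies trivially, since every multiset is a singleton and $C_1=1$.

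There is no real obstacle here. The only point requiring care is Step 1, specifically that the coset is determined by $M$. That is the easy direction of Theorem~\ref{thm:orderblind}, and it follows directly from commutativity of $G^{\mathrm{ab}}$. One could sharpen the bound by using all $|G|$ constant multisets, but the stated bound needs only one.
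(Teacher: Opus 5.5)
Your proposal is correct and follows the paper's own argument. The paper likewise bounds every multiset's contribution below by $1/f$ via Lemma~\ref{lem:reorder}, and uses the all-identity multiset (probability $|G|^{-t}$, deterministic product) to contribute $1$; your constant multiset $M_x$ with $x=e$ is exactly this.
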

The all-identity multiset gives this lower bound. Since the convergence bound
can be loose, we also compute the full
excess $C_t-1/f$ for ten non-abelian groups through order 24. At
$t=17$, the start of our standard evaluation window, these calculations give
excesses of approximately $1.5\times10^{-4}$ to
$3.1\times10^{-3}$. For larger groups in this calculation, we sample multisets
and enumerate their orderings exactly. These are numerical estimates,
rather than rigorous bounds. Monotonicity ensures that any rigorous upper
bound on $C_{17}-1/f$ also holds at all later positions.
Proofs, enumeration details, and the relation to general state summaries are
in Apps.~\ref{app:algebra} and~\ref{app:ceiling}.

Reordering tests connect our plateaus to this limit, without proving strict
order-blindness or explaining training's choice. Non-abelian quotients remain order-sensitive
(App.~\ref{app:quotient-stages}), and restricted alphabets require separate analysis
(App.~\ref{sec:recovery}). 

\section{Persistence and refinement of state resolution}
\label{sec:levers}

We now ask how training changes the state distinctions models recover.

\subsection{Persistent quotient solutions}

We find that longer training extends exact prefixes without necessarily
refining resolution beyond them. Our five-million-update $S_4$ run ends at
frontier 9. Over positions 17--100, it reaches 99.67\% abelianization accuracy,
but exact accuracy conditional on a correct quotient remains 8.28\%, near
$1/12$. The finer $S_3$ quotient stays below 34.1\% throughout training
(Fig.~\ref{fig:long-training-stages}a,b). Further long runs and GPT-2/Llama
controls retain class-size baselines beyond their exact prefixes
(App.~\ref{app:training-details}).

For state-space models, we also find the same separation in 
parameter-matched Mamba-2 models \citep{dao2024mamba2}. 
On each of $D_4$, $A_4$, and $S_4$, two of three
two-layer models accurately recover the abelianization, 
with conditional exact accuracy remaining near the class-size 
baseline. A separate four-layer parameter-matched $D_4$ 
configuration improves quotient accuracy without
improving within-class resolution (App.~\ref{app:ssm}).

We also test whether equal feature-acquisition scores yield equal output
resolutions. We match the scores of \citet{marchetti2026sgc} in an
$S_3\times C_3$ control. Despite the matched scores, models recover the $C_3$
factor accurately while retaining only parity information about the $S_3$
factor (App.~\ref{app:score-control}), \textbf{suggesting that the
score alone cannot determine the resolution learned.}

\begin{figure}[!htbp]
\centering
\includegraphics[width=\linewidth,trim={0 8bp 0 0},clip]{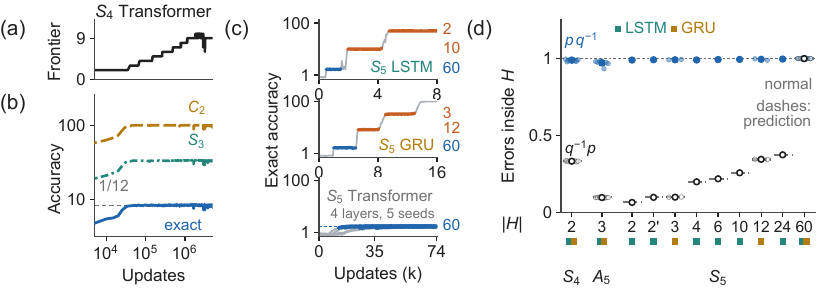}
\input{figures/fig4_long_training_caption}
\label{fig:long-training-stages}
\end{figure}

\subsection{From quotients to coset resolutions}
\label{sec:lstm-stages}
Our two-layer, parameter-matched LSTM and GRU models can also reach
resolutions that are not quotient groups (App.~\ref{app:nonsolvable-census}).
Three LSTM runs on $S_5$ exhibit eight sustained coset stages, two for the
normal subgroup $A_5$ and six for non-normal subgroups. One run reaches exact
tracking through subgroups of orders 24, 6, and 2, without an observed
intermediate $A_5$ stage. Three GRU runs instead pass through $A_5$ and then
non-normal subgroups of orders 12 and 3, and two reach exact tracking within the
budget (Fig.~\ref{fig:long-training-stages}c). A stage requires both a unique
fit of pooled prediction errors to a subgroup coset and accuracy near $1/|H|$, sustained
over successive evaluations. We give the thresholds and all trajectories in
Apps.~\ref{app:blind-census} and~\ref{app:nonsolvable-census}.

The distinction is clearest on $A_5$, which has no nontrivial proper normal
subgroup. All three LSTM and three GRU runs in our successful learning-rate
arms pass through cosets of a three-element subgroup $C_3$. These 20 classes
leave three possible states each, giving partial accuracy near $1/3$.
Thus an intermediate state resolution need not be a quotient at all.
All six runs reach near-exact tracking, although one LSTM later loses it
(App.~\ref{app:nonsolvable-census}).

We test the coset interpretation by asking where a model's incorrect
full-state predictions fall. Let $p$ be an incorrect prediction and $q$ the true state. If $p$ remains in the true coset $Hq$, then
$p=hq$ for some $h\in H$. Equivalently, $pq^{-1}=h\in H$. Reversing the order gives 
$q^{-1}p=q^{-1}hq$, which belongs to the conjugate subgroup
$q^{-1}Hq$. Since the true state $q$ varies across examples, these
reverse-order errors need not lie in one fixed copy of $H$. 
Fig.~\ref{fig:long-training-stages}d compares both fractions of 
errors inside $H$ with the algebraic prediction. We find
that incorrect predictions are indeed concentrated in the true coset, 
while the reverse-order fractions match our prediction 
(Fig.~\ref{fig:long-training-stages}d; 
full derivation in App.~\ref{app:nonsolvable-census}).
Their agreement hence supports right-coset structure beyond the accuracy plateau
alone.

In our census of standard Transformers, we recover no non-normal coset
partitions. All five four-layer $S_5$ runs end at the $A_5$ stage
(Fig.~\ref{fig:long-training-stages}c); broader surveys across groups and
depths are reported in App.~\ref{app:nonsolvable-census}.
This is an observed contrast, not an architectural prohibition. It's worth 
noting that with recirculation, which feeds representations from preceding positions
back into the model \citep{mozer2026recirculation}, 
we do observe a non-normal coset stage in one $S_4$ Transformer 
trajectory (App.~\ref{app:recirculation}).

\section{Internal representation of coset states}
\label{sec:coset-state}

We next ask whether the cosets in model predictions correspond to an
internal state that the network uses. We begin by transferring recurrent
activations between sequences and testing which coset the subsequent
predictions follow. Here, we study the $C_3$ stages of the six $A_5$ networks above,
at three times within each stage. For a subgroup 
\(H=\operatorname{Stab}(a,b)\cong C_3\), the three
states in a right coset \(Hq\) send objects \(a\) and \(b\) to the same ordered
pair of locations. We therefore label each coset by \((q(a),q(b))\). Other tests 
are in App.~\ref{app:coset-mechanism}.

\subsection{Locating and transferring the coset state}

We first replace a recurrent layer state with one from a different coset.
At position $t$, we insert a donor's recurrent layer state into a recipient
sequence and continue with the recipient's remaining inputs. We measure whether
subsequent predictions follow the donor's coset as it is updated by those
inputs, calling this fraction \emph{donor agreement}. \emph{Recipient agreement}
instead measures whether predictions follow the recipient's original coset
under the same subsequent inputs. This transfer motivates
a more specific question: which components of the layer state carry the coset?

To locate them, we group the recorded layer states by the true coset of
the running product and average within each group across sequences and
positions. This gives one mean vector for each of the 20 cosets. We subtract
the mean of these vectors and apply singular value decomposition (SVD)
to identify directions that distinguish the classes. First, we patch
only the recipient's components along the leading directions with those
of the donor, leaving the rest unchanged. These subspaces have three dimensions
in four models and seven in two LSTMs. Swapping them nearly reproduces full-layer
transfer (Fig.~\ref{fig:coset-state}c). Across the six models and three sampled times
per model, donor agreement is 90.0--99.9\% over the next 50 positions we measure,
within 2.5 percentage points of full-layer replacement. Swapping only the
remaining directions instead preserves the recipient's coset on
89.1--99.9\% of predictions. We thus locate a small subspace that transfers
the tracked class through subsequent inputs. The intervention formula,
all models, and controls are in App.~\ref{app:coset-interventions}. Separate
tests on $S_5$ also transfer coset predictions
(App.~\ref{app:causal-coset}).

\begin{figure}[!htbp]
\centering
\includegraphics[width=\linewidth,trim={0 9bp 0 0},clip]{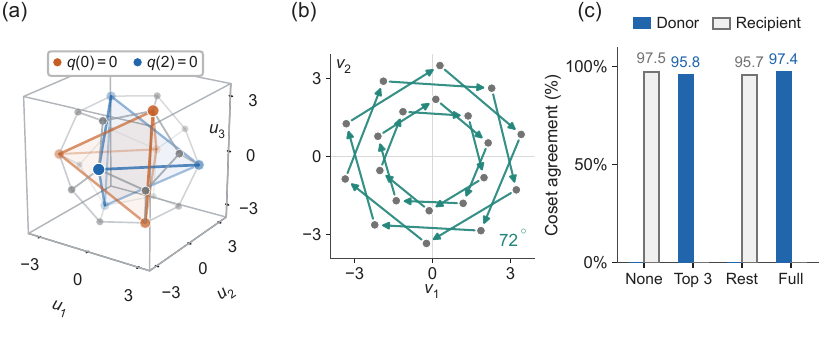}
\input{figures/fig_coset_state_caption}
\end{figure}

\subsection{The geometry and updates of the coset state}

This prompts us to ask what these causally effective directions represent. 
For the four models with a three-dimensional coset subspace, we plot each
centered coset mean along the three leading SVD directions. 
When plotted out in 3D, we find that 
\textbf{the 20 points lie almost exactly at the vertices of a 
regular dodecahedron}, a solid with 12 congruent
regular pentagonal faces and 20 vertices 
(Fig.~\ref{fig:coset-state}a). Each point represents a coset of
three possible group elements. We confirm the arrangement 
quantitatively by comparing normalized pairwise inner 
products with those of a regular
dodecahedron (App.~\ref{app:coset-geometry}). 
We also find internal geometries reflecting learned abelian quotients
in Transformers (App.~\ref{app:transformer-geometry}). 
This suggests that
the models learn an intrinsic geometric representation of the 
coset state in parallel to merely computing the group operations.

We then test whether updating inputs respect this geometry. 
Each input $x$ sends a coset $Hq$ to $Hqx$ as defined in 
\S\ref{sec:preliminaries}, specifying a permutation of the 20 means.
In the 3D cases, a fitted rotation $R_x$ closely matches each such
permutation of class means. Using row-vector coordinates,
$v_{Hq}R_x\approx v_{Hqx}$, the fitted maps satisfy 
\(R_xR_y\approx R_{xy}\) (App.~\ref{app:coset-geometry}).
Fig.~\ref{fig:coset-state}b shows the $72^\circ$ rotation for one 5-cycle.

\textbf{The measured rotations explain why the geometry is dodecahedral.}
Although these cosets do not form a quotient group, each input still permutes
them through $Hq\mapsto Hqx$. In an exact rotation-based code, 
their representation vectors take the form $v_{Hq}=wR_q$, 
where $wR_h=w$ for every $h\in H$, because inputs from
$H$ leave the reference vector unchanged. Three is the smallest dimension
of a nontrivial real linear action of $A_5$. In its three-dimensional rotation
representations, $C_3$ fixes a single axis, and rotating a nonzero vector on
this axis by $A_5$ produces the 20 vertices of a regular dodecahedron.
Our measurements approximately realize this minimal-dimensional code
(App.~\ref{app:coset-geometry}).

The two LSTMs with seven-dimensional coset subspaces combine three- and four-dimensional
components. Neither component alone transfers the coset reliably, but replacing
both does (App.~\ref{app:coset-interventions}). The fitted group action
describes class mean hidden vectors, while individual samples' activations 
vary within each class and do not undergo rigid rotations. 
Together, the interventions and geometry identify an internal 
representation of a $G$-set, a set permuted by group inputs, rather than 
a quotient group. Its class means approximately realize the sequential 
update $Hq\mapsto Hqx$ from \S\ref{sec:preliminaries} as a linear action.

\section{Related work}
\label{sec:related}

\paragraph{Shortcuts and learned information.} \citet{liu2023shortcut}
establish parallel shortcuts for state computation; \citet{li2025state}
identify associative and parity-assisted scans. We characterize the state
distinctions learned, extending parity accounts to non-parity and non-abelian
quotients and relating class size to partial accuracy. Expressivity bounds
\citep{barrington1986,hahn2020limitations,merrill2025depth} and
length-generalization tests \citep{deletang2023chomsky} address complementary limits.

\paragraph{Cosets and internal computation.} \citet{stander2024grokking}
identify coset circuits for group multiplication. \citet{wu2025unified}
connect them to irreducible representations and certify performance; their
stabilizer account includes non-normal subgroups. We connect these codes'
resolution to sequential tracking accuracy and training stages, and test
whether exchanging their components transfers coset predictions through a shared suffix.

\paragraph{Features and state resolution.} Fourier analyses
\citep{nanda2023progress,chughtai2023toy} and low-rank tensors
\citep{shutman2025words} explain arithmetic and group learning.
\citet{marchetti2026sgc} characterize encoding-dependent representation
acquisition in two-layer networks under alternating gradient flow
\citep{kunin2025agf}. A full representation matrix distinguishes elements
modulo its kernel; a vector orbit distinguishes cosets of its stabilizer.
We measure which resolution is expressed in predictions and how much
uncertainty remains within its classes.
Our matched-score control (App.~\ref{app:score-control}) examines transfer
to Transformers outside their theoretical setting.

\paragraph{Learning stages and recurrence.} Learning can precede accuracy gains
\citep{power2022grokking,barak2022hidden,nanda2023progress}.
\citet{forner2026cot} analyze staged retrieval and updates in a solvable model
of chain-of-thought tracking; \citet{sahasrabudhe2026marginals} separates plateau
uncertainty from duration through a marginal-to-conditional transition.
We identify acquired state distinctions and gradient cancellation that may
impede refinement, complementing feature-competition accounts \citep{pezeshki2021starvation}.
Our comparisons draw on recurrent expressivity and error bounds
\citep{merrill2024illusion,chung2026errorcontrol}, composition tests
\citep{lee2026falsifier}, and added recurrence \citep{mozer2026recirculation}.

\section{Conclusion}
\label{sec:discussion}
\label{sec:conclusion}

We characterize learned state tracking through the subgroup cosets that
models resolve. Class size and output uncertainty explain structured partial
accuracy, while normal and non-normal coset stages connect persistent
solutions with further learning. We go beyond output descriptions
to identify subspaces that causally carry the tracked coset. Together with
the exact order-blind boundary, these results give an algebraic account of
what models preserve, what remains unresolved, and how partial states
are represented inside models and support continued computation.

Training recipes can determine whether models reach exact tracking or remain
at a partial resolution. We hypothesize that input distributions shape both
the internal representations models learn and the gradients that refine them.
How these effects govern the selection and refinement of coset states remains open.

\clearpage
\subsection*{Reproducibility statement}
\label{sec:statements}
The appendices specify model configurations, data distributions, evaluation
windows, seed coverage, and the distinction between confirmatory and exploratory
analyses. Run-level manifests identify the Table~\ref{tab:law} cohorts and
retain all initial-budget results. The order-blind calculation includes a brute-force check at short
lengths. Mathematical assumptions and proofs are given in
App.~\ref{app:algebra}; measurement and intervention protocols are described
in Apps.~\ref{app:figure-methods}, \ref{app:coset-mechanism}, \ref{app:heisenberg},
and~\ref{app:gradient-details}.

\subsection*{Ethics statement}
The experiments use synthetic group-composition sequences and involve no
human subjects or personal data. The conclusions concern controlled algorithmic
tasks and do not establish performance or safety properties of deployed
language models.

\subsection*{AI use statement}

We set the research questions, chose which findings to pursue, fixed the claim
this paper is organized around, proposed and sketched out key proofs, 
proposed experiments that resulted in novel discoveries, 
and decided what entered the manuscript and what stayed out. 
Within that direction we used generative AI tools (Claude
Code and OpenAI Codex, across several model versions over the project period)
for the following tasks that require disclosure. They helped generate the synthetic
datasets, in the sense that they wrote the group-composition sampler scripts;
they helped check and complete our mathematical proofs, 
supplied ingredients for some proofs and drafted them, 
refined our hypotheses, designed experiments and controls and gave feedback on the
methodology, implemented the training, evaluation and analysis code,
aggregated and reformatted the stored run outputs into the reported tables, 
and translated between languages, since we discussed
the research largely in Chinese and drafted the manuscript in English. 
Every other required category involved AI assistance at some stage;
generating cartoon images for our illustrations is one example. 
The tools worked from our instructions and we kept or discarded what came
back.

We used the same tools for tasks with recommended disclosure: producing the
figures, suggesting experimental parameters, writing and editing code,
searching for and summarizing related work, identifying gaps, brainstorming,
proposing the structure of the paper, formatting references, proposing
candidate titles and keywords, and drafting and revising every section, this
statement included.

We have reviewed all AI-assisted work. Results that this paper reports as
confirmatory were registered before their analysis, the appendix labels the
exploratory ones, and the numbers in the tables and figures were reconciled
with the stored run logs rather than with a model's summary of them. Both authors
checked the mathematical statements and the proofs, read the data-generation
and analysis code, and re-ran spot checks of the reported quantities. 
Checks preceding our own review were also AI-assisted, including cross-model adversarial 
audits of the reported numbers and simulated reviews of the draft and codes, 
and we do not count that as independent human
verification. We take responsibility for the final content of this work,
including the text, claims, proofs, code and figures produced with the aid of
generative AI.

\bibliographystyle{plainnat}
\begingroup
\raggedright
\bibliography{refs}

\begin{thebibliography}{33}
\providecommand{\natexlab}[1]{#1}
\providecommand{\url}[1]{\texttt{#1}}
\expandafter\ifx\csname urlstyle\endcsname\relax
  \providecommand{\doi}[1]{doi: #1}\else
  \providecommand{\doi}{doi: \begingroup \urlstyle{rm}\Url}\fi

\bibitem[Alexeev and Zograf(2011)]{alexeev2011hultman}
Nikita Alexeev and Peter Zograf.
\newblock Hultman numbers, polygon gluings and matrix integrals.
\newblock \emph{arXiv preprint arXiv:1111.3061}, 2011.
\newblock URL \url{https://arxiv.org/abs/1111.3061}.

\bibitem[Barak et~al.(2022)Barak, Edelman, Goel, Kakade, Malach, and
  Zhang]{barak2022hidden}
Boaz Barak, Benjamin Edelman, Surbhi Goel, Sham Kakade, Eran Malach, and Cyril
  Zhang.
\newblock Hidden progress in deep learning: {SGD} learns parities near the
  computational limit.
\newblock In \emph{Advances in Neural Information Processing Systems
  (NeurIPS)}, volume~35, pages 21750--21764, 2022.
\newblock URL
  \url{https://papers.nips.cc/paper_files/paper/2022/hash/884baf65392170763b27c914087bde01-Abstract-Conference.html}.

\bibitem[Barrington(1989)]{barrington1986}
David~A. Barrington.
\newblock Bounded-width polynomial-size branching programs recognize exactly
  those languages in $\mathrm{NC}^1$.
\newblock \emph{Journal of Computer and System Sciences}, 38\penalty0
  (1):\penalty0 150--164, 1989.
\newblock Conference version: STOC 1986.

\bibitem[Black et~al.(2022)Black, Biderman, Hallahan, Anthony, Gao, Golding,
  He, Leahy, McDonell, Phang, Pieler, Prashanth, Purohit, Reynolds, Tow, Wang,
  and Weinbach]{black2022neox}
Sidney Black, Stella Biderman, Eric Hallahan, Quentin Anthony, Leo Gao,
  Laurence Golding, Horace He, Connor Leahy, Kyle McDonell, Jason Phang,
  Michael Pieler, Usvsn~Sai Prashanth, Shivanshu Purohit, Laria Reynolds,
  Jonathan Tow, Ben Wang, and Samuel Weinbach.
\newblock {GPT-NeoX-20B}: An open-source autoregressive language model.
\newblock In \emph{Proceedings of BigScience Episode \#5 -- Workshop on
  Challenges \& Perspectives in Creating Large Language Models}, 2022.
\newblock arXiv:2204.06745.

\bibitem[Chughtai et~al.(2023)Chughtai, Chan, and Nanda]{chughtai2023toy}
Bilal Chughtai, Lawrence Chan, and Neel Nanda.
\newblock A toy model of universality: Reverse engineering how networks learn
  group operations.
\newblock In \emph{International Conference on Machine Learning (ICML)}, 2023.
\newblock arXiv:2302.03025.

\bibitem[Chung et~al.(2026)Chung, Choi, and Kim]{chung2026errorcontrol}
Jiwan Chung, Heechan Choi, and Seon~Joo Kim.
\newblock Rethinking state tracking in recurrent models through error control
  dynamics.
\newblock \emph{arXiv preprint arXiv:2605.07755}, 2026.

\bibitem[Dao and Gu(2024)]{dao2024mamba2}
Tri Dao and Albert Gu.
\newblock Transformers are {SSM}s: Generalized models and efficient algorithms
  through structured state space duality.
\newblock In \emph{Proceedings of the 41st International Conference on Machine
  Learning}, volume 235 of \emph{Proceedings of Machine Learning Research},
  pages 10041--10071. PMLR, 2024.
\newblock URL \url{https://proceedings.mlr.press/v235/dao24a.html}.

\bibitem[Del{\'e}tang et~al.(2023)Del{\'e}tang, Ruoss, Grau-Moya, Genewein,
  Wenliang, Catt, Cundy, Hutter, Legg, Veness, and Ortega]{deletang2023chomsky}
Gr{\'e}goire Del{\'e}tang, Anian Ruoss, Jordi Grau-Moya, Tim Genewein, Li~Kevin
  Wenliang, Elliot Catt, Chris Cundy, Marcus Hutter, Shane Legg, Joel Veness,
  and Pedro~A. Ortega.
\newblock Neural networks and the chomsky hierarchy.
\newblock In \emph{International Conference on Learning Representations
  (ICLR)}, 2023.
\newblock URL \url{https://arxiv.org/abs/2207.02098}.

\bibitem[Forner et~al.(2026)Forner, K{\"u}hn, Thamm, and
  Rosenow]{forner2026cot}
Niklas Forner, Marcel K{\"u}hn, Matthias Thamm, and Bernd Rosenow.
\newblock Learning dynamics of chain-of-thought state tracking in a solvable
  transformer model.
\newblock \emph{arXiv preprint arXiv:2606.18164}, 2026.

\bibitem[Hahn(2020)]{hahn2020limitations}
Michael Hahn.
\newblock Theoretical limitations of self-attention in neural sequence models.
\newblock \emph{Transactions of the Association for Computational Linguistics},
  8:\penalty0 156--171, 2020.
\newblock \doi{10.1162/tacl_a_00306}.
\newblock URL \url{https://aclanthology.org/2020.tacl-1.11/}.

\bibitem[Kim and Schuster(2023)]{kim2023entity}
Najoung Kim and Sebastian Schuster.
\newblock Entity tracking in language models.
\newblock In \emph{Proceedings of the 61st Annual Meeting of the Association
  for Computational Linguistics (Volume 1: Long Papers)}, pages 3835--3855,
  2023.
\newblock \doi{10.18653/v1/2023.acl-long.213}.
\newblock URL \url{https://aclanthology.org/2023.acl-long.213/}.

\bibitem[Kunin et~al.(2025)Kunin, Marchetti, Chen, Karkada, Simon, DeWeese,
  Ganguli, and Miolane]{kunin2025agf}
Daniel Kunin, Giovanni~Luca Marchetti, Feng Chen, Dhruva Karkada, James~B.
  Simon, Michael~R. DeWeese, Surya Ganguli, and Nina Miolane.
\newblock Alternating gradient flows: A theory of feature learning in two-layer
  neural networks.
\newblock \emph{arXiv preprint arXiv:2506.06489}, 2025.

\bibitem[Lee(2026)]{lee2026falsifier}
Jeonghoon Lee.
\newblock A held-out transition-pair falsifier for long-horizon non-abelian
  state tracking.
\newblock \emph{arXiv preprint arXiv:2606.07254}, 2026.

\bibitem[Li et~al.(2021)Li, Nye, and Andreas]{li2021implicit}
Belinda~Z. Li, Maxwell Nye, and Jacob Andreas.
\newblock Implicit representations of meaning in neural language models.
\newblock In \emph{Proceedings of the 59th Annual Meeting of the Association
  for Computational Linguistics and the 11th International Joint Conference on
  Natural Language Processing (Volume 1: Long Papers)}, pages 1813--1827, 2021.
\newblock \doi{10.18653/v1/2021.acl-long.143}.
\newblock URL \url{https://aclanthology.org/2021.acl-long.143/}.

\bibitem[Li et~al.(2025)Li, Guo, and Andreas]{li2025state}
Belinda~Z. Li, Zifan~Carl Guo, and Jacob Andreas.
\newblock (how) do language models track state?
\newblock In \emph{International Conference on Machine Learning (ICML)}, 2025.
\newblock arXiv:2503.02854.

\bibitem[Li et~al.(2023)Li, Hopkins, Bau, Vi{\'e}gas, Pfister, and
  Wattenberg]{li2023world}
Kenneth Li, Aspen~K. Hopkins, David Bau, Fernanda Vi{\'e}gas, Hanspeter
  Pfister, and Martin Wattenberg.
\newblock Emergent world representations: Exploring a sequence model trained on
  a synthetic task.
\newblock In \emph{International Conference on Learning Representations
  (ICLR)}, 2023.
\newblock URL \url{https://openreview.net/forum?id=DeG07_TcZvT}.

\bibitem[Liu et~al.(2023)Liu, Ash, Goel, Krishnamurthy, and
  Zhang]{liu2023shortcut}
Bingbin Liu, Jordan~T. Ash, Surbhi Goel, Akshay Krishnamurthy, and Cyril Zhang.
\newblock Transformers learn shortcuts to automata.
\newblock In \emph{International Conference on Learning Representations
  (ICLR)}, 2023.
\newblock arXiv:2210.10749.

\bibitem[Loshchilov and Hutter(2019)]{loshchilov2019adamw}
Ilya Loshchilov and Frank Hutter.
\newblock Decoupled weight decay regularization.
\newblock In \emph{International Conference on Learning Representations
  (ICLR)}, 2019.
\newblock arXiv:1711.05101.

\bibitem[Marchetti et~al.(2026)Marchetti, Kunin, Myers, Acosta, and
  Miolane]{marchetti2026sgc}
Giovanni~Luca Marchetti, Daniel Kunin, Adele Myers, Francisco Acosta, and Nina
  Miolane.
\newblock Sequential group composition: A window into the mechanics of deep
  learning.
\newblock \emph{arXiv preprint arXiv:2602.03655}, 2026.

\bibitem[Merrill and Sabharwal(2025)]{merrill2025depth}
William Merrill and Ashish Sabharwal.
\newblock A little depth goes a long way: The expressive power of log-depth
  transformers.
\newblock \emph{arXiv preprint arXiv:2503.03961}, 2025.

\bibitem[Merrill et~al.(2024)Merrill, Petty, and
  Sabharwal]{merrill2024illusion}
William Merrill, Jackson Petty, and Ashish Sabharwal.
\newblock The illusion of state in state-space models.
\newblock In \emph{International Conference on Machine Learning (ICML)}, 2024.
\newblock arXiv:2404.08819.

\bibitem[Mozer et~al.(2026)Mozer, Siddiqui, Sawyer, Sanyal, and
  Liu]{mozer2026recirculation}
Michael~C. Mozer, Shoaib~Ahmed Siddiqui, Danny Sawyer, Sunny Sanyal, and
  Rosanne Liu.
\newblock Recirculation.
\newblock \emph{arXiv preprint arXiv:2608.17981}, 2026.

\bibitem[Nanda et~al.(2023)Nanda, Chan, Lieberum, Smith, and
  Steinhardt]{nanda2023progress}
Neel Nanda, Lawrence Chan, Tom Lieberum, Jess Smith, and Jacob Steinhardt.
\newblock Progress measures for grokking via mechanistic interpretability.
\newblock In \emph{International Conference on Learning Representations
  (ICLR)}, 2023.
\newblock arXiv:2301.05217.

\bibitem[Nye et~al.(2021)Nye, Andreassen, Gur-Ari, Michalewski, Austin, Bieber,
  Dohan, Lewkowycz, Bosma, Luan, Sutton, and Odena]{nye2021scratchpads}
Maxwell Nye, Anders~Johan Andreassen, Guy Gur-Ari, Henryk Michalewski, Jacob
  Austin, David Bieber, David Dohan, Aitor Lewkowycz, Maarten Bosma, David
  Luan, Charles Sutton, and Augustus Odena.
\newblock Show your work: Scratchpads for intermediate computation with
  language models.
\newblock \emph{arXiv preprint arXiv:2112.00114}, 2021.
\newblock URL \url{https://arxiv.org/abs/2112.00114}.

\bibitem[Pezeshki et~al.(2021)Pezeshki, Kaba, Bengio, Courville, Precup, and
  Lajoie]{pezeshki2021starvation}
Mohammad Pezeshki, Sekou-Oumar Kaba, Yoshua Bengio, Aaron Courville, Doina
  Precup, and Guillaume Lajoie.
\newblock Gradient starvation: A learning proclivity in neural networks.
\newblock In \emph{Advances in Neural Information Processing Systems
  (NeurIPS)}, 2021.

\bibitem[Power et~al.(2022)Power, Burda, Edwards, Babuschkin, and
  Misra]{power2022grokking}
Alethea Power, Yuri Burda, Harri Edwards, Igor Babuschkin, and Vedant Misra.
\newblock Grokking: Generalization beyond overfitting on small algorithmic
  datasets.
\newblock \emph{arXiv preprint arXiv:2201.02177}, 2022.
\newblock URL \url{https://arxiv.org/abs/2201.02177}.

\bibitem[Radford et~al.(2019)Radford, Wu, Child, Luan, Amodei, and
  Sutskever]{radford2019gpt2}
Alec Radford, Jeffrey Wu, Rewon Child, David Luan, Dario Amodei, and Ilya
  Sutskever.
\newblock Language models are unsupervised multitask learners.
\newblock Technical report, OpenAI, 2019.

\bibitem[Sahasrabudhe(2026)]{sahasrabudhe2026marginals}
Mihir Sahasrabudhe.
\newblock Marginals before conditionals.
\newblock \emph{arXiv preprint arXiv:2603.10074}, 2026.

\bibitem[Shutman et~al.(2025)Shutman, Louidor, and Tessler]{shutman2025words}
Maor Shutman, Oren Louidor, and Ran~J. Tessler.
\newblock Learning words in groups: fusion algebras, tensor ranks and grokking.
\newblock \emph{arXiv preprint arXiv:2509.06931}, 2025.

\bibitem[Stander et~al.(2024)Stander, Yu, Fan, and
  Biderman]{stander2024grokking}
Dashiell Stander, Qinan Yu, Honglu Fan, and Stella Biderman.
\newblock Grokking group multiplication with cosets.
\newblock In \emph{International Conference on Machine Learning (ICML)}, 2024.
\newblock arXiv:2312.06581.

\bibitem[Su et~al.(2024)Su, Lu, Pan, Murtadha, Wen, and Liu]{su2024rope}
Jianlin Su, Yu~Lu, Shengfeng Pan, Ahmed Murtadha, Bo~Wen, and Yunfeng Liu.
\newblock {RoFormer}: Enhanced transformer with rotary position embedding.
\newblock \emph{Neurocomputing}, 568, 2024.
\newblock arXiv:2104.09864.

\bibitem[Vafa et~al.(2024)Vafa, Chen, Rambachan, Kleinberg, and
  Mullainathan]{vafa2024world}
Keyon Vafa, Justin~Y. Chen, Ashesh Rambachan, Jon Kleinberg, and Sendhil
  Mullainathan.
\newblock Evaluating the world model implicit in a generative model.
\newblock In \emph{Advances in Neural Information Processing Systems
  (NeurIPS)}, volume~37, 2024.
\newblock URL
  \url{https://proceedings.neurips.cc/paper_files/paper/2024/hash/2f6a6317bada76b26a4f61bb70a7db59-Abstract-Conference.html}.

\bibitem[Wu et~al.(2025)Wu, Jaburi, Drori, and Gross]{wu2025unified}
Wilson Wu, Louis Jaburi, Jacob Drori, and Jason Gross.
\newblock Towards a unified and verified understanding of group-operation
  networks.
\newblock In \emph{International Conference on Learning Representations
  (ICLR)}, 2025.
\newblock arXiv:2410.07476.

\end{thebibliography}
\endgroup

\clearpage
\appendix
\raggedbottom
\makeatletter
\setlength{\@fptop}{0pt}
\setlength{\@fpsep}{20pt}
\makeatother
\setcounter{figure}{0}
\renewcommand{\thefigure}{S\arabic{figure}}
\renewcommand{\theHfigure}{supp.\arabic{figure}}

\section{Algebraic statements and proofs}
\label{app:algebra}

\paragraph{Composable summaries.} Suppose a summary $s:G\to S$ admits an
operation $M$ such that $s(gh)=M(s(g),s(h))$ for all $g,h\in G$. Restrict $S$
to the image of $s$. Associativity, an identity, and inverses are inherited
from $G$, so this image is a group and $s$ is a homomorphism. Its kernel
$N=\{g:s(g)=s(e)\}$ is normal, and two products have the same summary exactly
when they lie in the same $N$-coset. Thus a summary that composes arbitrary
segments has the resolution of a group quotient. This statement does not
assume that a neural model has learned such an exact summary.

\paragraph{Sequential summaries.} Suppose instead that
$s(gx)=U(s(g),x)$ for every $g,x\in G$. Define $g\sim g'$ when
$s(g)=s(g')$. The update rule makes this equivalence relation invariant under
right multiplication. Its identity class $H=\{h:s(h)=s(e)\}$ is a subgroup.
Indeed, $a\sim e$ and $b\sim e$ imply $ab\sim b\sim e$; multiplying
$a\sim e$ by $a^{-1}$ gives $e\sim a^{-1}$. Moreover,
$g\sim q$ if and only if $gq^{-1}\sim e$, so the class of $q$ is exactly
$Hq$. Conversely, every right-coset partition defines a sequential update
$U(Hq,x)=H(qx)$. Normality is unnecessary. If updates are initially specified
only on a finite alphabet that generates $G$ as a monoid, composition of
those updates gives the same conclusion.

These statements classify exact, state-dependent summaries, not arbitrary
history-dependent hidden states. The distinction motivates the measurements
in App.~\ref{app:blind-census}; observing a partition does not establish that
a neural network implements either exact summary rule.

\begin{lemma}[Reordering preserves the abelianized product]
\label{lem:reorder}
If two sequences have the same multiset of entries, their products have the
same image in $G/[G,G]$.
\end{lemma}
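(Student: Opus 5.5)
The plan is to pass to the quotient map $\pi:G\to G/[G,G]$ and use that the image is abelian. Since $\pi$ is a homomorphism, the image of a product is
\[
\pi(x_1\cdots x_t)=\pi(x_1)\cdots\pi(x_t).
\]
Commutativity of $G/[G,G]$ lets us rearrange the factors on the right freely. Concretely, I would argue as follows. Suppose $y_1\cdots y_t$ is a rearrangement of $x_1\cdots x_t$, i.e.\ $y_i=x_{\sigma(i)}$ for some permutation $\sigma$ of $\{1,\dots,t\}$; this is exactly what it means for the two sequences to share a multiset. Then
\[
\pi(y_1\cdots y_t)=\prod_i\pi(x_{\sigma(i)})=\prod_i\pi(x_i)=\pi(x_1\cdots x_t),
\]
where the middle equality is the reordering of a finite product in an abelian group.

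Two small facts need to be cited or checked. The first is that $[G,G]$ is normal, so $G/[G,G]$ is a group and $\pi$ is a homomorphism. Normality holds because conjugating a commutator gives a commutator, $g[x,y]g^{-1}=[gxg^{-1},gyg^{-1}]$. The second is that the quotient is abelian, which the preliminaries already state. It follows from $\pi(x)\pi(y)\pi(x)^{-1}\pi(y)^{-1}=\pi([x,y])=\pi(e)$.

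If one prefers to avoid the general rearrangement of finite products, I would instead write $\sigma$ as a product of adjacent transpositions and induct on their number. A single adjacent swap changes $u\,x_i x_{i+1}\,v$ into $u\,x_{i+1}x_i\,v$. Writing $x_{i+1}x_i=x_i x_{i+1}[x_{i+1}^{-1},x_i^{-1}]$, the two products differ by an element inserted in the middle of the word. Using normality, that element can be moved to the end as a conjugate commutator lying in $[G,G]$, so the two products lie in the same $[G,G]$-coset.

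There is no real obstacle. The only care needed is that left and right cosets of $[G,G]$ coincide, by normality, so the coset statement does not depend on which side the commutator is moved to.
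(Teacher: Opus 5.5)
Your proposal is correct and matches the paper's proof, which also uses that the projection $G\to G/[G,G]$ is a homomorphism onto an abelian group, so the product of the images does not depend on the order of the factors. Your adjacent-transposition variant, using $x_{i+1}x_i=x_ix_{i+1}[x_{i+1}^{-1},x_i^{-1}]$ and normality, is a valid unpacking of the same fact.
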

\begin{proof}
The projection to $G/[G,G]$ is a homomorphism from $G$ to an abelian group.
Products of the images therefore do not depend on their order.
\end{proof}

\paragraph{Proof of Theorem~\ref{thm:orderblind}.}
Any function constant on $[G,G]$-cosets is recoverable from the multiset,
since the projected elements commute. Conversely, choosing
$(x,y,z,e,\ldots,e)$ and swapping the second and third entries gives
$\phi(xyz)=\phi(xzy)$. Since $xyz=(xzy)[y^{-1},z^{-1}]$, varying $x$ gives
$\phi(g[y^{-1},z^{-1}])=\phi(g)$ for every $g,y,z$. These commutators
generate $[G,G]$, so $\phi$ is constant on its cosets.

\paragraph{The length-two exception.} For $t=2$, order invariance requires
$\phi(xy)=\phi(yx)$. The two products are conjugate; conversely, this identity
for all $x,y$ implies conjugation invariance. Thus class functions, including
functions finer than the abelianization, are recoverable at length two.
The free third factor in Theorem~\ref{thm:orderblind} strengthens conjugation
invariance to invariance under multiplication by every commutator.

\paragraph{Proof of Proposition~\ref{prop:ceiling}.}
Fix a multiset $M$ with positive probability. Since an order-blind model
has access only to $M$, its optimal prediction is a most probable value of
$q_t$ given $M$. Its conditional accuracy is therefore
\[
\max_{g\in G}\Pr(q_t=g\mid M),
\]
and averaging over $M$ gives $C_t$. By Lemma~\ref{lem:reorder}, all products
obtained by reordering $M$ have the same image in $G/[G,G]$. The conditional
product distribution is thus supported on one $[G,G]$-coset with $f$ elements.
Its largest probability is at least $1/f$, with equality exactly when it is
uniform on that coset. Taking expectations gives $C_t\ge1/f$; equality holds
if and only if this uniformity condition holds for almost every $M$.

For finite non-abelian $G$ with independent uniform full-group inputs, the
all-identity multiset $M_0=\{e,\ldots,e\}$ occurs with probability $|G|^{-t}$.
Conditional on $M_0$, the product is deterministically $e$, so the optimal
accuracy is $1$. Every other multiset contributes at least $1/f$, giving
\[
C_t\ge |G|^{-t}+\frac{1-|G|^{-t}}{f}>\frac{1}{f}.
\]
This proves Corollary~\ref{cor:finite-excess}.

\paragraph{What a conditional ratio can establish.} Consider a randomized
prediction that is uniform on $Hq_t$. Its exact accuracy is $1/|H|$, and the
probability of the correct $[G,G]$-coset is $|H\cap[G,G]|/|H|$. Their ratio is
$1/|H\cap[G,G]|$. It equals $1/f$ only if $[G,G]\subseteq H$; every subgroup
containing $[G,G]$ is normal. Otherwise the ratio is at least $2/f$. These facts
apply to the specified uniform-coset model. For learned outputs, we also
check class accuracy and the per-example distribution; a ratio alone neither
establishes uniformity nor excludes arbitrary hidden encodings.

\subsection{Asymptotic order-blind accuracy}
\label{app:orderblind-asymptotic}

We prove Theorem~\ref{thm:orderblind-asymptotic} by showing that the product
distribution conditional on the input multiset approaches the uniform
distribution on its abelianization class, on average over multisets.
Throughout, $G$ is fixed and finite, $K=[G,G]$, $f=|K|$, and the inputs
$X_1,\ldots,X_t$ are independent and uniform on $G$. Write
$q_t=X_1\cdots X_t$ and let $M_t$ be their multiset. For each possible $M$, set
\[
D_M(g)=\Pr(q_t=g\mid M_t=M).
\]
By Lemma~\ref{lem:reorder}, $D_M$ is supported on a unique $K$-coset
$F(M)$, although its support need not fill that coset. Let $U_M$ be uniform
on $F(M)$ and therefore we have $U_M(g)  = 1/f$ for $g\in F(M)$ and $0$ otherwise. We define
\[
\operatorname{TV}(\mu,\nu)=\frac12\sum_{g\in G}|\mu(g)-\nu(g)|,
\qquad
\Delta_t=\mathbb E_M\operatorname{TV}(D_M,U_M).
\]
\subsubsection{An explicit finite-length bound}
\label{app:orderblind-l2}
We measure the remaining within-class bias by its mean squared distance
from uniform,
\begin{equation}
\Phi_t=\mathbb E_M\sum_{g\in G}\bigl(D_M(g)-U_M(g)\bigr)^2 =\mathbb E_M\sum_{g\in F(M)}\bigl(D_M(g)-U_M(g)\bigr)^2.
\label{eq:orderblind-l2-definition}
\end{equation}
We first relate this quantity to prediction accuracy, then compute it
exactly using the dimensions of the irreducible representations of $G$.
This yields both group-specific bounds and the group-independent bound
in Theorem~\ref{thm:orderblind-asymptotic}.

\paragraph{From squared bias to accuracy.}
For $f>1$, the following bounds hold:
\begin{equation}
\Phi_t\le C_t-\frac1f\le\sqrt{(1-1/f)\Phi_t},
\qquad
\Delta_t\le\frac{\sqrt f}{2}\sqrt{\Phi_t}.
\label{eq:orderblind-l2-sandwich}
\end{equation}
To prove the lower bound, write $p_g=D_M(g)$ on the $f$ elements of $F(M)$.
Then $\sum_g(p_g-1/f)^2=\sum_gp_g^2-1/f\le\max_gp_g-1/f$.
For the upper bound, let $v_g = D_M(g) - 1/f$. We use $v = \langle v_g \mid g\in F(M)\rangle$ and $u = \langle 1/f, \dots, 1/f\rangle$ to represent vectors in $\mathbb{R}^f$. We choose element $j$ which maximizes $v_j$. Since $\sum_gv_g=0$, we have
\[
\max_gp_g-1/f=\langle v,e_j-u\rangle
\le\sqrt{1-1/f}\,\|v\|_2.
\]
Here $e_j$ is the point mass at element $j$. Cauchy--Schwarz also gives
$\operatorname{TV}(D_M,U_M)\le\sqrt f\,\|v\|_2/2$.
Averaging and applying Jensen's inequality proves
Eq.~\eqref{eq:orderblind-l2-sandwich}. If $f=1$, all three errors are zero.

\paragraph{An exact expression for the squared bias.}
Let $\widehat G$ be a complete set of inequivalent irreducible unitary
complex representations of $G$. For $\pi\in\widehat G$, write $d_\pi$ for
its dimension and $\chi_\pi(g)=\operatorname{tr}\pi(g)$ for its character.
We will show that, for every $t\ge1$,
\begin{equation}
\Phi_t=\frac1{|G|}\sum_{\substack{\pi\in\widehat G\\d_\pi>1}}
d_\pi^{\,1-t}
\left[\binom{d_\pi+t+1}{t+2}-\binom{d_\pi}{t+2}\right],
\label{eq:orderblind-l2-closed}
\end{equation}
with $\binom{n}{k}=0$ for integers $k>n\ge0$. This is an exact expression
for $\Phi_t$, not for the optimal accuracy $C_t$. Together with
Eq.~\eqref{eq:orderblind-l2-sandwich}, it gives computable bounds on $C_t$.

Define $\widehat\mu(\pi)=\sum_g\mu(g)\pi(g)$ and
$\|A\|_{\mathrm{HS}}^2=\operatorname{tr}(AA^*)$.
The finite-group Plancherel identity gives
\[
\sum_g|D_M(g)-U_M(g)|^2
=\frac1{|G|}\sum_{\pi\in\widehat G}d_\pi
\|\widehat D_M(\pi)-\widehat U_M(\pi)\|_{\mathrm{HS}}^2.
\]
The one-dimensional representations are trivial on $K$ and constant on
$F(M)$, so their terms vanish. Conversely, an irreducible representation
trivial on $K$ factors through the abelian group $G/K$ and is
one-dimensional. For each remaining representation, averaging $\pi(k)$
over $k\in K$, given by $\frac{1}{f}\sum_{k\in K}\pi(k)$, projects vectors to $K$-fixed vectors. The subspace spanned by all $K$-fixed vectors is
$G$-invariant because $K$ is normal, and must be either zero of the whole space by irreducibility. If it is the whole space, then the representation is trivial on $K$ and hence one dimensional.
Thus $\widehat U_M(\pi)=0$ when $d_\pi>1$, and
\begin{equation}
\Phi_t=\frac1{|G|}\sum_{d_\pi>1}d_\pi
\mathbb E_M\|\widehat D_M(\pi)\|_{\mathrm{HS}}^2.
\label{eq:orderblind-l2-fourier}
\end{equation}

Let $q$ and $q'$ be products of independent uniform orderings conditional
on the same multiset. Since $\widehat D_M(\pi)=\mathbb E[\pi(q)\mid M]$, we have
\[
\mathbb E_M\|\widehat D_M(\pi)\|_{\mathrm{HS}}^2
=\mathbb E\chi_\pi(qq'^{-1}).
\]
Equivalently, draw independent uniform $x_1,\ldots,x_t\in G$ and a uniform
permutation $\theta\in S_t$, and set
$q=x_1\cdots x_t$ and $q'=x_{\theta(1)}\cdots x_{\theta(t)}$.
Repeated values cause no bias because each distinct ordering of a multiset
has the same number of labeled permutations.

For a fixed $\theta$ and an irreducible representation of dimension $d$,
expand the trace of
$\pi(x_1\cdots x_t x_{\theta(t)}^{-1}\cdots x_{\theta(1)}^{-1})$.
Each independent input appears once as a matrix and once as its adjoint.
Schur orthogonality gives
\[
\mathbb E_x\bigl[\pi(x)_{ij}\overline{\pi(x)_{k\ell}}\bigr]
=d^{-1}\delta_{ik}\delta_{j\ell}.
\]
The $t$ averages contribute $d^{-t}$ and identify matrix indices. If
$L(\theta)$ is the number of free index classes after these identifications,
summing over them gives
\begin{equation}
\mathbb E_{x_1,\ldots,x_t}\chi_\pi(qq'^{-1})=d^{L(\theta)-t}.
\label{eq:orderblind-index-loops}
\end{equation}

We make the combinatorial identification explicit. Place the $2t$ factors
around a polygon, with cyclic indices $i_1,\ldots,i_{2t}$ at its vertices.
The inverse of $x_j$ occupies edge
$\alpha_j=2t+1-\theta^{-1}(j)$. Its orthogonality constraints are
$i_j=i_{\alpha_j+1}$ and $i_{j+1}=i_{\alpha_j}$, with indices read cyclically.
These are exactly the endpoint identifications obtained by gluing each
positive edge to its oppositely oriented inverse edge. The positive edges
form one contiguous half of the polygon and the inverse edges the other.
Consequently, $L(\theta)$ is the number of vertices of the polygon gluing
enumerated by the Hultman numbers
\citep[Theorem~1]{alexeev2011hultman}; the matching runs over all permutations
as $\theta$ does. In particular, $L(\theta)\le t+1$.

Writing $c(n,k)$ for the number of permutations of $n$ objects with $k$
cycles, that enumeration gives
\[
\text{Pr}_\theta(L(\theta)=k)=
\begin{cases}
2c(t+2,k)/(t+2)!, & k\equiv t+1\pmod2,\\
0, & \text{otherwise}.
\end{cases}
\]
Using $\sum_kc(n,k)d^k=d(d+1)\cdots(d+n-1)$ and selecting the indicated
parity yields
\begin{equation}
\mathbb E_\theta d^{L(\theta)-t}
=d^{-t}\left[\binom{d+t+1}{t+2}-\binom{d}{t+2}\right].
\label{eq:orderblind-loop-average}
\end{equation}
This is also the coefficient formula in
\citet[Theorem~3(i)]{alexeev2011hultman}. Substituting into
Eq.~\eqref{eq:orderblind-l2-fourier} proves Eq.~\eqref{eq:orderblind-l2-closed}.

\paragraph{A bound requiring only class size.}
Since $L(\theta)-t-1\le0$ and $d_\pi\ge2$ in the remaining sum,
\begin{align*}
\Phi_t
&=\frac1{|G|}\sum_{d_\pi>1}d_\pi^2
  \mathbb E_\theta d_\pi^{L(\theta)-t-1}\\
&\le\frac1{|G|}\sum_{d_\pi>1}d_\pi^2
  \mathbb E_\theta 2^{L(\theta)-t-1}
 =\left(1-\frac1f\right)\frac{t+3}{2^{t+1}}.
\end{align*}
Here $\sum_{\pi\in\widehat G}d_\pi^2=|G|$, exactly $|G/K|=|G|/f$
representations are one-dimensional, and the last expectation follows
from Eq.~\eqref{eq:orderblind-loop-average} at $d=2$.
Combining this with Eq.~\eqref{eq:orderblind-l2-sandwich} proves the explicit
bound and convergence in Theorem~\ref{thm:orderblind-asymptotic}.

\paragraph{Why the optimal accuracy cannot increase with length.}
If $m_x$ is the count of $x$ in $M$, conditioning on the last input gives
\[
D_M(g)=\sum_{x:m_x>0}\frac{m_x}{t}D_{M-\{x\}}(gx^{-1}).
\]
Taking the maximum and using its convexity bounds $\max_gD_M(g)$ by
the same weighted average of $\max_hD_{M-\{x\}}(h)$.
Deleting a uniformly chosen occurrence from an i.i.d. multiset of length
$t$ leaves an i.i.d. multiset of length $t-1$. Averaging therefore gives
$C_t\le C_{t-1}$ for $t\ge2$, completing the theorem.
The same argument proves $\Delta_t\le\Delta_{t-1}$, since translating
$U_{M-\{x\}}$ on the right by $x$ gives $U_M$, and total variation is convex.

\subsubsection{An elementary proof of convergence}
\label{app:orderblind-block-proof}
We give an independent proof of convergence that avoids representation theory. Under the standing assumption of uniform i.i.d. inputs from \(G\), we establish constants \(L=L(G)\ge1\) and \(\rho=\rho(G)\in(0,1)\) such that
\begin{equation}
0\le C_t-\frac1f
\le\Delta_t
\le\left(1-\frac1f\right)\rho^{\lfloor t/L\rfloor}.
\label{eq:orderblind-tv-bound}
\end{equation}
The same argument extends to any fixed i.i.d. input law with full support on \(G\), although \(\rho\) then also depends on that law. We give this extension at the end of the proof.

For abelian $G$, the multiset determines the product, so $C_t=1$ and
$\Delta_t=0$; take $L=1$ and $\rho=1/2$. Below assume $f>1$.

\paragraph{A multiset whose products cover the commutator subgroup.}
We first construct a fixed multiset $M_*$ whose reorderings attain every
element of $K$. Use the convention $[a,b]=aba^{-1}b^{-1}$. Every $k\in K$
is a finite product of such commutators, since $[a,b]^{-1}=[b,a]$.
Choose a word $W_k$ formed by concatenating the corresponding quadruples
$(a,b,a^{-1},b^{-1})$; take $W_e$ to be empty. Reordering each quadruple as
$(a,a^{-1},b,b^{-1})$ gives a word $W_k^0$ with the same multiset and
product $e$.

Concatenate all $W_k$ in a fixed order and call the resulting multiset
$M_*$, of length $L>0$. To obtain product $k$, retain its segment $W_k$
and replace every other segment $W_h$ by $W_h^0$. Conversely, every
reordering has trivial image in $G/K$. Thus its possible products are
exactly $K$. Conditional on $M_*$, every distinct word has positive, equal
probability: counting labeled permutations assigns each word the same
multiplicity $\prod_x m_x!$, where $m_x$ is the count of $x$.

Let $D_*$ be this conditional product distribution. Since $D_*(k)>0$ for
every $k\in K$, we can set
\[
\varepsilon=f\min_{k\in K}D_*(k)\in(0,1].
\]
If $\varepsilon<1$, then
$D_*=\varepsilon U_K+(1-\varepsilon)R$ for a probability distribution
$R$ on $K$. If $\varepsilon=1$, then $D_*=U_K$.

\paragraph{A special block contracts the error.}
For probability distributions on $G$, define the ordered convolution
\[
(\mu*\nu)(g)=\sum_{h\in G}\mu(h)\nu(h^{-1}g),
\]
the law of the product of independent draws from $\mu$ and $\nu$ in that
order. If $\mu$ is supported on a coset $F=cK$ and $\nu$ on $F'=c'K$,
then $\mu*U_K=U_F$ and $U_F*\nu=U_{FF'}$. The second identity uses
normality of $K$, which gives $U_K*\delta_{c'}=\delta_{c'}*U_K$.
Right convolution is a Markov kernel and contracts total variation, so
\[
\operatorname{TV}(\mu*\nu,U_{FF'})
=\operatorname{TV}(\mu*\nu,U_F*\nu)
\le\operatorname{TV}(\mu,U_F).
\]
For a special block with law $D_*$ and $\varepsilon<1$,
\[
\mu*D_*-U_F=(1-\varepsilon)(\mu-U_F)*R,
\]
and hence
\[
\operatorname{TV}(\mu*D_*,U_F)
\le(1-\varepsilon)\operatorname{TV}(\mu,U_F).
\]
If $\varepsilon=1$, that block makes the error zero immediately. Subsequent
blocks preserve uniformity on the resulting coset.

\paragraph{Conditioning on blocks and then forgetting their boundaries.}
Partition the inputs into \(n=\lfloor t/L\rfloor\) complete blocks of length \(L\) and a remainder of length \(r=t-nL\). Let \(M^{(1)},\ldots,M^{(n)}\) be the multisets of the complete blocks, listed in block order, and let \(M^{(\mathrm{rem})}\) be the multiset of the remainder, empty when \(r=0\). Define
\[
Z=\bigl(M^{(1)},\ldots,M^{(n)},M^{(\mathrm{rem})}\bigr).
\]
Thus \(Z\) determines the total input multiset \(M_t\). Given $Z$, the blocks remain independent:
each conditioning event involves only that block's independent input
coordinates. Within each block, the distinct orderings are equiprobable.
Thus the conditional law $D_Z$ of $q_t$ is the ordered convolution of
the block laws. An empty remainder has law $\delta_e$.

Let $N$ count the complete blocks with multiset $M_*$. The initial
distance $\operatorname{TV}(\delta_e,U_K)$ is $1-1/f$. Each such special
block contracts it by at most $1-\varepsilon$; other blocks, including
the remainder, cannot increase it. Therefore
\[
\operatorname{TV}(D_Z,U_{M_t})\le(1-1/f)(1-\varepsilon)^N.
\]
When $\varepsilon=1$, the factor is interpreted as $1$ if $N=0$ and as
$0$ otherwise. We have used independence conditional on the block
multisets, not independence conditional on the total multiset.

A complete block has multiset $M_*$ with probability
\[
p=\frac{L!}{\prod_x m_x!}|G|^{-L}>0.
\]
Since $M_*$ contains a nonidentity element, it differs from the
all-identity multiset, so $p\le1-|G|^{-L}<1$. The complete blocks are
independent before conditioning, giving $N\sim\operatorname{Binomial}(n,p)$.
It follows that
\[
\mathbb E_Z\operatorname{TV}(D_Z,U_{M_t})
\le(1-1/f)\mathbb E(1-\varepsilon)^N
=(1-1/f)(1-p\varepsilon)^n.
\]
Since $Z$ determines $M_t$, by the tower property,
$D_{M_t}(g)=\mathbb E[D_Z(g)\mid M_t]$. Since $U_{M_t}$ is fixed given
$M_t$, convexity of total variation gives
\[
\Delta_t\le\mathbb E_Z\operatorname{TV}(D_Z,U_{M_t})
\le(1-1/f)(1-p\varepsilon)^{\lfloor t/L\rfloor}.
\]
Set $\rho=1-p\varepsilon\in(0,1)$. If $t<L$, this reduces to the
general bound $1-1/f$ for a distribution supported on $f$ points.
Finally, for every $M$ the largest atom of $D_M$ is at least $1/f$,
and the single-event bound for total variation yields
\[
0\le\max_g D_M(g)-1/f\le\operatorname{TV}(D_M,U_M).
\]
Averaging proves Eq.~\eqref{eq:orderblind-tv-bound}, providing an
independent proof of convergence.

For nonuniform inputs, suppose the inputs are instead i.i.d. with a fixed law \(\nu\) satisfying \(\nu(x)>0\) for every \(x\in G\). Every ordering of a multiset with counts \((m_x)_{x\in G}\) has the same probability \(\prod_{x\in G}\nu(x)^{m_x}\). Consequently, conditional on a multiset, its distinct orderings remain equiprobable. The construction of \(M_*\), its conditional product law \(D_*\), and the contraction factor \(1-\varepsilon\) are therefore unchanged.

The probability that a complete block has multiset \(M_*\) becomes
\[
p_\nu=\frac{L!}{\prod_{x\in G}m_x!}
\prod_{x\in G}\nu(x)^{m_x}>0.
\]
Since \(M_*\) differs from the all-identity multiset,
\[
p_\nu\le1-\nu(e)^L<1.
\]
Repeating the block argument proves Eq. (8) with \(C_t\) and \(\Delta_t\) evaluated under \(\nu\), and with
\[
\rho_\nu=1-p_\nu\varepsilon\in(0,1).
\]
Thus \(L\) and \(\varepsilon\) can be chosen to depend only on \(G\), while the convergence rate \(\rho_\nu\) also depends on the input law.

\paragraph{Consequences for an order-blind predictor.}
Let $\widehat q_t=\psi_t(M_t,\omega_t)$, where the external randomness
$\omega_t$ is independent of all test inputs; deterministic predictors
are included. Define its exact and abelianization accuracies by
\[
A_t=\Pr(\widehat q_t=q_t),\qquad
Q_t=\Pr(\widehat q_tK=q_tK).
\]
Conditional on $M_t$, the prediction is independent of $q_t$. Writing
$r_t(g\mid M)=\Pr(\widehat q_t=g\mid M_t=M)$, we obtain
\[
A_t=\mathbb E_M\sum_g D_M(g)r_t(g\mid M),\qquad
Q_t/f=\mathbb E_M\sum_g U_M(g)r_t(g\mid M).
\]
Since $0\le r_t(g\mid M)\le1$ and $D_M-U_M$ has total mass zero,
\begin{equation}
|A_t-Q_t/f|\le\Delta_t
\le\frac{\sqrt f}{2}\sqrt{\Phi_t}.
\label{eq:orderblind-predictor-bound}
\end{equation}
If $Q_t\to1$, then $A_t\to1/f$ and $C_t-A_t\to0$. This conclusion
does not require uniform predictor outputs. It is the conditional law of
the true product that becomes uniform on average.

\paragraph{Scope of the limit.}
Uniformization holds in expectation over multisets, not for every multiset:
the all-identity multiset determines the product at every length. The group
is finite and inputs are uniform on the full group for the explicit bound
in Theorem~\ref{thm:orderblind-asymptotic}; its excess bound is uniform over
finite groups. The elementary block proof additionally applies to a fixed
nonuniform i.i.d. input law with full support on $G$, with $\rho$ depending
on that law. Its constants may be loose. Neither argument applies
unconditionally to restricted alphabets or infinite groups: sampling only
$e$ in a non-abelian group gives $C_t=1$, not $1/f$.
App.~\ref{app:ceiling} retains the finite-length calculations.
Finally, finite-length reordering tests do not prove strict order-blindness
at arbitrary lengths or $Q_t\to1$ for a learned model.

\section{Non-normal coset stages and their internal representation}
\label{app:nonsolvable-census}

\subsection{Training cohorts and stage identification}

We extend the recurrent census of App.~\ref{app:blind-census} to $A_5$ and
$S_5$. All inputs are independent and uniform over the full group, training
length is 100, and the reported accuracies average positions 17--100. We
use two-layer LSTM and GRU models matched to the four-layer recipe-A
Transformer's parameter count, in fp32. On $A_5$, embedding and hidden
widths are 442 for LSTM and 510 for GRU, giving 3,185,996 and 3,188,580
parameters against the Transformer's 3,190,272. On $S_5$, the GRU width is
508, with 3,224,904 parameters against 3,220,992. We use AdamW, batch size
256, zero weight decay, gradient clipping at 1, and linear decay scheduled
over 74,219 updates without warmup; the runs below stop earlier.
The $A_5$ LSTMs use learning rate $10^{-3}$,
8,000 updates, and evaluations every 50 updates. The GRU arms that show the
stages below use $3\times10^{-4}$, 16,000 updates, and evaluations every 100
updates. Each arm has seeds 42--44. These are descriptive experiments.
The lower GRU learning rate was added after inspecting the $10^{-3}$ arm;
it is not a preregistered architecture comparison.

We identify stages from prediction errors rather than from accuracy alone.
For each incorrect argmax prediction $p$ of state $q$, define
$h_R=pq^{-1}$ and $h_L=q^{-1}p$. We normalize the counts of $h_R$ over
incorrect predictions to obtain a distribution $D_R$. For each candidate
subgroup $H$, we compute
\[
 d_H=\frac12\sum_{g\in G}
 \left|D_R(g)-\frac{\mathbf 1\{g\in H\setminus\{e\}\}}{|H|-1}\right|.
\]
We enumerate all 59 subgroups of $A_5$ and all 156 of $S_5$. A partial stage
requires a best-fitting nontrivial proper subgroup with $d_H<.15$, exact
accuracy within 15\% of $1/|H|$, and an interval lasting at least 300
updates. We also check the competing subgroup fits. The thresholds were
chosen during the earlier descriptive analysis, and are not presented as
preregistered. A stage in this sense describes the pooled errors and
accuracy; it does not by itself imply uniform output probabilities on
every sequence. The blind row-partition tests in App.~\ref{app:blind-census}
provide a separate diagnostic.

\begin{table}[!htbp]
\centering\small
\begin{tabular}{llcll}
\toprule
Group & Model & Seed & Sustained subgroup orders & Final exact accuracy \\
\midrule
$A_5$ & LSTM & 42 & $3$ & .9996 \\
 & LSTM & 43 & $3$ & .9923 \\
 & LSTM & 44 & $3$ (two intervals) & .0877$^{*}$ \\
 & GRU & 42 & $3$ & .9938 \\
 & GRU & 43 & $3$ & .9961 \\
 & GRU & 44 & $3$ & .9950 \\
$S_5$ & LSTM & 42 & $60\to10\to2$ & .4997 \\
 & LSTM & 43 & $24\to6\to2$ & 1.0000 \\
 & LSTM & 44 & $60\to4$ & .4868 \\
 & GRU & 42 & $60\to12\to3$ & .9973 \\
 & GRU & 43 & $60\to12\to3$ & .9996 \\
 & GRU & 44 & $60\to12\to3$ & .3317 \\
\bottomrule
\end{tabular}
\caption{Recurrent stages in the reported learning-rate arms. Orders refer
to the unresolved subgroup $H$, not the number of cosets. Only the order-60
stages are nontrivial normal quotients. $^{*}$The third $A_5$ LSTM reaches
approximately .99 accuracy and subsequently loses it; its final accuracy
must not be read as failure to reach an accurate state earlier.}
\label{tab:nonsolvable-stages}
\end{table}

On $A_5$, the three-element subgroups fix two objects pointwise; each has
20 right cosets. The sustained intervals are updates 1,450--2,350,
1,250--1,900, and 1,150--1,900 plus 2,100--2,650 for the three LSTMs, and
4,500--6,400, 4,100--9,200, and 3,600--7,400 for the three GRUs. The largest
accepted $d_H$ is .099 for LSTMs and .149 for GRUs; the next-best subgroup
is at least .600 away within these intervals. The third LSTM reaches
approximately .99 accuracy at updates 4,000--5,000 before becoming unstable.
On $S_5$, all three lower-learning-rate GRUs first recover the $A_5$ quotient,
then the right cosets of an order-12 subgroup fixing one object and an
order-3 subgroup fixing two. Two reach near-exact tracking within 16,000
updates; the third remains at the order-3 stage.

The other learning-rate arms matter for interpreting this result. At
$10^{-3}$, the three $A_5$ GRUs have final accuracy .034--.036 even after
40,000 updates, without a fitting partial coset stage; the three $S_5$ GRUs
retain the $A_5$ quotient with exact accuracy .02--.04. Thus the observed
normal/non-normal distinction is not a claim that every recurrent model
necessarily leaves a quotient stage.

\paragraph{Transformer coverage.}
Figure~\ref{fig:long-training-stages}c shows five four-layer recipe-A
$S_5$ models with width 256, four heads, and 3,220,992 parameters, matching
the configuration in Table~\ref{tab:law}. Inputs are independent and uniform
over all 120 group elements. Seeds 42--46 use AdamW at $5\times10^{-5}$,
zero weight decay, linear decay over 74,219 updates without warmup, batch
size 256, training length 100, and bf16 autocast. Using FP32 census readouts,
we evaluate 2,048 sequences
at positions 17--100 every 50 updates and at the final update, giving 1,486
evaluations per run. Scoring all 156 subgroups with the fit and accuracy
criteria above detects no non-normal coset at any of the 7,430 evaluations;
all five runs end in an accepted $A_5$ stage. Stage acceptance uses the same
300-update duration as the recurrent census. This dense census addresses
the possibility of missing short stages on the older 1,000-update grid.
The first 20,000 updates form the shared budget window with the lower-rate
recurrent controls; only two of these five Transformers have entered an
accepted $A_5$ stage by then. Results at 74,219 updates are budget extensions,
not a matched-time comparison. Seeds 42--44 and 45--46 run on different GPU
models; these are five new trajectories, not bitwise replays of Table~\ref{tab:law}.

The 41-model cross-group census is reported separately in
App.~\ref{app:blind-census}. Here we add three 12-layer $S_5$ trajectories
with width 256, four heads, and 9,539,072 parameters. They use AdamW at
$5\times10^{-5}$, zero weight decay, linear learning-rate decay over 74,219
updates, batch size 256, and bf16 autocast. We evaluate every 1,000 updates
and at the final update, giving 75 sampled times per run and 225 in total.
The sustained-stage duration is 3,000 updates at this coarser sampling rate.
All detected nontrivial coset stages correspond to the normal subgroup
$A_5$. One run's exact accuracy rises to .029 near the end, with increasing
preference for the true member but no finer subgroup partition. None of the
three reaches exact tracking. These are new trajectories with the original
seeds, not bitwise replays of the earlier training histories.

A separate census covers 108 final $S_5$ evaluations, 36 each at depths 4,
8, and 12. Of these, 89 uniquely fit the $A_5$ stage under the subgroup and
accuracy criteria, and 19 fit no partial subgroup stage. We recover no
non-normal coset partitions. Predictions that fit no stage include
unstructured or increasingly concentrated outputs; they must not be counted
as quotient solutions. The 225 trajectory samples and 108 final evaluations
are different coverage summaries, not 333 independently trained models.

\paragraph{Reading Fig.~\ref{fig:long-training-stages}.}
Panels (a,b) use the same five-million-update $S_4$ run and the exact and
quotient definitions of App.~\ref{app:readouts}. Panel (c) displays one $S_5$
LSTM and one GRU, selected first by the number of distinct accepted subgroup
orders, then by total duration of stages that end before the last evaluation,
and finally by seed. This picks seeds 42 and 43, respectively, without using
whether the run solves the task. All five four-layer Transformer trajectories
from the dense census are shown. The recurrent and Transformer budgets and evaluation intervals differ
as specified above.

Panel (d) groups stages by group and conjugacy type of $H$. We first average
each reading over all evaluations in a stage, retaining its first and last
evaluation. Small dots show these stage means, with horizontal offsets for
visibility. Large dots give their equally weighted mean; bars show one sample
standard deviation across stage means, using denominator $m-1$ for $m$ stages.
We omit the standard deviation when $m=1$ and do not truncate bars at zero or
one. Stages can come from the same run, so this spread is descriptive, not a
confidence interval or an estimate of variation across independent seeds.
Table~\ref{tab:coset-stage-spread} retains the full minimum--maximum range over
all evaluations, including boundaries. In particular, the $A_5$ reading reaches
.8510 in one accepted stage even though its stage average is higher.
We report the fractions of
incorrect predictions with $h_R\in H$ and $h_L\in H$. If $p=hq$, then
$h_L=q^{-1}hq$. If, conditional on an error, $q$ and $h$ are independently
uniform on $G$ and $H\setminus\{e\}$, the predicted second fraction is
\[
 \sum_{\mathcal C}
 \frac{|(H\setminus\{e\})\cap\mathcal C|}{|H|-1}
 \frac{|H\cap\mathcal C|}{|\mathcal C|},
\]
where the sum runs over conjugacy classes $\mathcal C$ of nonidentity elements.
This idealized reference does not fit an additional parameter. A normal $H$
gives one, whereas a non-normal $H$ generally gives a smaller value. For an
order-two subgroup generated by a double transposition, it gives $1/3$ in
$S_4$ and $1/15$ in $S_5$; a transposition in $S_5$ gives $1/10$.

\begin{table}[t]
\centering
\small
\setlength{\tabcolsep}{4pt}
\caption{Stage summaries and full evaluation ranges for Fig.~\ref{fig:long-training-stages}d. $R$ is the fraction of errors with $pq^{-1}\in H$; $L$ uses $q^{-1}p\in H$. Means and sample SDs weight stages equally. Ranges include every evaluation within accepted stages. A dash indicates a single stage, for which SD is undefined. Primes match the subgroup types in the figure.}
\label{tab:coset-stage-spread}
\begin{tabular}{lrrrrrr}
\toprule
Group & $|H|$ & Stages & $R$: mean $\pm$ SD & $R$: range & $L$: mean $\pm$ SD & $L$: range \\
\midrule
$S_4$ & $2$ & 9 & $0.9896\pm0.0068$ & $0.8935$--$1.0000$ & $0.3337\pm0.0023$ & $0.3055$--$0.3382$ \\
$A_5$ & $3$ & 7 & $0.9718\pm0.0178$ & $0.8510$--$0.9998$ & $0.0979\pm0.0024$ & $0.0856$--$0.1025$ \\
$S_5$ & $2$ & 1 & $0.9903$ (---) & $0.9589$--$0.9999$ & $0.0660$ (---) & $0.0638$--$0.0676$ \\
$S_5$ & $2'$ & 1 & $0.9922$ (---) & $0.9215$--$0.9997$ & $0.0989$ (---) & $0.0951$--$0.1010$ \\
$S_5$ & $3$ & 3 & $0.9920\pm0.0029$ & $0.8800$--$1.0000$ & $0.1003\pm0.0003$ & $0.0962$--$0.1016$ \\
$S_5$ & $4$ & 1 & $0.9912$ (---) & $0.8880$--$0.9994$ & $0.1997$ (---) & $0.1832$--$0.2018$ \\
$S_5$ & $6$ & 1 & $0.9936$ (---) & $0.9735$--$0.9994$ & $0.2195$ (---) & $0.2172$--$0.2214$ \\
$S_5$ & $10$ & 1 & $0.9947$ (---) & $0.9739$--$1.0000$ & $0.2584$ (---) & $0.2559$--$0.2600$ \\
$S_5$ & $12$ & 3 & $0.9951\pm0.0049$ & $0.9774$--$1.0000$ & $0.3459\pm0.0009$ & $0.3433$--$0.3476$ \\
$S_5$ & $24$ & 1 & $0.9966$ (---) & $0.9908$--$0.9995$ & $0.3751$ (---) & $0.3745$--$0.3757$ \\
$S_5$ & $60$ & 5 & $1.0000\pm0.0000$ & $0.9983$--$1.0000$ & $1.0000\pm0.0000$ & $0.9983$--$1.0000$ \\
\bottomrule
\end{tabular}
\end{table}

\subsection{Coset tracking with recirculation}
\label{app:recirculation}

We test whether non-normal coset stages also occur in a Transformer with
recurrent feedback. Following the recirculation construction of
\citet{mozer2026recirculation}, we feed the preceding position's final-block
residual stream into the current position before its first block. The source
vector is rescaled to the destination's norm, then mixed with weights
$\alpha=.25$ and $\beta=.75$. Each of $r$ feedback rounds uses the preceding
round's representations, for $r+1$ forward passes per training update.
Specifically, let $d_t^{(k)}$ and $s_t^{(k)}$ be the input to the first block
and output of the last block at position $t$ in pass $k$. After an ordinary
pass $k=0$, each pass $k=1,\ldots,r$ uses
\[
d_t^{(k)}=\alpha\,
\frac{\|d_t^{(k-1)}\|_2}{\max(\|s_{t-1}^{(k-1)}\|_2,10^{-6})}
s_{t-1}^{(k-1)}+\beta d_t^{(k-1)},\qquad t\ge2,
\]
then recomputes the blocks with shared weights. Position 1 remains
$d_1^{(k)}=d_1^{(0)}$. The $\beta=1$ control retains $\alpha=.25$.
Feedback is active during both training and evaluation, with gradients
through it. This increases computation and effective depth; the comparison
does not isolate recurrence from those changes.

We use recipe A on $S_4$, including its full-group i.i.d. inputs, batch size
256, learning rate $5\times10^{-5}$ with linear decay, and 74,219-update
budget. We train in bf16 and evaluate the coset census in fp32 on 2,048
sequences over positions 17--100 every 250 updates. We score all 30 subgroups
using the pooled-error criterion above. A stage must last at least 300
updates, so two observations 250 updates apart alone do not suffice.
These are descriptive experiments, separate from the standard-Transformer
census. Table~\ref{tab:recirculation} reports all 11 completed runs in this
comparison, including three follow-ups, and the original interrupted trajectory.
The four-round seed-42 follow-up restarts from initialization with the same
seed; it is a repeated run, not a new independent seed.
The shuffled-source control was added after observing the non-normal stage.
An earlier attempted one-round shuffle used a different, serial update schedule
and is excluded; the reported control matches the four-round forward-pass count.
It is therefore a post-result control, not an independent confirmation.

\begin{table}[!htbp]
\centering\small
\begin{tabular}{@{}llcc@{}}
\toprule
Configuration & Seed & Sustained subgroup & Last census exact \\
\midrule
No feedback & 42 / 43 / 44 & $A_4$ & .0825 / .0836 / .0808 \\
One round & 42 / 43 & $A_4$ & .0849 / .0840 \\
One round & 44 & $V_4$ & .2466 \\
Four rounds, restart & 42 & $D_4$ then $V_4$ & .2709 \\
Four rounds & 43 & none & .1398 \\
Four rounds & 44 & $V_4$ & .5412 \\
Four rounds, $\beta=1$ & 42 & $A_4$ & .0841 \\
Four rounds, shuffled source & 42 & $A_4$ & .0842 \\
Four rounds, interrupted & 42 & $D_4$ (non-normal) & .1986 \\
\bottomrule
\end{tabular}
\caption{Recirculation census on $S_4$. Completed runs reach 74,219 updates;
the interrupted run's last census is at 38,750. A listed stage can precede
the last evaluation. The shuffled control cyclically shifts source vectors
across the batch, preserving the number of feedback rounds.}
\label{tab:recirculation}
\end{table}

In the interrupted four-round trajectory, an order-eight non-normal subgroup
$H\cong D_4$ fits 59 consecutive evaluations from updates 14,500 to 29,000.
It defines three right cosets with eight states each. Throughout this stage,
exact accuracy is .1088--.1420, the pooled-error fit has total variation at
most .1368, and the next-best subgroup fit has total variation at least
.5589. At update 19,000, exact accuracy is 12.39\% and correct-coset accuracy
is 99.60\%. The run later leaves the $1/8$ stage and was accidentally
terminated after its 38,750-update evaluation. We retain the observed
interval as a descriptive result, but exclude this run from completed-run
counts. The complete same-seed restart also passes through this non-normal
$D_4$ stage, from updates 13,750 to 17,500 (16 evaluations). Exact accuracy
is .1087--.1241, total variation is at most .1362, and the next-best fit has
total variation at least .5595. It then enters a normal $V_4$ stage from
18,500 to 74,219; final exact and correct-$V_4$-coset accuracies are 27.09\%
and 99.92\%. Neither of the other two completed four-round seeds exhibits
a non-normal stage; seed 44 instead passes through $V_4$ from updates
7,250 to 30,000. The non-normal stage therefore repeats under the same
initialization seed but is not replicated across seeds.

The one-round follow-ups also differ. Seed 43 ends with an $A_4$ stage,
with exact accuracy 8.40\% and correct-coset accuracy 99.90\%. Seed 44
instead sustains a $V_4$ stage from updates 38,000 to 74,219, ending at
24.66\% exact and 98.94\% correct-coset accuracy. This contradicts our
initial prediction that one feedback round would remain at the
abelianization plateau. More feedback rounds are therefore not required
to recover a finer quotient in this setup.

The feedback remains part of the evaluated model. Removing it from the
interrupted run at updates 20,000 and 35,000 reduces exact accuracy to
.0413 and .0419, near the $1/24$ chance level. Conversely, adding four-round
feedback only at inference to a model trained without it gives .0425.
The observed coset structure therefore belongs to the jointly trained
weights and feedback configuration. It does not establish the same
structure in an ordinary Transformer without feedback.

\subsection{Locating the recurrent coset state}
\label{app:coset-mechanism}

We analyze the $A_5$ order-three stages in three LSTMs and three GRUs, at
three saved training times each. This gives 18 evaluations of six trained
models, not 18 independent runs. The replay trajectories recover the same
subgroups as the original census but need not match its training history
bitwise. In particular, changing the evaluation interval consumes random
numbers differently, and the recurrent kernels are not bitwise deterministic.
The geometry and intervention analyses were added after inspecting the stages.

For each model, we generate 4,096 fresh independent uniform sequences of
length 100. We fit class means, subspaces, and readouts on the first 3,072
sequences and evaluate on the remaining 1,024. The three sampled times within
a run use the same sequence pool. We take the first GRU layer's hidden state
and the second LSTM layer's concatenated hidden and cell states, following
layer-replacement diagnostics. Layers are numbered from one. We denote the
resulting row vector by $s_t$ to distinguish it from the group state $q_t$.

Let $c_t=Hq_t$ be the true coset. For each of the 20 values of $c_t$, we
average $s_t$ over fit-split positions 17--100 to obtain $\mu_c$. We subtract
the unweighted mean of these 20 vectors and stack them as the rows of $M$.
The right singular vectors of $M$ give orthonormal hidden-state directions.
We select $k$ by the largest adjacent singular-value ratio among the first
12 singular values and let $U_k$ contain the leading $k$ directions. This
rule was chosen after inspecting the spectra. It returns $k=3$ for all
three GRUs and LSTM seed 43, and $k=7$ for LSTM seeds 42 and 44, at all three
sampled times. It measures variation between class means, not the fraction
of total hidden-state variance explained.

\begin{table}[!htbp]
\centering\footnotesize
\begin{tabular}{llrrrrrr}
\toprule
Model & Step & $A_{\rm coset}$ & $k$ & Top $k$ & Rest & Full & Residual \\
\midrule
LSTM 42 & 1,500 & 0.987 & 7 & 0.987 & 0.985 & 0.988 & 0.00172 \\
LSTM 42 & 1,900 & 0.995 & 7 & 0.994 & 0.992 & 0.994 & 0.00155 \\
LSTM 42 & 2,100 & 0.993 & 7 & 0.993 & 0.989 & 0.992 & 0.00106 \\
LSTM 43 & 1,450 & 0.993 & 3 & 0.985 & 0.987 & 0.989 & 0.00056 \\
LSTM 43 & 1,600 & 0.999 & 3 & 0.999 & 0.997 & 0.999 & 0.00051 \\
LSTM 43 & 1,700 & 0.999 & 3 & 0.999 & 0.997 & 0.999 & 0.00047 \\
LSTM 44 & 1,400 & 0.998 & 7 & 0.998 & 0.999 & 0.998 & 0.00226 \\
LSTM 44 & 1,600 & 0.998 & 7 & 0.997 & 0.998 & 0.997 & 0.00208 \\
LSTM 44 & 1,850 & 0.992 & 7 & 0.984 & 0.988 & 0.985 & 0.00204 \\
GRU 42 & 5,000 & 0.932 & 3 & 0.900 & 0.891 & 0.916 & 0.00094 \\
GRU 42 & 5,500 & 0.971 & 3 & 0.942 & 0.943 & 0.960 & 0.00098 \\
GRU 42 & 6,000 & 0.979 & 3 & 0.958 & 0.957 & 0.974 & 0.00099 \\
GRU 43 & 4,800 & 0.961 & 3 & 0.930 & 0.935 & 0.953 & 0.00131 \\
GRU 43 & 5,600 & 0.985 & 3 & 0.974 & 0.974 & 0.984 & 0.00120 \\
GRU 43 & 6,300 & 0.987 & 3 & 0.977 & 0.976 & 0.982 & 0.00121 \\
GRU 44 & 4,600 & 0.974 & 3 & 0.949 & 0.949 & 0.960 & 0.00108 \\
GRU 44 & 5,600 & 0.987 & 3 & 0.981 & 0.974 & 0.988 & 0.00108 \\
GRU 44 & 6,500 & 0.994 & 3 & 0.978 & 0.979 & 0.989 & 0.00108 \\
\bottomrule
\end{tabular}

\caption{All 18 $A_5$ mechanism evaluations. $A_{\rm coset}$ is the fraction
of full-state argmax predictions in the correct right coset. $k$ is the
number of fitted directions. ``Top $k$'' reports donor agreement after
replacing those components; ``Rest'' reports recipient agreement after
replacing their complement; ``Full'' reports donor agreement after replacing
the full selected layer state. Residual is the mean relative squared error
of orthogonal fits to the class-mean action.}
\label{tab:coset-mechanism}
\end{table}

\subsection{Geometry and its scope}
\label{app:coset-geometry}

For an input $x$, let $a_x(c)$ be the successor coset under right
multiplication. Writing the projected centered means as row vectors $v_c$,
we fit an orthogonal matrix $R_x$ to minimize
$\sum_c\|v_cR_x-v_{a_x(c)}\|^2$. We divide by
$\sum_c\|v_{a_x(c)}\|^2$ to report relative squared error. The action of all
60 inputs has mean residual .00047--.00226 across the 18 evaluations.
At each evaluation, 200 arbitrary permutations of successor means give a
minimum residual at least .649; randomly relabeling the cosets gives at
least .847. These controls test whether a comparably good fit could be
obtained without the true group action. We also check class means computed
only from held-out sequences in the fitted basis.

In the three-dimensional cases, normalized pairwise inner products lie
near the dodecahedral values $\{\pm\sqrt5/3,\pm1/3,-1\}$. For 20 vertices,
the corresponding ordered-pair counts are $60,120,120,60,20$ in descending
order. This tests the geometry numerically rather than judging a projection
by eye. The traces of the fitted group matrices identify a three-dimensional
irreducible representation of $A_5$, meaning one with no proper nonzero
invariant linear subspace. Specifically, we compare the character
$\chi(x)=\operatorname{tr}(R_x)$ on conjugacy classes with the $A_5$
character table. The two inequivalent three-dimensional representations
give the same geometry with different group-element labels; we make no
claim that training prefers one. The seven-dimensional cases split into
three- and four-dimensional invariant subspaces. The latter carries the
geometry of tracking one object, and the two components together encode
the coset. We obtain each component from the eigenspace near one of
$(d/60)\sum_{x\in A_5}\chi(x^{-1})R_x$, where $\chi$ is its character and
$d=\chi(e)$ its dimension. This separates the components even when their
singular values overlap. We verify the multiplication law of the fitted mean-action
matrices and report both components in the intervention controls below.

The group action describes the class means much more accurately than it
describes every individual transition. On held-out
transitions ending at positions 17--100 at one sampled time per model, rotation of the previous
state has relative squared error .016--.085. Predicting the next class mean
directly has error .015--.066. Orthogonal maps fitted to the individual
transitions have mean group-composition defects of .033--.118, compared with
at most .0014 for the mean-action matrices. Here the defect is
$\|R_xR_y-R_{xy}\|_F/\sqrt{k}$, averaged over 300 sampled input pairs;
$\|\cdot\|_F$ is the square root of the sum of squared matrix entries.
For within-class deviations $\epsilon_t=v_t-v_{c_t}$, we fit a scalar
$\alpha$ in $\epsilon_t\approx\alpha\epsilon_{t-1}R_{x_t}$ by least squares.
The fitted factors .36--.60 describe contraction rather than rigid rotation. These measurements support an organized code of
class means, but not literal rotation of each individual recurrent state.

\paragraph{Why the three-dimensional code is dodecahedral.}
We give the exact algebraic statement underlying the approximate fits above.
Let $\rho:G\to O(d)$ be an orthogonal representation, so
$\rho(x)\rho(y)=\rho(xy)$, and let $v_{Hq}$ be row vectors satisfying
$v_{Hqx}=v_{Hq}\rho(x)$. Setting $w=v_H$ gives
\[
 v_{Hq}=w\rho(q),\qquad w\rho(h)=w\quad(h\in H).
\]
Conversely, any vector $w$ fixed by $H$ defines such a code: if $q'=hq$,
then $w\rho(q')=w\rho(h)\rho(q)=w\rho(q)$. Thus the code is the
\emph{orbit} of $w$, the set of its images under the group. Its vectors
distinguish all cosets exactly when the subgroup fixing $w$ is precisely $H$.
This construction requires a group action on the cosets, not a quotient
group structure.

For $G=A_5$, the irreducible representation dimensions are $1,3,3,4,5$;
the only one-dimensional representation is trivial. Hence three is the
smallest dimension of a nontrivial real linear action. In either
three-dimensional irreducible representation, the character
$\chi_3(g)=\operatorname{tr}\rho(g)$ is 3 at the identity and 0 on
3-cycles. For $H\cong C_3$, the subspace $V^H$ of vectors fixed by every
$h\in H$ therefore has dimension
\[
 \dim V^H=\frac{1}{|H|}\sum_{h\in H}\chi_3(h)
 =\frac{3+0+0}{3}=1.
\]
Indeed, the average of $\rho(h)$ over $H$ is the projection onto $V^H$,
so its trace counts that dimension. Both three-dimensional orthogonal
actions identify $A_5$ with the rotations of a regular dodecahedron,
with different labels for the rotations. A subgroup $C_3$ fixes the axis
through two opposite vertices. A nonzero vector on that axis has
stabilizer exactly $C_3$, and its orbit consists of the 20 vertices,
up to a common scale and an orthogonal change of coordinates. This both
constructs a three-dimensional code distinguishing all 20 cosets and
shows that no lower-dimensional linear action can do so.

Orthogonality matters for the Euclidean shape: a general invertible linear
change of coordinates can stretch the same orbit into a non-regular
polyhedron. Our empirical claim combines a three-dimensional subspace,
approximately orthogonal maps on class means, and the independent
pairwise-inner-product check. The exact argument does not establish an
exact update law for individual hidden states or explain why training
selects this representation.

\paragraph{The four-dimensional component and one-object tracking.}
In the two seven-dimensional LSTMs, the four-dimensional component
approximately groups cosets according to the location of one object.
The corresponding five group means form an approximate regular
4-simplex, a set of five equidistant vertices in four dimensions.
For example, in LSTM seed 42 at update 1,900, the normalized pairwise
inner products of these five means range from $-.30$ to $-.21$,
compared with $-1/4$ for a regular 4-simplex.

This shape also has an algebraic explanation, conditional on the
one-object grouping. Let $K\cong A_4$ fix that object. The
four-dimensional representation of $A_5$ acts by permuting coordinates
in $\{u\in\mathbb R^5:\sum_i u_i=0\}$. Its $K$-fixed vectors are multiples
of $e_j-\mathbf1/5$, whose orbit is a regular 4-simplex. Here $e_j$ is
the coordinate vector for the fixed object and $\mathbf1$ is the all-ones
vector. Equivalently, $K$ contains the identity, three double
transpositions, and eight 3-cycles, giving
\[
 \dim V_4^K=\frac{4+3\cdot0+8\cdot1}{12}=1,
 \qquad
 \dim V_3^K=\frac{3+3(-1)+8\cdot0}{12}=0.
\]
This additional grouping is substantive: $\dim V_4^{C_3}=2$, so
$C_3$-equivariance alone does not force a simplex in the four-dimensional
component. Both components are needed for the measured coset transfer,
as shown by the interventions below.

\subsection{State interventions and controls}
\label{app:coset-interventions}

At position 50, we form 2,000 ordered donor--recipient pairs with different
true cosets, sampled from the held-out sequence pool. Pairs can reuse
sequences. Let $s_r,s_d$ be the selected layer states and
$P_k=U_kU_k^\top$. Replacing the leading components means setting
\[
 s'_r=s_r+(s_d-s_r)P_k.
\]
The complementary intervention uses $I-P_k$ instead. Full-layer replacement
sets $s'_r=s_d$, while the no-intervention control leaves $s_r$ unchanged.
For LSTMs, the vector concatenates that layer's hidden and cell states, which
are split back after the edit; other layers remain the recipient's.

We then feed the recipient's remaining 50 inputs to the edited network.
For suffix product $w_j$ at offset $j$, the donor target is $H(q_dw_j)$ and
the recipient target is $H(q_rw_j)$. We compare the coset of the network's
full-state argmax with each target, averaging first over the 50 positions
and then over pairs. The two targets remain distinct under their common
suffix. The reported bars describe these paired measurements; repeated
positions and reused sequences are not independent experimental units.
Table~\ref{tab:coset-mechanism} gives every model and sampled time.

Replacing only the leading $k$ components gives donor agreement
.8997--.9990. The complementary replacement gives recipient agreement
.8908--.9990, while full-layer replacement gives donor agreement
.9158--.9995. The leading-component result is within .025 of full-layer
replacement in every evaluation. For the GRU in Fig.~\ref{fig:coset-state} (seed 42, update 6,000, fixing objects 0 and 2),
the four conditions give 97.5\% recipient agreement without an edit,
95.8\% donor agreement for the leading three directions, 95.7\% recipient
agreement for the complement, and 97.4\% donor agreement for full replacement.

We separately replace the two invariant components in the seven-dimensional
LSTMs at their middle sampled times. The three-dimensional component alone
gives donor agreement .162 and .167; the four-dimensional component alone
gives .205 and .206. Replacing both gives .994 and .997. Thus the
four-dimensional component is not an incidental addition to a sufficient
three-dimensional state. Additional controls on these same middle samples
compare raw-feature and standardized linear classifiers. Both decode cosets
accurately, but their weight subspaces have different intervention effects.
Raw-feature classifier directions transfer the coset, whereas unitwise
standardization can select low-variance directions with little causal effect.
We therefore do not use this comparison to claim a general separation
between linear probes and causal computation.

Two further checks use one LSTM and one GRU at their middle sampled times.
We construct prefixes ending in every one of the 60 group elements by
setting the last input to $q_{49}^{-1}g$, then continue with shared suffixes.
States with the same right coset have continuation agreement at least .995,
whereas agreement across cosets is at most .012. The identity class recovers
the same three-element subgroup without specifying its members in the test.
Replacing the selected layer with norm-matched Gaussian states or
coordinate-shuffled states gives only .04--.065 agreement with the donor's
coset, near the $1/20$ chance rate; self-replacement changes no logits.
These controls distinguish transfer of state content from an arbitrary
perturbation. Their coverage is two models, separate from the six-model
subspace measurements above.

\FloatBarrier

\section{Measurements for the main figures}
\label{app:figure-methods}

\subsection{The \texorpdfstring{$A_4$}{A4} output example}
Fig.~\ref{fig:quotient-output}(a--c) uses the raw recipe~A checkpoint trained with seed
42 for 74,219 updates. Its 12-element vocabulary gives 3,165,696 parameters.
The canonical evaluation uses 8,192 i.i.d.\ sequences of length 128, sampling seed
248041, FP32 evaluation and batches of 256; panels (a) and (b) retain positions
1--100. Natural-log diagnostic values are divided by $\ln 2$ to report bits.
In positions 17--100, mean KL is $2.3212\times10^{-4}$ bits and
$\Delta\mathrm{LL}=-2.5281\times10^{-4}$ bits.

A replay with the same checkpoint and sampling settings also exports
softmax probabilities. All 688,128 observations in positions 17--100 enter the
matrix, grouped by true state; no selection on quotient correctness is made.
Each row is the mean distribution over candidate elements conditional on a true
element. Both axes use the same ordering of the 12 elements in three actual
cosets of four. Within-coset entries range from 0.249395 to 0.250588; outside-coset
entries range from $4.6831\times10^{-5}$ to $9.2188\times10^{-5}$.
The mean probability outside the true coset is 0.0526916\%.
These are measured probabilities, without thresholding or replacement by an
ideal block matrix. The KL and log-likelihood diagnostics are
computed per example after restriction and renormalization to the true fiber,
not from the averaged matrix. Panel (b) plots the KL diagnostic; the
log-likelihood gain is reported in the text.

Replay checks find at most $1.03\times10^{-7}$ nats discrepancy in per-position KL,
$8.24\times10^{-8}$ nats in the likelihood difference, and two of 8,192 argmax
decisions at any position. Panels (a) and (b) preserve the canonical evaluation;
the output matrix uses the exported replay. GPU forward passes need not be
bit-identical across evaluations.

Panel (c) uses an additional census of eight complete input orbits at the same
checkpoint. Label is the target-specific term in
Eq.~\ref{eq:fiber-gradient-decomposition}; Net is the corresponding projected
cross-entropy term after adding Model. Their norms after averaging are divided
by the largest Label or Model norm on the measured grid at positions $\geq3$.
The exclusion of positions 1--2 keeps the repeated-input artifact above from
setting the scale. The ratio $R$ is computed within each orbit as
$\|\sum_k\ell_k\|/\sum_k\|\ell_k\|$ and then summarized by its median; it is
not divided by the common norm reference. The gray band is the min--max range
of this ratio across the six matched $\mathrm{SL}(2,3)$ and $S_4$ models.
The full nine-model profiles use 32 orbits and are reported separately in
App.~\ref{app:gradient-details}.
For the eight-orbit $A_4$ measurement in panel (c), the raw Net norm peaks at
44.6659 at position 5 and is .7272 at position 40, or 1.63\% of that peak.

\paragraph{The early accuracy dip.}
\label{app:early-dip}
Recipe~A uses rotary embeddings in queries and keys, with no additive position
embedding or prepended start token. On a constant prefix $(g,\ldots,g)$, all
value vectors are identical at each layer: normalized attention averages the
same vector regardless of its weights, and the pointwise updates preserve this
equality. The model therefore gives the same output at every position of that
prefix. Correctly predicting $g$ at position 1 forces an error on $(g,g)$ at
position 2 whenever $g\ne e$, whose target is $g^2$.

Enumerating all ordered pairs for all three $A_4$ seeds and all three $C_8$ seeds
finds exactly these errors, with every other pair correct. Their counts in the
canonical evaluation are 639/8,192 for $A_4$ and 912/8,192 for $C_8$, reproducing
the plotted accuracies of 92.20\% and 88.87\%. For a quotient $G/N$, the repeated
pair is wrong only when $g\notin N$; the $A_4/V_4$ count is 475/8,192, giving
94.20\% quotient accuracy. These are short-prefix errors of the specified
architecture, not evidence of the later within-fiber uncertainty. The reported
plateau measurements use positions 17--100.

Table~\ref{tab:other-groups} extends the main example to the other twelve groups.
Its selected completed endpoints differ from the budget-specific cohorts in
Table~\ref{tab:law}, which retain incomplete quotient channels.
Neither table should be read as a common-budget success rate for the
selected endpoints.
\begin{table}[htbp]
\centering
\small
\setlength{\tabcolsep}{4pt}
\begin{tabular}{lrrrrrrr}
\toprule
Group & Seed & Updates & Full (\%) & Quotient (\%) & $100/f$ & KL & $\Delta$LL \\
\midrule
$D_4$ & 43 & 148,438 & 50.063 & 99.948 & 50.000 & 0.313 & -0.329 \\
$Q_8$ & 42 & 148,438 & 49.910 & 99.927 & 50.000 & 0.230 & -0.251 \\
$S_3$ & 42 & 74,219 & 33.353 & 99.906 & 33.333 & 0.346 & -0.337 \\
$D_6$ & 44 & 148,438 & 33.364 & 99.968 & 33.333 & 0.236 & -0.179 \\
$\mathrm{Dic}_3$ & 44 & 74,219 & 33.235 & 99.933 & 33.333 & 0.593 & -0.667 \\
$D_5$ & 42 & 74,219 & 20.078 & 99.993 & 20.000 & 0.244 & -0.212 \\
$C_7\rtimes C_3$ & 42 & 74,219 & 14.240 & 99.926 & 14.286 & 0.311 & -0.311 \\
$\mathrm{SL}(2,3)$ & 42 & 74,219 & 12.494 & 99.938 & 12.500 & 0.383 & -0.428 \\
$S_4$ & 42 & 74,219 & 8.342 & 99.986 & 8.333 & 0.234 & -0.193 \\
$S_5$ & 42 & 74,219 & 1.671 & 99.966 & 1.667 & 0.438 & -0.397 \\
$C_8$ & 42 & 74,219 & 99.256 & 99.256 & 100.000 & -- & -- \\
$A_5$ & 42 & 74,219 & 1.658 & 100.000 & 1.667 & 0.058 & -0.050 \\
\bottomrule
\end{tabular}
\caption{The other 12 groups, supplementing the $A_4$ main figure. All values use positions 17--100 and all examples of the same 8,192-sequence FP32 evaluation protocol. KL and $\Delta$LL are in $10^{-3}$ bits. Endpoint selection for nontrivial proper quotients requires quotient accuracy $\geq99.9\%$, preferring the smaller eligible completed budget and then the smaller eligible seed; full-state accuracy is not used for selection. $C_8$ ($Q=G$) and $A_5$ (trivial $Q$) are controls, fixed to seed 42 at 74,219 updates. $C_8$ singleton-fiber diagnostics are omitted. This table describes selected endpoints, not success rates over seeds.}
\label{tab:other-groups}
\end{table}

\subsection{Finite and infinite Heisenberg curves}
Fig.~\ref{fig:quotient-output}(d,e) uses the original final evaluations of
$H_3(\mathbb Z/3)$ with full-group inputs and
$G_3=H_3(\mathbb Z)/\langle z^3\rangle$ with the finite alphabet
$\{-1,0,1\}^2\times\mathbb Z/3$. The former predicts an element index;
the latter predicts three coordinates. Both have abelianization classes
of size three. Each curve averages training seeds 42--44; bands show their
observed minimum and maximum at every position, without smoothing.
Evaluations use 8,192 length-128 sequences at seed 248041 with bf16 autocast;
the figure retains positions 1--100. These are the same evaluations used
for the corresponding rows of Table~\ref{tab:law}, rather than the separate
FP32 replay used for the $A_4$ example. Full configurations and per-run
results are in App.~\ref{app:heisenberg-law}.

\subsection{Matched training, probes, and gradients}
\paragraph{Matched models and behavior.}
The three groups use the same full-alphabet uniform i.i.d.\ input distribution,
four-layer NeoX backbone (width 256, four attention heads), training length 100,
batch size 256, and AdamW recipe (initial learning rate $5\times10^{-5}$,
zero weight decay, no warmup, linear decay over 74,219 updates).
All diagnostics use raw final weights from training seeds 42, 43, and 44;
checkpoint hashes are shared across the probe and gradient measurements.
Vocabulary-dependent parameter counts differ: 3,165,696 for $A_4$ and
3,171,840 for the other groups. The source records retain all 76 original
training-log evaluations and their original precision.
Quotient accuracy maps the full-state argmax prediction to its quotient class.

\paragraph{Linear probes.}
We apply the same fixed linear-probe protocol to all nine models. From 2,048 fresh i.i.d.\ 
sequences (sampling seed 500000), we use hidden states at positions 17--100,
splitting entire sequences into 1,638 training and 410 test sequences. A linear
softmax classifier is trained for 300 full-batch Adam updates at learning rate
0.01, with fixed initialization seeds 164 for fiber rank and 165 for quotient
class. Fiber rank is the sorted index of the exact state within its true
quotient class. The numbers of fiber and quotient classes are $(4,3)$,
$(8,3)$, and $(12,2)$. A within-sequence shuffled-label control uses shuffle seed
9999 and probe seed 166. Layer-0 and shuffled-control results are retained
with the figure data. This experiment constrains the specified linear
decoding task; it does not exclude other encodings or nonlinear decoders.

\paragraph{Gradient measurements.}
The same nine frozen models enter the gradient analysis in
App.~\ref{app:gradient-details}. That appendix gives the decomposition,
sample construction, normalization, numerical checks, and interpretation
together with Fig.~\ref{fig:matched-gradients}.

The gradient and direction diagnostics are defined together in
App.~\ref{app:gradient-details}.

\section{Protocols and endpoint coverage for Figure~\ref{fig:long-training-stages}}
\label{app:fig4-protocol}

Fig.~\ref{fig:training-context} preserves the original broader comparison.
Fig.~\ref{fig:s4-training} retains the $S_4$ Transformer--LSTM comparison,
including the dense LSTM replay through 3,000 updates. Its Transformer data
also underlie panels (a,b) of the main Fig.~\ref{fig:long-training-stages};
the $S_5$ panels use the distinct protocols in App.~\ref{app:nonsolvable-census}.
Fixed-checkpoint gradients remain in Fig.~\ref{fig:matched-gradients}, with
definitions and protocol in App.~\ref{app:gradient-details}.

\begin{figure}[htbp]
\centering
\includegraphics[width=\linewidth]{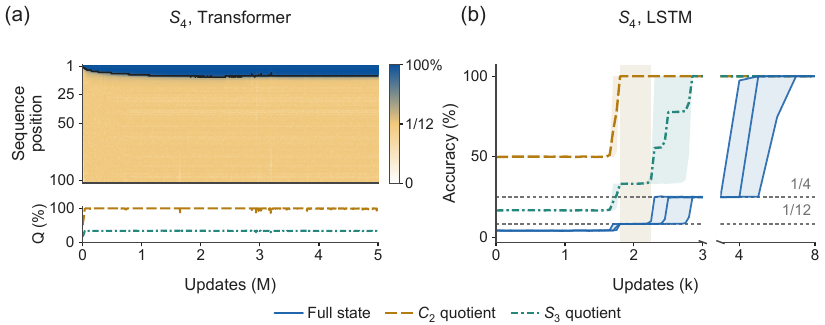}
\caption{\textbf{Quotient stages in the $S_4$ comparison.}
(a) The long Transformer run retained in the main figure.
(b) LSTM exact accuracy for individual seeds and quotient accuracy means
with seed ranges. The horizontal scale changes at 3k updates, where dense
replay gives way to the original logs. Beige shading marks the shared
$1/12$ stage; dashed references give $1/12$ and $1/4$.}
\label{fig:s4-training}
\end{figure}

\paragraph{The $S_4$ comparison.}
We use the complete five-million-update Transformer log for seed 43.
The four-layer, width-256 NeoX model uses four heads, full-group i.i.d.\
uniform inputs, training length 100, effective batch size 256, BF16 autocast,
and AdamW with constant learning rate $5\times10^{-5}$, no warmup or decay,
zero weight decay, and gradient clipping at 1. Evaluations use 1,024
length-128 sequences at update 1 and every 5,000 updates thereafter.
We retain all 1,001 evaluations without smoothing. The supplementary heatmap shows full-state
accuracy at positions 1--100. Its black curve gives the longest contiguous
prefix with exact accuracy at least .75, marking the last qualifying position.
The color scale is linear separately on $[0,1/12]$ and $[1/12,1]$, assigning
half the color range to each interval. The lower curves average $C_2$ and $S_3$ quotient accuracies over positions
17--100. Quotient correctness is measured by projecting the full-state argmax.
The exact frontier never exceeds
10 and ends at 9, so this window stays outside it. The final logged quotient and conditional
exact accuracies are 99.67\% and 8.28\%; the maximum $S_3$ accuracy across
the log is 34.05\%. Table~\ref{tab:long-finals} uses a separate terminal
evaluation with 8,192 sequences, so its last decimal places differ.

The other $S_4$ logs, seeds 44 and 45, end at 2,410,000 and 965,000 updates
after interrupted runs. Their last quotient accuracies are 99.97\% and
100.00\%, and conditional exact accuracies are 9.41\% and 8.81\%, respectively.
We do not extrapolate these shorter records or pool them into a five-million-update
seed range. The original $D_4$ trajectory remains in
Fig.~\ref{fig:training-context}a. The $S_4$ Transformer and LSTM panels illustrate
different learning outcomes; their schedules and numerical precision differ,
so the comparison does not isolate architecture as the cause.

\paragraph{Resolving the $S_4$ LSTM stages.}
For seeds 42--44, Fig.~\ref{fig:s4-training}b uses the existing dense
replays through update 3,000, with evaluations every 50 updates, followed by
the original logs at 1,000-update intervals through update 8,000. The replay
uses the same training script, initialization/data seeds, recipe, FP32 precision,
and 74,219-update learning-rate schedule; it stops at update 3,000. Both logs
evaluate 1,024 sequences on the same window. Across every shared evaluation
and all three channels, the largest absolute difference between replay and
original accuracies is .000884. The display separates the two records at
3,000 updates and changes the horizontal scale there; it does not omit a time
interval. Full-state curves retain individual seeds on both sides. Quotient
curves are seed means, and bands are seed ranges. We interpolate only between
recorded evaluations and apply no smoothing.

The $C_2$ stage requires quotient accuracy at least .99 and exact accuracy
within .05 of $1/12$, at an evaluation preceding the run's first exact accuracy
of .90. Without this ordering condition the criterion would also score a late
regression, in which a run that has already solved the task falls back toward
the class-size baseline while its quotient readout stays accurate. No run in
this replay window meets the criterion only after solving, so the count of 15
of 18 is the same with and without the condition.
The stage spans updates 1,700--2,800, 1,800--2,250, and
1,800--2,450 for seeds 42, 43, and 44, respectively, containing 23, 10,
and 14 dense evaluations. The beige interval is their intersection,
1,800--2,250, rather than their union. The original 1,000-update grid samples
each stage only once, at update 2,000.

The separate 2,048-sequence coset census used for the 15-of-18 count
(App.~\ref{app:blind-census}) gives slightly different boundary evaluations
from the 1,024-sequence accuracy replay above. In that census, the fifteen
stages differ widely in duration. On the 50-update grid they span 50 to 1,050 updates, with a median of 200, and
contain 2 to 22 evaluations, with a median of 5. Four rest on two
evaluations: $A_4$ at $5\times10^{-5}$ seed 44, $D_4$ at $10^{-3}$ seeds 43
and 44, and $S_4$ at $10^{-3}$ seed 43. The 1,000-update grid of the main
training logs places at most one evaluation inside any of the fifteen, and
none inside nine of them, so that grid cannot resolve a stage even where one
is present. We therefore report stage widths rather than requiring a minimum
duration. The denser display makes these
transient stages visible without averaging their different departure times
into a single full-state curve. These readouts do not establish within-class
softmax uniformity during the transient stages.

Panel letters below refer
to Fig.~\ref{fig:training-context}.

\begin{figure}[htbp]
\centering
\includegraphics[width=\linewidth]{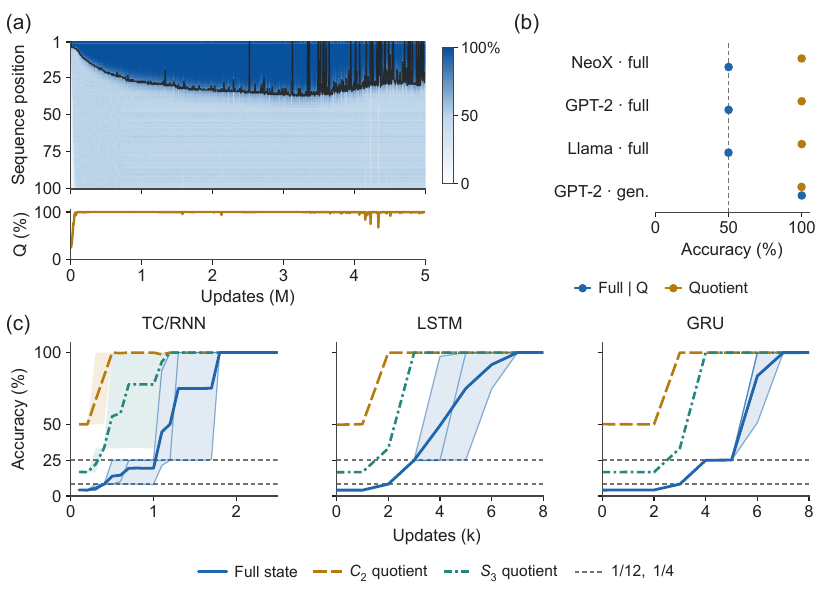}
\caption{\textbf{Additional training and architecture comparisons.}
(a) The original $D_4$ long-training display. (b) $D_4$ endpoints at positions
70--100: quotient accuracy and exact accuracy conditional on a correct
quotient, under different budgets and recipes. (c) Original $S_4$ trajectories
for a bilinear RNN (TC/RNN), LSTM, and GRU. Lines and bands show means and
ranges across three seeds; thin blue lines retain individual full-state runs.
TC/RNN selects the most probable class after summing its members' probabilities;
LSTM and GRU project the full-state argmax.}
\label{fig:training-context}
\end{figure}

Each panel reports a distinct experiment. In panel (a), the four-layer,
width-256 NeoX model uses full-alphabet i.i.d.\ 
$D_4$ inputs, batch size 256, training length 100, and a constant learning rate
$5\times10^{-5}$ without warmup or decay. Seed 45 is evaluated at step 1 and every
5,000 updates thereafter, with 1,024 evaluation sequences per log entry. The
heatmap shows positions 1--100. Its contiguous-prefix curve is a descriptive
readout, not a fitted growth law; a low-accuracy hole at an earlier position can
reduce that prefix even when later positions remain accurate. The lower trace
averages quotient accuracy over positions 17--100.

Panel (b) uses 8,192 sequences per terminal evaluation and fixes the window to
positions 70--100 in every row. The baseline recipe (A in
Table~\ref{tab:fig4-endpoints}) has four layers, width 256, batch size 256, AdamW
at $5\times10^{-5}$, and linear learning-rate decay without warmup; raw weights
are evaluated. The Liu replication recipe (B) uses a three-layer, width-512 GPT-2,
batch size 16, sinusoidal positional embeddings, AdamW initially at $10^{-4}$,
and a plateau scheduler; EMA weights are evaluated. The full-alphabet GPT-2 in
recipe A instead uses learned positional embeddings. See App.~\ref{app:config}
for the full model specifications. Thus the two GPT-2 rows are not an isolated
input-alphabet ablation.

For each run, the evaluator projects the full argmax onto the quotient, so exact
correctness implies quotient correctness. We sum the exact and quotient correct
counts over the window, take their ratio, and then average these per-run ratios
over seeds. The bars are observed seed ranges, not confidence intervals or
temporal variability. The main panel retains both existing NeoX seeds at the
148,438-update budget (43 and 44), and all three seeds (42--44) in each other
displayed condition, without an accuracy threshold. Conditions were selected
after inspecting results; the panel is not an exhaustive success-rate estimate.
The generator endpoints reach full-state accuracy. Terminal evaluations exist for all three generator seeds,
but complete local training logs are available only for seed 42; the endpoint
summary does not establish temporal stability for all three.

\begin{table}[!htbp]
\centering
\small
\begin{tabular}{@{}llrrrr@{}}
\toprule
Model & Input / recipe & Updates & Seeds & $Q$ (\%) & Full $\mid Q$ (\%) \\
\midrule
NeoX & full / A & 74,219 & 3 & 81.727 & 50.043 \\
NeoX & full / A & 148,438 & 2 & 99.932 & 49.987 \\
GPT-2 & full / A & 74,219 & 3 & 99.945 & 49.990 \\
Llama & full / A & 74,219 & 3 & 99.990 & 49.902 \\
NeoX & gen. / A & 74,219 & 3 & 72.575 & 50.015 \\
GPT-2 & full / B & 312,500 & 3 & 94.214 & 49.959 \\
GPT-2 & gen. / B & 312,500 & 3 & 99.857 & 99.999 \\
\bottomrule
\end{tabular}

\caption{All 20 available candidate endpoints considered for Fig.~\ref{fig:training-context}b,
grouped by condition and budget. Entries are seed means on positions 70--100.
Shorter budgets and partial quotient channels are retained. The table uses the
same evaluator and per-run ratio as the main panel; budgets are not pooled.}
\label{tab:fig4-endpoints}
\end{table}

Panel (c) uses the original $S_4$ logs for seeds 42--44 in each recurrent family.
The TC bilinear RNN uses fixed 64-dimensional input encodings, hidden dimension
128, batch size 256, Adam at $10^{-3}$, and 20,000 updates. LSTM (two layers,
width 444) and GRU (two layers, width 512) use the matched baseline recipe at
$5\times10^{-5}$, with learned embeddings and 74,219 updates, evaluated in FP32
every 1,000 updates. All plotted metrics average positions 17--100. The display
ends at 2,500 updates for TC and 8,000 for LSTM/GRU; full-state accuracy remains
at least $99\%$ at all subsequent logged evaluations in these nine runs.
TC quotient readouts sum candidate probabilities within each coset before taking
the argmax, whereas LSTM/GRU project the full argmax. Their quotient accuracies
are therefore different readouts, and the conditional ratio in panel (b) is not
applied to TC. The runs in Fig.~\ref{fig:training-context}c use only their original logs.

For $S_4$, the two displayed quotient images are
$S_4/[S_4,S_4]\cong C_2$ and $S_4/V_4\cong S_3$, with kernel sizes 12 and 4.
The reference floors are consequently $1/12$ and $1/4$. Acquisition times vary:
TC seed 44 has no resolved $C_2$-only stage on its recorded grid. A mean curve
can blur or interpolate the individual plateaus, which is why individual
full-state curves and seed ranges are retained. These accuracy trajectories
alone do not establish per-sample softmax uniformity at the transient stages.

\section{Additional curves under their original experimental setups}
\label{app:legacy-figures}

These supporting plots retain their original data and setup-specific captions.
They are separate from the matched final checkpoints in Figs.~\ref{fig:quotient-output}
and~\ref{fig:matched-gradients}. The run-level law plot retains the original census,
including incomplete quotient channels; it is distinct from the selected endpoints
in Table~\ref{tab:other-groups}.
Figure~\ref{fig:frontier} shows how the quotient and exact channels separate
over training and compares the original run-level plateaus with the class-size
prediction. Figure~\ref{fig:decay} connects an intermediate quotient to the
loss of within-class predictive gain beyond the frontier.
Figure~\ref{fig:levers} shows how depth changes tracking on generator inputs
and how accuracy deteriorates beyond the training horizon.

\begin{figure}[!htb]
\centering
\includegraphics[width=\linewidth,trim=0 10bp 0 0,clip]{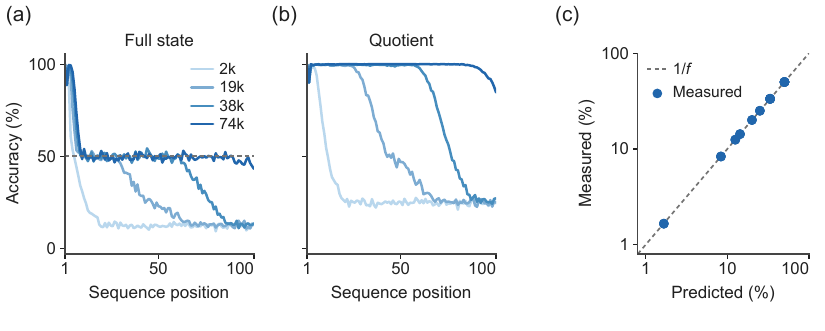}
\caption{(a, b) Accuracy against position for $D_4$ under recipe~A at four
checkpoints (2k, 19k, 38k and 74k steps), for the exact channel (a) and the
quotient channel (b). The dashed line in the exact panel is $1/|[G,G]|$. The
quotient channel becomes accurate throughout the horizon as training proceeds, while the exact
channel advances only a few positions. (c) The measured plateau against
$1/|[G,G]|$ for
every run in the original census with a recorded quotient readout; the updated
$S_3$ evaluations in Table~\ref{tab:law} are not included. Both axes in (c) are logarithmic
and report percentages;
the diagonal is the reference prediction, with no fitted parameters.}
\label{fig:frontier}
\end{figure}

\begin{figure}[!htb]
\centering
\includegraphics[width=\linewidth,trim=0 10bp 0 0,clip]{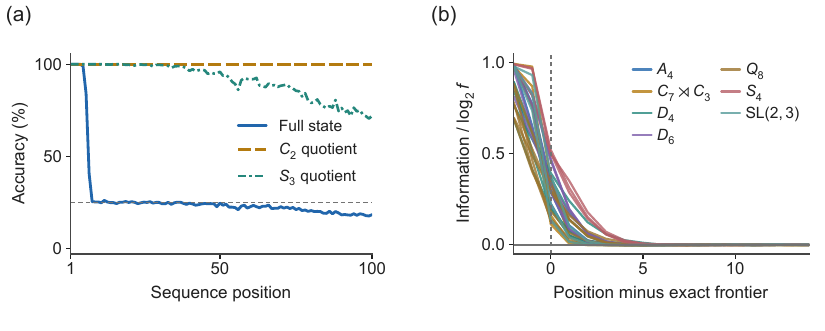}
\caption{(a) A run that stops at an intermediate quotient. On $S_4$ under narrowed
support, the $S_3$-image accuracy is near 100\% through position 20 and
falls to 72\% at position 100, while the exact frontier is 6. The residual inside the corresponding normal subgroup sits at
$1/|V_4|$. (b) Decrease in within-fiber information near the exact frontier. Each curve is
one of 28 checkpoints and plots $\Delta\mathrm{LL}/\log_2 f$,
with $\Delta\mathrm{LL}$ defined in App.~\ref{app:readouts}.
Curves are aligned so that zero denotes the first position after the
contiguous exact prefix.
Values near zero indicate no mean log-score gain over uniform prediction.}
\label{fig:decay}
\end{figure}

\begin{figure}[!htb]
\centering
\includegraphics[width=\linewidth]{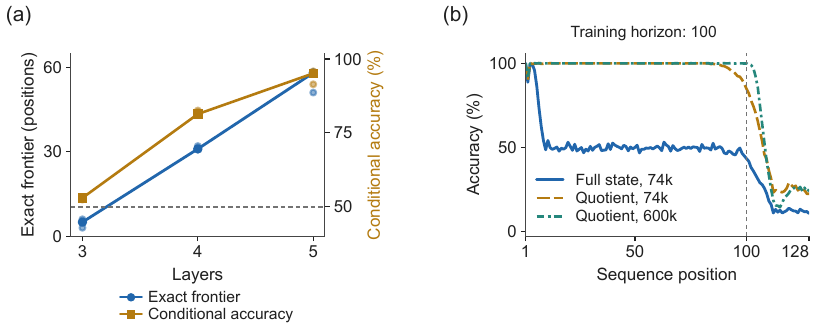}
\caption{(a) On the generator stream (recipe~B, NeoX backbone with rotary
embeddings) additional depth extends the exact frontier and improves conditional accuracy; the dashed
line is $1/|[G,G]|$ for $D_4$. (b) Past the training horizon of 100 positions both exact and quotient accuracy decline.}
\label{fig:levers}
\end{figure}

\section{Computing the order-blind ceiling}
\label{app:ceiling}

Proposition~\ref{prop:ceiling} bounds the exact accuracy of every order-blind model by
$C_t = \mathbb{E}_M[\max_g \Pr(q_t = g \mid M)]$. Here the numerical calculations
use finite groups and independent uniform full-group inputs. We compute or
estimate $C_t$ while enumerating each multiset's orderings exactly. Write
$D_M$ for the law of the product of a uniformly random ordering of the multiset $M$.
Removing the last factor gives the recursion
$D_M(g) = \sum_{x} \frac{m_x}{|M|}\, D_{M - x}(g\,x^{-1})$. We run this recursion over
all multisets of size $t$ in increasing $t$, weighting each $M$ by its multinomial
probability. For $S_3$, $D_4$ and $Q_8$ this is exact over multisets and over orderings through
$t=17$. For $D_5$ it is exact through $t=14$; for $D_6$, $\mathrm{Dic}_3$ and $A_4$
through $t=12$; for $C_7\!\rtimes\!C_3$, $\mathrm{SL}(2,3)$ and $S_4$ through $t=8$.
Beyond those lengths the number of multisets is too large to enumerate, and at $t=17$
we run the same exact recursion over the sub-multisets of each of 100 sampled multisets and report the mean
with its standard error. This mean is an unbiased estimate of $C_t$ (an estimator
that samples orderings would be biased upward by the maximum). The enumerator carries
a brute-force self-test against all $|G|^t$ sequences at small $t$.
Table~\ref{tab:ceiling} lists the excess $C_t - 1/|[G,G]|$ and the probability mass of
multisets whose orderings do not cover the whole coset. This mass is positive at
every finite $t$ for the non-abelian groups considered, as shown by the
all-identity multiset. The enumeration quantifies its finite-length contribution.

\begin{table}[ht]
\centering
\small
\begin{tabular}{lrrrrrrr}
\toprule
$G$ & $f$ & $t=3$ & $t=4$ & $t=8$ & $t=12$ & $t=17$ & non-sweep mass \\
\midrule
$S_3$ & 3 & 0.3009 & 0.2130 & 0.0516 & 0.0120 & 0.0019 & 8.4e-06 ($t=17$) \\
$D_4$ & 2 & 0.2188 & 0.1484 & 0.0323 & 0.0070 & 0.0010 & 2.3e-05 ($t=17$) \\
$Q_8$ & 2 & 0.2188 & 0.1484 & 0.0323 & 0.0070 & 0.0010 & 2.3e-05 ($t=17$) \\
$D_6$ & 3 & 0.3009 & 0.2130 & 0.0516 & 0.0120 & 0.0027$\pm$6.8e-04 & 3.9e-04 ($t=12$) \\
$\mathrm{Dic}_3$ & 3 & 0.3009 & 0.2130 & 0.0516 & 0.0120 & 0.0031$\pm$9.4e-04 & 3.9e-04 ($t=12$) \\
$A_4$ & 4 & 0.2500 & 0.1568 & 0.0200 & 0.0023 & 1.6e-04$\pm$5.2e-05 & 1.1e-05 ($t=12$) \\
$D_5$ & 5 & 0.3500 & 0.2480 & 0.0614 & 0.0142 & 0.0021$\pm$3.6e-04 & 6.7e-05 ($t=14$) \\
$C_7\!\rtimes\!C_3$ & 7 & 0.2902 & 0.1799 & 0.0244 & -- & 1.5e-04$\pm$1.8e-05 & 1.2e-03 ($t=8$) \\
$\mathrm{SL}(2,3)$ & 8 & 0.3507 & 0.2457 & 0.0550 & -- & 0.0020$\pm$3.5e-04 & 1.9e-02 ($t=8$) \\
$S_4$ & 12 & 0.3244 & 0.2066 & 0.0323 & -- & 7.1e-04$\pm$2.7e-04 & 1.7e-02 ($t=8$) \\
\bottomrule
\end{tabular}

\caption{Excess of the exact order-blind ceiling over $1/|[G,G]|$, by prefix length.
Entries at $t=17$ with a standard error are Monte Carlo over 100 multisets with exact
orderings; all other entries are exact over all multisets. The last column is the
probability mass of multisets whose orderings do not sweep their coset, at the largest
exactly enumerated length.}
\label{tab:ceiling}
\end{table}

\clearpage
\section{Recovering central coordinates}
\label{sec:recovery}

We now investigate how models recover state distinctions beyond the
abelianization. Prior work shows that models can reach exact state tracking when updates are drawn from a small generating set rather than uniformly from the full group \citep{liu2023shortcut,li2025state}. We therefore use generator-restricted inputs to study how models refine quotient-level solutions. The prediction task remains unchanged: at every position, the model predicts the running product. These input sets are different from the
full-group baseline in Sec.~\ref{sec:law}.

\paragraph{A finite-group example.}
\label{sec:crossing}
On $D_4$, let $r$ denote a quarter-turn and $s$ a reflection. With inputs
restricted to $\{r,s\}$, three parities determine the full state: the total
number of $r$ tokens, the total number of $s$ tokens, and the number of $r$
tokens at even input positions. For example, $rs$ and $sr$ contain the same
tokens but place $r$ at different position parities and produce different
states. We recover these counts with probes and test their contributions
through activation replacement (App.~\ref{app:support-details},
Eq.~\ref{eq:generator-three-parities}). We next study a task requiring
finer order information than these parity counts.

\subsection{Tracking beyond odd/even counts}
\label{sec:heisenberg-extension}

We use the integer Heisenberg group $H_3(\mathbb Z)$, whose multiplication is
\[
(x,y,z)(u,v,w)=(x+u,y+v,z+w+xv).
\]
Each input $g_i=(u_i,v_i,0)$ is one of $a=(1,0,0)$, $a^{-1}=(-1,0,0)$,
$b=(0,1,0)$, or $b^{-1}=(0,-1,0)$. Starting from $(0,0,0)$,
write the running product as $(X_t,Y_t,Z_t)$. The first two coordinates are
the net counts of $a$ and $b$ and give the abelianization, while $Z_t$ retains input
order. For example, $a^4b^4$ and $b^4a^4$ have identical token counts at both odd
and even positions, but give $(4,4,16)$ and $(4,4,0)$ respectively.

To test whether Transformers distinguish such products, we train three four-layer models at learning rate $10^{-3}$.
We predict $x$, $y$, and $z$ with separate classification heads and minimize 
the sum of their cross-entropies at every position. We find models reach
98.07--98.17\% joint state accuracy (all three coordinates correct) over positions 17--100, and
92.81--93.74\% at position 100. For separate, coordinate-controlled pairs
that preserve odd/even token counts while changing $z$, both predictions
are correct in 96.9\% of pairs, averaged over seeds. We thus conclude that
these models distinguish products that knowing odd/even counts alone cannot resolve
(App.~\ref{app:heisenberg}).

\paragraph{Ordered prefix counts.}
We use the group update to identify information needed for $z$ prediction.
Multiplying by $(u_i,v_i,0)$ adds $u_i$ to $X$, $v_i$ to $Y$,
and $X_{i-1}v_i$ to $Z$. Summing these updates gives
\begin{equation}
X_t=\sum_{i\le t}u_i,\qquad Y_t=\sum_{i\le t}v_i,\qquad
Z_t=\sum_{i\le t}X_{i-1}v_i.
\label{eq:heisenberg-prefix}
\end{equation}
For example, $aa$ gives $(2,0,0)$, multiplying by $b$ gives $aab=(2,1,2)$, whereas
appending $b^{-1}$ gives $(2,-1,-2)$. Thus $a$ changes the count that a later
$b$ or $b^{-1}$ uses to update $Z_t$. 

We first test whether the running count $X_t$ is represented in the model.
A single linear readout per model recovers $X_t$ across sampled positions
16--100 from normalized second-block outputs, with $R^2=.994$--$.999$.
Subtracting the known current increment $u_t$ gives the preceding count
$X_{t-1}$ used in the update of $Z_t$
(App.~\ref{app:heisenberg-shared-counts}).

Next, we ask whether count-associated activations influence predictions of 
the central coordinate. At position \(j\), we regress the concatenated outputs 
of the first-layer attention heads on \(X_{j-1}\), \(Y_j\), \(Z_j\), 
the total count of $a$ and $a^{-1}$, and the current token. The fitted coefficient 
\(\beta_x\) is the activation-space direction associated with a one-unit 
increase in the preceding net \(a\)-count after controlling for the other 
predictors. On held-out inputs, we replace \(h_j\) by \(h_j+\delta\beta_x\), 
leave the tokens and model weights unchanged, and run the rest of the network. 
We then compare the predicted mean \(z\) with and without the perturbation
(App.~\ref{app:heisenberg-encoding-fit}).

Figure~\ref{fig:heisenberg-count-interventions}a shows the change in predicted
mean $z$, divided by $\delta$, for each current token. We find positive effects
at $b$ and negative effects at $b^{-1}$, consistent with $v_i$ in
Eq.~\ref{eq:heisenberg-prefix}. Mean responses at $a^{\pm1}$ remain closer
to zero (App.~\ref{app:heisenberg-a-response}).

Since we have shown that perturbing count-associated activations at one 
position changes the predicted \(z\), we next ask whether the same perturbation 
at two positions with $b^{\pm1}$ has additive effects, corresponding to the 
last identity in Eq.~\ref{eq:heisenberg-prefix}. For example, if the two 
individual interventions change the prediction by \(+2.0\) and \(+1.8\), 
additivity will imply a joint prediction change of \(+3.8\). 
Figure~\ref{fig:heisenberg-count-interventions}b compares this predicted 
sum with the measured joint change. The 18 paired effects broadly follow
the additivity line, with a maximum deviation of .30 units of $z$.
The individual responses vary across positions and models, so mixed
$b,b^{-1}$ pairs need not cancel. This comparison tests whether measured
effects add; the fitted perturbations need not have unit gain.
\begin{figure}[!htbp]
\centering
\includegraphics[width=\linewidth]{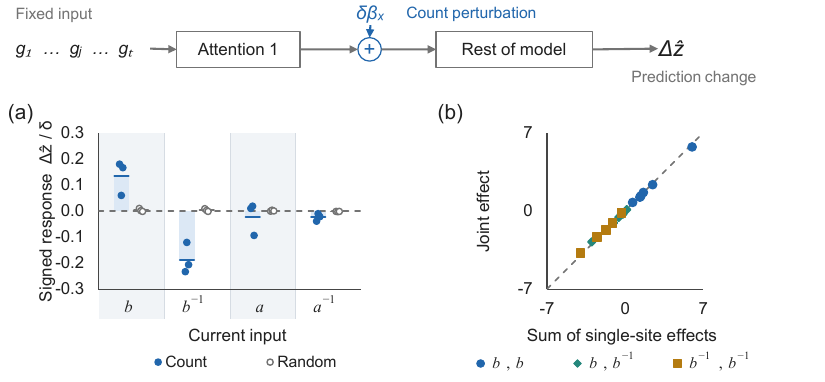}
\input{figures/fig4_count_interventions_caption}
\label{fig:heisenberg-count-interventions}
\end{figure}

\FloatBarrier

\section{Configurations}
\label{app:config}

\paragraph{Recipe A.} GPT-NeoX causal transformer \citep{black2022neox},
4 layers, width 256, 4 heads,
rotary position embedding \citep{su2024rope} on a quarter of each head,
3{,}163{,}648 parameters; token
alphabet is the whole group; training length 100; AdamW
\citep{loshchilov2019adamw}, learning rate
$5\times10^{-5}$, weight decay 0, linear decay to zero with no warmup, batch 256,
74{,}219 steps, bf16, no dropout, no weight averaging. Evaluation every 1{,}000 steps
on 1{,}024 sequences of length 128; final evaluation on 8{,}192 sequences at seed
248041 (16{,}384 for the $S_5$ anchor). Seeds 42, 43, 44 unless stated. The
constant-learning-rate arms of \S\ref{sec:levers} keep the recipe and hold the rate at
$5\times10^{-5}$ for 600k steps, with one historical $D_4$ trajectory
and four additional runs continued to 5M updates (App.~\ref{app:long-finals}). The doubled-budget cells of
Table~\ref{tab:law} run the same recipe for 148{,}438 steps. Settlement window:
positions 17--100.

\paragraph{Recipe B.} Replication of \citet{liu2023shortcut}, App.~B: GPT-2
block with fixed sinusoidal position embeddings and a separate linear state head, 3
layers (4 in one arm), width 512, 8 heads, MLP width 512, dropout .1; training length
100, evaluation length 128; AdamW at $10^{-4}$ (their grid is
$\{3\times10^{-5}, 10^{-4}, 3\times10^{-4}\}$), weight decay $10^{-4}$, batch 16,
5M sequences (312{,}500 steps), fp32, gradient clip 1, weights evaluated through an
exponential moving average with decay .9; we implement the incompletely specified patience scheduler with
ReduceLROnPlateau (factor .5, patience 6 evaluations of 5{,}000 steps) on
the EMA all-position accuracy. Token alphabet is the generator set $\{r,s\}$ in the
generator-input runs, the full group in the full-alphabet comparison, and a union of complete cosets in
the support-axis and ladder arms ($[G,G]$-cosets throughout, except the $C_3$-cosets
of $C_9$). Settlement window: positions
9--100, with frontier thresholds as in recipe~A.

\paragraph{Comparison C.} The architecture and training settings of recipe~B on the full alphabet of
$D_4$, in four cells: GPT-2 or GPT-NeoX block, with fixed sinusoidal position
embeddings or full-head rotary embeddings and no additive position embedding;
312{,}500 steps, 8{,}192-sequence final evaluation, three seeds per cell.

\paragraph{Recurrent control.} A weight-shared bilinear recurrence
$z_t = u_t / (1 + \mathrm{RMS}(u_t))$ with $u_t = (z_{t-1} W_z) \odot (\varphi(x_t) W_x)$,
state dimension 128, $\varphi$ a frozen random orthonormal encoding of dimension 64
(32 for $D_9$ and $S_3\times C_3$, whose runs predate the wider groups), linear
readout to $G$ and to each intermediate quotient (coset probabilities summed before
the argmax); Adam at $10^{-3}$, gradient clip 1, batch 256, training length 100,
full-alphabet i.i.d.\ inputs, 20k steps, evaluation every 100 steps to step 2{,}000
and every 500 thereafter on 1{,}024 sequences, final evaluation on 8{,}192 sequences;
groups $D_8$, $Q_{16}$, $S_4$, $D_{15}$, $D_{27}$, $C_9$, $D_9$, $S_3\times C_3$,
$D_{21}$, $D_{25}$; seeds 42, 43, 44 in the confirmatory round. The frequency battery
adds ten seeds per group at a 5k-step budget (a truncation whose class agrees with the
full history in 30 of 30 earlier runs) and a $2\times2$ crossing of initialization
seed against data seed on $D_9$ and $D_{27}$ at 20k steps. A stage is recorded when
some quotient's accuracy over positions 17--100 reaches .95 while exact accuracy is at
most .5. Its accuracy is checked against $1/|N|$ for the kernel $N$ of the quotients
reached, within $\pm .04$.

\paragraph{160M depth ladder.} GPT-NeoX/Pythia-160M architecture (hidden 768, 12
heads, MLP 3072, rotary on a quarter of each head, parallel residual, untied
embeddings, vocabulary 50283 after the released resize) at 3, 4, 7 and 12 blocks
(98.5M, 105.6M, 126.9M and 162.3M parameters); for each seed one 12-block master is
initialized from scratch and every shallower model is its exact block-prefix, with
shared components checked by hash across depths; $S_3$, i.i.d.\ uniform over all six
elements, length 100, cross-entropy at every prefix; fp32, AdamW at $5\times10^{-5}$,
weight decay 0, no warmup, linear-decay horizon 148{,}440, gradient clip 1,
micro-batch 32 with accumulation 4, 10{,}000 steps; evaluation on 1{,}024 fixed
sequences (seed 248041) every 500 steps with exact and parity frontiers defined as the
first position below .98; formal seeds 2901--2908, bridge seeds 42--44 reported
separately.

\paragraph{$S_5$ depth census.} The recipe~A architecture at 4, 8 and 12 layers (width
256, 4 heads), $S_5$ full alphabet, training length 100 and evaluation length 200,
fp32 master weights with bf16 autocast, effective batch 256, 74{,}219 steps with
linear decay; 36 paired (initialization, data-order) seed pairs reused across depths;
evaluation on 4{,}096 fixed sequences every 1{,}000 steps. A run is parity-first when
its parity frontier (first position below .98) is at least 20 and at least 20
positions ahead of the exact frontier. A run is associative-like when the exact
frontier is at least 20 and the two frontiers are within 5 of each other.

\paragraph{Pythia-160M arms.} We train from scratch under the recipe of
\citet{li2025state} for $S_3$ on the Pythia-160M architecture, using AdamW at
$5\times10^{-5}$, their schedule and batch size, and fp32. We run three seeds
from scratch and one from the pretrained checkpoint, and classify the outcomes
at 10k steps using their cutoff rule. Sequences are
sampled online, i.i.d.\ uniform, which matches the distribution of their released
data (1M sequences, 900k for training) but not its repetition over 20 epochs. The
within-fiber entropy, coset mass and orbit statistics of \S\ref{sec:fiber} and
App.~\ref{app:gradient-details} are read on the one from-scratch seed that entered the
parity-first regime.

\paragraph{Looped transformer (one-seed pilot).} Two shared
pre-norm GELU transformer blocks applied for $L$ loops, with no position encoding of
any kind, width 128, 4 heads, MLP width 512, no dropout, 398{,}848 parameters; $D_4$ full alphabet, curriculum over lengths 2,
4, 8 and 16 with loop count equal to length and stage caps of 500, 1{,}000, 1{,}500 and
3{,}000 steps; AdamW at $3\times10^{-4}$, weight decay .01, gradient clip 1, batch
128, bf16; evaluation at lengths 16 and 32 for up to 40 loops on 2{,}048 sequences;
seed 42. The run was classified as \textsc{other} because no stage reached its .98
promotion threshold; it is excluded from the confirmatory evidence. The number of accurate positions increased by about one per loop, with position 2 a persistent
error. No region was both correct on the quotient and flat inside the fiber.

\paragraph{Expert-split MLP (one-seed pilots).} A
function-preserving split of each recipe~A MLP into two routed halves. Continued from
the 148{,}438-step $D_4$ seed-43 weights for 20k steps against a dense continuation,
both arms have frontier 6 and conditional accuracy .498. Trained from scratch at horizon
40 for 40k steps against a dense twin, the respective frontiers are 11 and 6, while
conditional accuracy over positions 17--40 is .502--.505. These pilots are excluded from the confirmatory comparisons. A learned
mixture-of-experts readout and a split
readout, both trained on the frozen weights, achieve .500--.506 at positions 40, 70 and
100.

\paragraph{Pythia-70M prelude (exploratory).} Two arms were run at 10k steps on $S_3$
with the recipe above. From scratch, exact and parity accuracy first fall below .98
at positions 9 and 91 (a quotient-first run). From the pretrained checkpoint, both
fall at position 8, with parity accuracy .529 over positions 8--100. This run
retains only a short accurate prefix.

\section{Coverage of architectures and training supports}
\label{app:coverage}

The tables below catalog the experimental packages underlying this draft.
``Completed'' denotes a package evaluated under its fixed criteria, not
asymptotic convergence. Pilot and exploratory results document coverage and
are excluded from the confirmatory comparisons. The matched two-layer LSTM
and GRU studies include 18 runs each on $D_4$, $A_4$, and $S_4$.
The Mamba-2 study adds 18 two-layer runs on these groups and three
four-layer $D_4$ runs (App.~\ref{app:ssm}). No Markov-input or power-law-input
results are included here.
The repeated-epoch run with the released $S_5$ pipeline remained at chance
under bf16 and is excluded from the mechanism comparison; the 160M
experiments reported here sample inputs online.

The additional five-million-update runs are listed in
App.~\ref{app:long-finals}; the three integer Heisenberg models and their
probe/intervention coverage are specified in App.~\ref{app:heisenberg}.

\begingroup\footnotesize
\setlength{\tabcolsep}{5.5pt}
\setlength{\LTleft}{\fill}
\setlength{\LTright}{\fill}
\setlength{\LTcapwidth}{\linewidth}
\begin{longtable}{>{\raggedright\arraybackslash}p{0.28\textwidth}>{\raggedright\arraybackslash}p{0.16\textwidth}>{\raggedright\arraybackslash}p{0.10\textwidth}>{\raggedright\arraybackslash}p{0.19\textwidth}>{\raggedright\arraybackslash}p{0.13\textwidth}}
\caption{Architecture coverage, part 1 of 3. Status refers to the evaluation package; completed packages need not reach full tracking. All listed outcomes and seed counts are retained.}
\label{tab:architecture-coverage-1}\\[4pt]
\toprule
Architecture & Recipe / measure & Groups; seeds & Outcome & Status; location \\
\midrule
\endfirsthead
\multicolumn{5}{l}{\small\tablename\ \thetable\ (continued)}\\[4pt]
\toprule
Architecture & Recipe / measure & Groups; seeds & Outcome & Status; location \\
\midrule
\endhead
\midrule
\multicolumn{5}{r}{\footnotesize Continued on next page}\\
\endfoot
\bottomrule
\endlastfoot
GPT-NeoX 4L, $d$=256, 4 heads, quarter rotary (recipe~A) & A; full alphabet & 13 groups; 3 & plateau law & completed; \S\ref{sec:law} \\[2pt]
Recipe~A at 2 and 6 layers & A; full alphabet & $D_4$; 3+3 & ratio .4986--.5034; frontier 2--3 at two layers, 5--12 at six & completed; \S\ref{sec:levers} \\[2pt]
Recipe~A at 2/3/5/6 layers, widths 364/296/228/208 (parameter-matched) and 256 (shared grid) & A; full alphabet & $D_4$; 3 each & ratio .4991--.5017, frontier 3--8, no escape & completed; \S\ref{sec:levers} \\[2pt]
Recipe~A at 2 and 6 layers, and at the doubled budget & A; full alphabet & $A_4$, $\mathrm{SL}(2,3)$; 3 each & ratio $1/f\pm.0011$; frontier $+3$/$+5$ on $A_4$, $+2$/$+1$ on $\mathrm{SL}(2,3)$ & completed; \S\ref{sec:levers} \\[2pt]
Recipe~A at 4, 8, 12 layers & A; full alphabet & $S_5$; 36 pairs & parity-first or no progress; 0 of 36 associative & completed; \S\ref{sec:levers} \\[2pt]
Recipe~A at $d$=512, 8 heads & A; full alphabet & $D_4$; 3 & plateau, ratio .4998--.5013 & completed; \S\ref{sec:levers} \\[2pt]
Recipe~A + sinusoidal absolute PE & A; full alphabet & $D_4$; 3 & slower quotient, ratio .4995--.5011 & completed; \S\ref{sec:levers} \\[2pt]
Recipe~A backbone under recipe~B optimizer ($10^{-4}$; $3\times10^{-4}$) & full alphabet & $D_4$; 3+3 & plateau (.499--.500); collapse to uniform & completed; App.~\ref{app:support-details} \\[2pt]
GPT-2 block, learned absolute PE, 4L/256 & A; full alphabet & $D_4$; 3 & frontier 15--24, ratio .4993--.5008 & completed; \S\ref{sec:levers} \\[2pt]
Llama block (full rotary, SwiGLU, RMSNorm), 4L/256 & A; full alphabet & $D_4$; 3 & frontier 18--20, ratio .4977--.4997 & completed; \S\ref{sec:levers} \\
\end{longtable}
\endgroup

\begingroup\footnotesize
\setlength{\tabcolsep}{5.5pt}
\setlength{\LTleft}{\fill}
\setlength{\LTright}{\fill}
\setlength{\LTcapwidth}{\linewidth}
\begin{longtable}{>{\raggedright\arraybackslash}p{0.28\textwidth}>{\raggedright\arraybackslash}p{0.16\textwidth}>{\raggedright\arraybackslash}p{0.10\textwidth}>{\raggedright\arraybackslash}p{0.19\textwidth}>{\raggedright\arraybackslash}p{0.13\textwidth}}
\caption{Architecture coverage, part 2 of 3. Status refers to the evaluation package; completed packages need not reach full tracking. All listed outcomes and seed counts are retained.}
\label{tab:architecture-coverage-2}\\[4pt]
\toprule
Architecture & Recipe / measure & Groups; seeds & Outcome & Status; location \\
\midrule
\endfirsthead
\multicolumn{5}{l}{\small\tablename\ \thetable\ (continued)}\\[4pt]
\toprule
Architecture & Recipe / measure & Groups; seeds & Outcome & Status; location \\
\midrule
\endhead
\midrule
\multicolumn{5}{r}{\footnotesize Continued on next page}\\
\endfoot
\bottomrule
\endlastfoot
NeoX 4L/256, no position embedding & A; full alphabet & $D_4$; 1 & frontier 5, quotient unfinished, ratio .5002 & completed; \S\ref{sec:levers} \\[2pt]
Recipe~A on the generator stream & A; $\{r,s\}$ & $D_4$; 3 & plateau, ratio .499--.516 & completed; App.~\ref{app:support-details} \\[2pt]
GPT-2 sinusoidal 3L, $d$=512, 8 heads (recipe~B) & Liu; $\{r,s\}$ & $D_4$; 3 (+1 at depth 4, +1 at $3\times10^{-4}$) & escape 3/3; depth 4 escapes; $3\times10^{-4}$ partial & completed; App.~\ref{app:support-details} \\[2pt]
Recipe~B minus PE; on full alphabet; with recipe~A optimizer; MLP 2048 & Liu / A block & $D_4$; 3 each & chance; plateau 2/3; frozen frontier; escape & completed; App.~\ref{app:support-details} \\[2pt]
GPT-2 rotary; NeoX sinusoidal; NeoX rotary, all 3L/512 & Liu; $\{r,s\}$ & $D_4$; 3 each & escape 1 / partial 2 (ratio .82--.998); escape 1 / partial 2 (.86--.99); plateau (.524--.532) & completed; NeoX-rotary cell in \S\ref{sec:levers}, the other two only here \\[2pt]
NeoX rotary 3L/512 at 4 and 5 layers & Liu; $\{r,s\}$ & $D_4$; 3 each & ratio .814, .951; frontier 31--32, 51--58 & completed; \S\ref{sec:levers} \\[2pt]
Comparison~C: GPT-2 / NeoX $\times$ sinusoidal / rotary, 3L/512 & Liu; full alphabet & $D_4$; 3 each & plateau 10/12, partial 2/12, escape 0 & completed; \S\ref{sec:law} \\[2pt]
Pythia-160M, 12 blocks & Li; online & $S_3$; 3 scratch + 1 pretrained & 1 parity-first, 3 associative & completed; \S\ref{sec:fiber}, App.~\ref{app:gradient-details} \\[2pt]
Pythia-160M architecture, 3/4/7/12 blocks, paired & Li; online & $S_3$; 8 (+3 bridge) & frontier median 6.5/9/15/39; quotient 7/5/7/7 of 8 & completed; \S\ref{sec:levers} \\[2pt]
Pythia-70M & Li; online & $S_3$; 1+1 & quotient-first; stall & exploratory; not used \\
\end{longtable}
\endgroup

\begingroup\footnotesize
\setlength{\tabcolsep}{5.5pt}
\setlength{\LTleft}{\fill}
\setlength{\LTright}{\fill}
\setlength{\LTcapwidth}{\linewidth}
\begin{longtable}{>{\raggedright\arraybackslash}p{0.28\textwidth}>{\raggedright\arraybackslash}p{0.16\textwidth}>{\raggedright\arraybackslash}p{0.10\textwidth}>{\raggedright\arraybackslash}p{0.19\textwidth}>{\raggedright\arraybackslash}p{0.13\textwidth}}
\caption{Architecture coverage, part 3 of 3. Status refers to the evaluation package; completed packages need not reach full tracking. All listed outcomes and seed counts are retained.}
\label{tab:architecture-coverage-3}\\[4pt]
\toprule
Architecture & Recipe / measure & Groups; seeds & Outcome & Status; location \\
\midrule
\endfirsthead
\multicolumn{5}{l}{\small\tablename\ \thetable\ (continued)}\\[4pt]
\toprule
Architecture & Recipe / measure & Groups; seeds & Outcome & Status; location \\
\midrule
\endhead
\midrule
\multicolumn{5}{r}{\footnotesize Continued on next page}\\
\endfoot
\bottomrule
\endlastfoot
Looped transformer, 2 shared blocks, no position encoding, $d$=128 & curriculum to 16 & $D_4$; 1 & one position per loop; no quotient shelf & pilot; not used \\[2pt]
Expert-split MLP; MoE readout & A & $D_4$; 1 pair each & frontier 11 vs 6; fiber .498--.506 & exploratory; not used \\[2pt]
Bilinear recurrence, state 128 & Adam $10^{-3}$; full alphabet & 10 groups; 3 (+10) & $1/|N|$ floors 13/13; acquired across positions in 16/16 & completed; \S\ref{sec:levers} \\[2pt]
Two-layer polynomial net on group-Fourier input & 6 configs, 30k & $D_4$; 1 & amplitude sets learning order & exploratory; not used \\[2pt]
Pooled (order-blind by construction) MLP; pooled + recurrent sum & Adam $10^{-3}$, 20k & $S_3\times C_3$; 1 & quotient only .40 at 20k; registered ceiling test missed (gap .255) & exploratory; not used \\[2pt]
Two-layer LSTM, width 444 (parameter-matched to recipe~A) & A; full alphabet & $D_4$, $A_4$, $S_4$; 3 per arm, 2 arms & escape 18/18; transient stages at $1/|N|$ levels & completed; \S\ref{sec:levers} \\[2pt]
Two-layer GRU, width 512 (parameter-matched to recipe~A) & A; full alphabet & $D_4$, $A_4$, $S_4$; 3 per arm, 2 arms & escape 18/18; 1,000-step evaluation grid, no dense replay & completed; \S\ref{sec:levers} \\[2pt]
Mamba-2, 2L/496, state 80 (parameter-matched) & Full alphabet; two learning rates; FP32 & $D_4$, $A_4$, $S_4$; 3 per arm & at $10^{-3}$, 6/9 resolve the quotient; conditional exact near $1/|N|$ & completed; App.~\ref{app:ssm} \\[2pt]
Mamba-2, 4L/352, state 48 (parameter-matched) & Full alphabet; $5\times10^{-5}$; FP32 & $D_4$; 3 & quotient 98.82--99.97\%; conditional exact near $1/2$ & completed; App.~\ref{app:ssm} \\[2pt]
Not run or incomplete & & & recipe~B quarter-rotary arms; parameter-matched recipe~B depth arms (registered, running); $Q_8$ support attribution & \\
\end{longtable}
\endgroup

\begingroup\footnotesize
\setlength{\tabcolsep}{5.5pt}
\setlength{\LTleft}{\fill}
\setlength{\LTright}{\fill}
\setlength{\LTcapwidth}{\linewidth}
\begin{longtable}{>{\raggedright\arraybackslash}p{0.28\textwidth}>{\raggedright\arraybackslash}p{0.15\textwidth}>{\raggedright\arraybackslash}p{0.13\textwidth}>{\raggedright\arraybackslash}p{0.19\textwidth}>{\raggedright\arraybackslash}p{0.11\textwidth}}
\caption{Training-support and encoding coverage, part 1 of 2. Outcomes refer to the listed recipes and budgets. Pilot and inconclusive results are retained.}
\label{tab:support-coverage-1}\\[4pt]
\toprule
Training support / encoding & Recipe & Groups; seeds & Outcome & Location \\
\midrule
\endfirsthead
\multicolumn{5}{l}{\small\tablename\ \thetable\ (continued)}\\[4pt]
\toprule
Training support / encoding & Recipe & Groups; seeds & Outcome & Location \\
\midrule
\endhead
\midrule
\multicolumn{5}{r}{\footnotesize Continued on next page}\\
\endfoot
\bottomrule
\endlastfoot
Full alphabet, i.i.d.\ uniform & A, C, LSTM, bilinear, 160M & all & quotient stages and $1/|N|$ levels & \S\ref{sec:law} \\[2pt]
Generator stream $\{r,s\}$, i.i.d. & B; A & $D_4$; 3 per arm & escape (B); plateau (A) & App.~\ref{app:support-details} \\[2pt]
Two $[G,G]$-cosets of $D_4$, $\{rK,sK\}$, lift bias $\varepsilon\in\{0,.5,.9,.95,.99\}$ & B & $D_4$; 3 at $\varepsilon$=0 and .99, 1 otherwise & crawl or escape at $\varepsilon$=0; the $\varepsilon$ axis is retired as an explanatory variable & App.~\ref{app:support-details} \\[2pt]
Three $[G,G]$-cosets of $D_4$ & B & $D_4$; 3 & flat, frontier 7--11 & App.~\ref{app:support-details} \\[2pt]
Two cosets of $Q_8$, $\{iK,jK\}$; matched full-support baseline & B & $Q_8$; 3+1 & all crawl; baseline flat & App.~\ref{app:support-details} \\[2pt]
Single reflection coset of $D_3$, $D_5$, $D_7$ & B & 3, 1, 3 & escape at 3; crawl at 5, 7 & App.~\ref{app:support-details} \\[2pt]
Ladder supports: reflections of $D_9$, $D_{15}$, $D_{27}$; odd permutations of $S_4$; two cosets of $D_8$, $Q_{16}$; non-multiples of 3 in $C_9$ & B & 7 groups; 3 each & stages at intermediate quotients & \S\ref{sec:ladder} \\[2pt]
Horizons 24, 48, 100; curriculum 24$\to$48$\to$100; reverse continuation to 24 & A & $D_4$; 3, 3, 3; 2 & frontier 10--13 at 24; no curriculum gain & \S\ref{sec:levers} \\[2pt]
Frozen Fourier encoding, 2-dim irrep amplitude $\times\{1/4,1,2,4,8\}$, fp32 control & A & $D_4$; 17 runs & frontier 4.3$\to$9.5, characters starved & \S\ref{sec:levers} \\
\end{longtable}
\endgroup

\begingroup\footnotesize
\setlength{\tabcolsep}{5.5pt}
\setlength{\LTleft}{\fill}
\setlength{\LTright}{\fill}
\setlength{\LTcapwidth}{\linewidth}
\begin{longtable}{>{\raggedright\arraybackslash}p{0.28\textwidth}>{\raggedright\arraybackslash}p{0.15\textwidth}>{\raggedright\arraybackslash}p{0.13\textwidth}>{\raggedright\arraybackslash}p{0.19\textwidth}>{\raggedright\arraybackslash}p{0.11\textwidth}}
\caption{Training-support and encoding coverage, part 2 of 2. Outcomes refer to the listed recipes and budgets. Pilot and inconclusive results are retained.}
\label{tab:support-coverage-2}\\[4pt]
\toprule
Training support / encoding & Recipe & Groups; seeds & Outcome & Location \\
\midrule
\endfirsthead
\multicolumn{5}{l}{\small\tablename\ \thetable\ (continued)}\\[4pt]
\toprule
Training support / encoding & Recipe & Groups; seeds & Outcome & Location \\
\midrule
\endhead
\midrule
\multicolumn{5}{r}{\footnotesize Continued on next page}\\
\endfoot
\bottomrule
\endlastfoot
Frozen Peter--Weyl encoding, tied scores & A & $S_3\times C_3$; 3 & characters learned, 2-dim irrep not & App.~\ref{app:training-details} \\[2pt]
$S_3\times C_3$ encoding: frozen / trainable $\times$ amplitude 1, 2 & A & 1 each, 10k steps & no rescue of the 2-dim irrep & pilot; incomplete; not used \\[2pt]
Equal-dimension embedding rows, four arms & 12L/256, Li schedule & $S_3$, $S_5$; 1k steps & frontier 2 in every arm & incon\-clusive; not used \\[2pt]
Coset-union flows of $S_3\times C_3$, $D_5$, $D_7$, $Q_8$, $S_4$, $S_5$ & 3L/128 pilot, 6k steps, length 64 & 6 groups; 1 & quotient learned; exact$\mid$quotient stays at $1/|[G,G]|$; gain vanishes on the full stream & exploratory; not used \\[2pt]
Exact Fourier degree of the fiber and quotient sectors under a measure family from the full alphabet to the generator stream & zero training & $D_4$, $S_3$, $D_6$, $Q_8$ & pure degree $t$ under the full alphabet; reduced degree only on unpaired-lift supports & completed; App.~\ref{app:support-details} \\[2pt]
Released pipeline of \citet{li2025state} on $S_5$: fixed 1M-sequence dataset repeated for 20 epochs, bf16 & their code and architecture & $S_5$; 1 & remained at chance under bf16; excluded from mechanism analysis & not used \\[2pt]
Not run & & & Markov or correlated tokens; power-law frequencies; tilted full-support measures & \\
\end{longtable}
\endgroup

\section{Runs behind Table~\ref{tab:law}}
\label{app:rows}

The first table preserves the original finite-group suite:
three seeds per group at 74k updates, plus two $D_4$, three $Q_8$, and three
$D_6$ runs at 148k. Its manifest is \nolinkurl{paper/figures/law_rows_current.json}.

The original manifest, \texttt{law\_rows.json}, is retained for
Fig.~\ref{fig:frontier}. \emph{quotient} is the quotient-channel accuracy;
dashes denote the abelian and simple-group controls. The $S_3$ rows use
the updated evaluation described in App.~\ref{app:output-details}.

\begingroup
\small
\begin{longtable}{llrrrr}
\caption{Original finite-group runs retained for reproducibility.}
\label{tab:original-law-runs}\\
\toprule
$G$ & seed & budget & exact$\mid$quotient & quotient & exact \\
\midrule
\endfirsthead
\multicolumn{6}{l}{\tablename\ \thetable\ (continued)}\\
\toprule
$G$ & seed & budget & exact$\mid$quotient & quotient & exact \\
\midrule
\endhead
\midrule
\multicolumn{6}{r}{Continued on next page}\\
\endfoot
\bottomrule
\endlastfoot
$C_8$ & 42 & 74k & 0.9918 & -- & 0.9918 \\
$C_8$ & 43 & 74k & 0.9945 & -- & 0.9945 \\
$C_8$ & 44 & 74k & 0.9849 & -- & 0.9849 \\
$D_4$ & 42 & 74k & 0.5002 & 0.990 & 0.4950 \\
$D_4$ & 43 & 74k & 0.4992 & 0.951 & 0.4748 \\
$D_4$ & 44 & 74k & 0.4999 & 0.849 & 0.4245 \\
$D_4$ & 43 & 148k & 0.5005 & 0.999 & 0.5002 \\
$D_4$ & 44 & 148k & 0.4995 & 1.000 & 0.4994 \\
$Q_8$ & 42 & 74k & 0.4989 & 0.826 & 0.4122 \\
$Q_8$ & 43 & 74k & 0.4992 & 0.761 & 0.3799 \\
$Q_8$ & 44 & 74k & 0.4997 & 0.630 & 0.3146 \\
$Q_8$ & 42 & 148k & 0.4996 & 0.999 & 0.4993 \\
$Q_8$ & 43 & 148k & 0.4991 & 0.999 & 0.4987 \\
$Q_8$ & 44 & 148k & 0.4996 & 0.999 & 0.4992 \\
$S_3$ & 42 & 74k & 0.3338 & 0.999 & 0.3335 \\
$S_3$ & 43 & 74k & 0.3339 & 0.999 & 0.3336 \\
$S_3$ & 44 & 74k & 0.3332 & 0.999 & 0.3329 \\
$D_6$ & 42 & 74k & 0.3322 & 0.944 & 0.3137 \\
$D_6$ & 43 & 74k & 0.3327 & 0.819 & 0.2726 \\
$D_6$ & 44 & 74k & 0.3341 & 0.912 & 0.3047 \\
$D_6$ & 42 & 148k & 0.3345 & 0.999 & 0.3341 \\
$D_6$ & 43 & 148k & 0.3331 & 0.999 & 0.3326 \\
$D_6$ & 44 & 148k & 0.3347 & 1.000 & 0.3345 \\
$\mathrm{Dic}_3$ & 42 & 74k & 0.3336 & 0.997 & 0.3326 \\
$\mathrm{Dic}_3$ & 43 & 74k & 0.3327 & 0.995 & 0.3311 \\
$\mathrm{Dic}_3$ & 44 & 74k & 0.3325 & 0.999 & 0.3323 \\
$A_4$ & 42 & 74k & 0.2498 & 1.000 & 0.2497 \\
$A_4$ & 43 & 74k & 0.2503 & 1.000 & 0.2502 \\
$A_4$ & 44 & 74k & 0.2509 & 1.000 & 0.2509 \\
$D_5$ & 42 & 74k & 0.2007 & 1.000 & 0.2006 \\
$D_5$ & 43 & 74k & 0.1999 & 0.893 & 0.1786 \\
$D_5$ & 44 & 74k & 0.2002 & 0.997 & 0.1995 \\
$C_7\!\rtimes\!C_3$ & 42 & 74k & 0.1430 & 0.999 & 0.1429 \\
$C_7\!\rtimes\!C_3$ & 43 & 74k & 0.1424 & 0.999 & 0.1423 \\
$C_7\!\rtimes\!C_3$ & 44 & 74k & 0.1427 & 1.000 & 0.1426 \\
$\mathrm{SL}(2,3)$ & 42 & 74k & 0.1251 & 0.999 & 0.1250 \\
$\mathrm{SL}(2,3)$ & 43 & 74k & 0.1246 & 1.000 & 0.1245 \\
$\mathrm{SL}(2,3)$ & 44 & 74k & 0.1248 & 0.999 & 0.1247 \\
$S_4$ & 42 & 74k & 0.0830 & 1.000 & 0.0829 \\
$S_4$ & 43 & 74k & 0.0836 & 0.998 & 0.0835 \\
$S_4$ & 44 & 74k & 0.0833 & 1.000 & 0.0833 \\
$S_5$ & 42 & 74k & 0.0166 & 1.000 & 0.0166 \\
$S_5$ & 43 & 74k & 0.0166 & 1.000 & 0.0166 \\
$S_5$ & 44 & 74k & 0.0165 & 0.794 & 0.0131 \\
$A_5$ & 42 & 74k & -- & -- & 0.0165 \\
$A_5$ & 43 & 74k & -- & -- & 0.0168 \\
$A_5$ & 44 & 74k & -- & -- & 0.0168 \\
\end{longtable}

\endgroup

Table~\ref{tab:law} uses all available matching 148k runs for $D_4$, $Q_8$,
$D_6$, $A_4$, $\mathrm{SL}(2,3)$, and $H_3(\mathbb Z/5)$, and all original
74k runs for the remaining groups. Its manifest is
\nolinkurl{paper/tables/tab1_law_manifest.json}.
This is a post-hoc choice of reporting budget, without a seed-level accuracy
filter. Some longer-budget cohorts were commissioned after incomplete quotient
learning at the initial budget; $D_4$ has only two doubled-budget seeds.
The table therefore summarizes observed resolutions at stated budgets,
rather than estimating their frequency under a common training budget.
The Heisenberg runs and additional doubled-budget evaluations appear below.

\subsection{Finite and infinite Heisenberg groups in Table~\ref{tab:law}}
\label{app:heisenberg-law}

Write Heisenberg elements as $(x,y,z)$, with product
$(x,y,z)(u,v,w)=(x+u,y+v,z+w+xv)$.
For $H_3(\mathbb Z/p)$ all coordinates are reduced modulo $p$;
the group has $p^3$ elements, abelianization $(\mathbb Z/p)^2$, and
commutator subgroup of order $p$. We use $p=3,5$, uniform full-group
inputs, and an element-index output head.
For $G_p=H_3(\mathbb Z)/\langle z^p\rangle$, only the central coordinate
is reduced modulo $p$. This group is infinite, with abelianization
$\mathbb Z^2$ and $p$ elements in each abelianization class. For $p=3,4,5$,
inputs are uniform on $\{-1,0,1\}^2\times\mathbb Z/p$: a finite alphabet
of $9p$ elements containing nine complete commutator cosets.
Separate output heads classify $x,y\in[-128,128]$ and $z\in\mathbb Z/p$.
The quotient readout requires both $x$ and $y$ to be correct.

All five conditions use four GPT-NeoX blocks, width 256, four attention
heads, and seeds 42--44. Training uses AdamW at $5\times10^{-5}$, zero
weight decay, linear decay without warmup, batch size 256, length 100,
74,219 updates, and bf16 autocast. A further three full-alphabet
$H_3(\mathbb Z/5)$ runs use the same recipe with 148,438 updates.
Final evaluations use 8,192 iid
length-128 words. Table~\ref{tab:heisenberg-law-runs} uses the same positions
17--100 for every run and reports the ratio of pooled exact to pooled
quotient accuracy. Exact correctness implies quotient correctness in both
readouts, so this ratio is conditional exact accuracy.

The initial $H_3(\mathbb Z/5)$ runs have quotient accuracies .9458--.9966;
all remain below, including the two below the registered .99 criterion.
Table~\ref{tab:law} uses the full doubled-budget cohort, with quotient
accuracy .9987--.9997 and conditional exact accuracy .1995--.1996.
For $G_5$, positions 17--100 overlap the exact-accuracy transition and give
conditional accuracy .2115--.2181. At positions 70--100, exact accuracy is
.1995--.1997. We retain the common window in Table~\ref{tab:law} rather than
substituting this later window. The agreement with $1/p$ is an empirical
result under these configurations, not a consequence of finite commutator
size alone. Finite-group element classification and infinite-group
coordinate prediction also differ in their support and output heads.

\begin{table}[ht]
\centering\small
\begin{tabular}{lrrrr}
\toprule
$G$ & Seed & Exact$\mid$quotient & Quotient & Exact \\
\midrule
$H_3(\mathbb Z/3)$ & 42 & 0.3343 & 0.9991 & 0.3339 \\
$H_3(\mathbb Z/3)$ & 43 & 0.3332 & 0.9982 & 0.3326 \\
$H_3(\mathbb Z/3)$ & 44 & 0.3336 & 0.9996 & 0.3334 \\
\midrule
$H_3(\mathbb Z/5)$ & 42 & 0.1999 & 0.9966 & 0.1993 \\
$H_3(\mathbb Z/5)$ & 43 & 0.1997 & 0.9786 & 0.1954 \\
$H_3(\mathbb Z/5)$ & 44 & 0.1986 & 0.9458 & 0.1879 \\
\midrule
$G_3$ & 42 & 0.3329 & 0.9996 & 0.3328 \\
$G_3$ & 43 & 0.3340 & 0.9996 & 0.3339 \\
$G_3$ & 44 & 0.3334 & 0.9996 & 0.3332 \\
\midrule
$G_4$ & 42 & 0.2505 & 0.9996 & 0.2504 \\
$G_4$ & 43 & 0.2511 & 0.9995 & 0.2510 \\
$G_4$ & 44 & 0.2499 & 0.9997 & 0.2498 \\
\midrule
$G_5$ & 42 & 0.2181 & 0.9992 & 0.2179 \\
$G_5$ & 43 & 0.2159 & 0.9993 & 0.2158 \\
$G_5$ & 44 & 0.2115 & 0.9993 & 0.2113 \\
\bottomrule
\end{tabular}

\caption{Original Heisenberg runs. All values use positions
17--100 and the original final evaluations; each condition has 74,219 updates.
Seed identifiers are retained here for reproducibility.}
\label{tab:heisenberg-law-runs}
\end{table}

\begin{table}[ht]
\centering\small
\begin{tabular}{lrrrrr}
\toprule
$G$ & Seed & Updates & Exact$\mid$quotient & Quotient & Exact \\
\midrule
$A_4$ & 42 & 148k & 0.2502 & 0.9997 & 0.2501 \\
$A_4$ & 43 & 148k & 0.2502 & 0.9997 & 0.2501 \\
$A_4$ & 44 & 148k & 0.2500 & 0.9997 & 0.2500 \\
$\mathrm{SL}(2,3)$ & 42 & 148k & 0.1249 & 0.9998 & 0.1249 \\
$\mathrm{SL}(2,3)$ & 43 & 148k & 0.1245 & 0.9998 & 0.1245 \\
$\mathrm{SL}(2,3)$ & 44 & 148k & 0.1246 & 0.9995 & 0.1245 \\
$H_3(\mathbb Z/5)$ & 42 & 148k & 0.1996 & 0.9997 & 0.1995 \\
$H_3(\mathbb Z/5)$ & 43 & 148k & 0.1995 & 0.9988 & 0.1992 \\
$H_3(\mathbb Z/5)$ & 44 & 148k & 0.1995 & 0.9987 & 0.1992 \\
\bottomrule
\end{tabular}

\caption{Additional doubled-budget runs used in Table~\ref{tab:law}.
All values use positions 17--100 of the independent final evaluation.
The doubled-budget $D_4$, $Q_8$, and $D_6$ runs are listed above with the
original finite-group suite.}
\label{tab:law-extended-runs}
\end{table}

\FloatBarrier

\section{Readout definitions and window sensitivity}
\label{app:readouts}

Final evaluations use 8,192--16,384 held-out sequences. The default windows
are positions 17--100 for recipe A and 9--100 for recipe B; we separately
check whether they overlap the exact frontier or its transition region.

\paragraph{Within-class output diagnostics.}
We ask whether a prediction distinguishes the true state from the other members
of its quotient class. Let $s=x_{1:t}$ denote an input prefix, $q_t$ its true
product, and $C_s=q_tN$ its finite true class. For the model output
$p_\theta(g\mid s)$, define the probability mass on this class and the
renormalized distribution within it by
\[
m_s=\sum_{g\in C_s}p_\theta(g\mid s),\qquad
\tilde p_s(g)=\frac{p_\theta(g\mid s)}{m_s}\quad(g\in C_s).
\]
This separates two questions. The mass $m_s$ measures how much probability
reaches the correct class; $\tilde p_s$ measures how it is divided among class
members. The identity
$\log_2 p_\theta(q_t\mid s)=\log_2 m_s+\log_2\tilde p_s(q_t)$
makes that separation explicit. We use the true class for this diagnostic even
when the model's most probable state lies outside it.

\paragraph{Deriving the log-likelihood gain.}
A model that knows only the class and predicts uniformly assigns $1/|N|$ to
the true state. The improvement in log score for one prefix is therefore
\[
d_s=\log_2\tilde p_s(q_t)-\log_2(1/|N|)
   =\log_2\bigl(|N|\tilde p_s(q_t)\bigr).
\]
Averaging yields
\begin{equation}
\Delta\mathrm{LL}=\mathbb E_s[d_s]
=\log_2|N|-\mathbb E_s[-\log_2\tilde p_s(q_t)].
\label{eq:fiber-readouts}
\end{equation}
Thus $\Delta\mathrm{LL}$ is the reduction in within-class cross-entropy relative
to uniform prediction. Positive values mean better average log scores, zero
means equal scores, and negative values mean worse scores. For a four-state
class, assigning the true state probabilities $1/4$, $1/2$, and $1/8$ gives
per-prefix gains of $0$, $1$, and $-1$ bits, respectively. A negative gain is
not negative mutual information; this statistic is a prediction score.

\paragraph{Why also measure KL divergence.}
A near-zero mean gain does not establish uniform outputs. Nonuniform
predictions can favor wrong states, and positive and negative gains can offset
across examples. We therefore also compute
\[
D=\mathbb E_s\!\left[\sum_{g\in C_s}\tilde p_s(g)
       \log_2\bigl(|N|\tilde p_s(g)\bigr)\right]
 =\log_2|N|-\mathbb E_s[H(\tilde p_s)]\ge0.
\]
Here $H$ is entropy in bits. Unlike $\Delta\mathrm{LL}$, this quantity detects
any concentration within a class, whether or not it favors the true state.
It is zero exactly when the within-class prediction is uniform almost surely.
Small $D$ supports near-uniform outputs on average; neither diagnostic rules
out information in hidden representations that the output head does not use.
For abelianization classes, $N=[G,G]$ and $|N|=f$.

\paragraph{Averaging and the plotted example.}
We compute each diagnostic per example and position, then average over the
specified evaluation window. We include all examples regardless of quotient
prediction. These expectations are within one model, not averages across seeds
or diagnostics of an averaged confusion matrix. For $A_4$ in
Sec.~\ref{sec:fiber}, 8,192 sequences and positions 17--100 give 688,128
observations. Figure~\ref{fig:quotient-output}b instead shows per-position means
over the same 8,192 sequences at positions 1--100. The separate output-matrix
replay averages probabilities first, so its block structure alone cannot
establish that each prediction is nearly uniform. The per-example diagnostics
test that possibility.

\paragraph{Frontiers and threshold selection.} The exact frontier $F_e$ is
the longest prefix whose exact accuracy is at least .75 at every position.
The quotient frontier $F_q$ uses .90. The .75 threshold was selected on
2026-08-22 after an exploratory inspection of fourteen runs and then fixed;
it was not selected before those runs. The .90 threshold and package-specific
criteria were fixed before the runs to which they apply. The intermediate-
quotient analysis also uses a .70 class-accuracy threshold selected
at analysis time. We distinguish completion of a registered evaluation package
from a persistent stage in a training trajectory; neither establishes
asymptotic convergence.

\paragraph{Window checks.} Baseline windows cover positions 17--100; recipe B
and its full-group variants initially used 9--100. A plateau window should
exclude the exact frontier and its transition region. The standard check starts
at least six positions beyond $F_e$, supplemented by position-wise readouts.
In a standard-budget battery, 28 of 28 runs have
$|\Delta\mathrm{LL}|<.0145$ bits beyond $F_e+6$. Longer training can broaden
the transition: a 600k-update $D_4$ run with frontier 39 has conditional
accuracies .75, .64, .54, and .50 at offsets 1, 11, 21, and 31, respectively.
Thus six positions is not a universal bound on transition width.

\paragraph{Full-group cross-recipe correction.} Three GPT-2/sinusoidal runs
under the full-group variant of B initially gave conditional accuracies
.5489, .5321, and .5252. Their exact frontiers, 17, 13, and 12, overlap the
original 9--100 window. Restricting the window to 17--100 gives argmax ratios
.5071, .5015, and .5002, with mean true-element masses .5033, .5008, and .5002;
residual information is at most .00621 bits. Starting only one position beyond
the frontier leaves .0017--.0120 bits and two ambiguous cells under the original
criterion. These are window corrections, not independent replications.

The other backbone/position cells include an uncorrected .5230 ratio whose
frontier overlaps the window; no corrected reading is available. Two
NeoX/sinusoidal seeds have incomplete quotient accuracy (.795 and .717).
Elsewhere, fixed 17--100 windows give ratios .6475, .5302, and .5370 when they
include positions already tracked exactly. Such values are retained in the
source records and do not constitute measurements of a uniform plateau.

\section{Additional evidence for quotient predictions}
\label{app:output-details}

\paragraph{Coverage of the baseline table.} The three standard-budget $D_4$
ratios are .50018, .49922, and .49995. One run passed its registered criterion,
one had insufficient statistical power, and one had quotient accuracy .849.
Two doubled-budget runs give .50047 and .49952. Table~\ref{tab:law} uses
these two; App.~\ref{app:rows} retains all five evaluations.
The updated $S_3$ cell uses seeds 42--44, each trained for 74,219 updates
under the same recipe. FP32 evaluation on 8,192 sequences at seed 248041
over positions 17--100 gives quotient accuracy 99.89--99.92\% and
conditional exact accuracy 33.32--33.39\%. Quotient predictions map the
full-state argmax to its coset. Seeds 43--44 were added to the original
seed 42; all three use this evaluation protocol. The $C_8$ and $A_5$
controls use exact accuracy. For comparison with the original $C_8$ evaluation
in Table~\ref{tab:law}, FP32 replay on 8,192 sequences over positions 17--100
gives 99.2555\%, 99.5356\%, and 98.5979\% for seeds 42, 43, and 44.
These are the main-text control values; the table retains the original
bf16 evaluation over positions 16--100. The reported cohorts and budgets are listed below.

\begin{table}[ht]
\centering\footnotesize
\setlength{\tabcolsep}{2.5pt}
\begin{tabular}{@{}lrrrrrrrrr@{}}
\toprule
$G$ & $|G|$ & $f$ & $1/f$ & Cond. (\%) & Quot. (\%) & Exact (\%) & Chance (\%) & $\times$chance & Updates \\
\midrule
$C_8$ & 8 & 1 & 1 & -- & -- & 98.49--99.45 & 12.50 & 7.88--7.96 & 74k \\
$D_4^{*}$ & 8 & 2 & $1/2$ & 49.95--50.05 & 99.94--99.99 & 49.94--50.02 & 12.50 & 4.00 & 148k \\
$Q_8^{*}$ & 8 & 2 & $1/2$ & 49.91--49.96 & 99.92--99.93 & 49.87--49.93 & 12.50 & 3.99 & 148k \\
$S_3$ & 6 & 3 & $1/3$ & 33.32--33.39 & 99.89--99.92 & 33.29--33.36 & 16.67 & 2.00 & 74k \\
$D_6^{*}$ & 12 & 3 & $1/3$ & 33.31--33.47 & 99.86--99.97 & 33.26--33.45 & 8.33 & 3.99--4.01 & 148k \\
$\mathrm{Dic}_3$ & 12 & 3 & $1/3$ & 33.25--33.36 & 99.51--99.93 & 33.11--33.26 & 8.33 & 3.97--3.99 & 74k \\
$A_4^{*}$ & 12 & 4 & $1/4$ & 25.00--25.02 & 99.97 & 25.00--25.01 & 8.33 & 3.00 & 148k \\
$D_5$ & 10 & 5 & $1/5$ & 19.99--20.07 & 89.34--99.99 & 17.86--20.06 & 10.00 & 1.79--2.01 & 74k \\
$C_7\!\rtimes\!C_3$ & 21 & 7 & $1/7$ & 14.24--14.30 & 99.93--99.95 & 14.23--14.29 & 4.76 & 2.99--3.00 & 74k \\
$\mathrm{SL}(2,3)^{*}$ & 24 & 8 & $1/8$ & 12.45--12.49 & 99.95--99.98 & 12.45--12.49 & 4.17 & 2.99--3.00 & 148k \\
$S_4$ & 24 & 12 & $1/12$ & 8.30--8.36 & 99.77--99.98 & 8.29--8.35 & 4.17 & 1.99--2.00 & 74k \\
$S_5$ & 120 & 60 & $1/60$ & 1.65--1.66 & 79.36--99.99 & 1.31--1.66 & 0.83 & 1.57--1.99 & 74k \\
$A_5$ & 60 & 60 & $1/60$ & -- & -- & 1.65--1.68 & 1.67 & 0.99--1.01 & 74k \\
\midrule
$H_3(\mathbb Z/3)$ & 27 & 3 & $1/3$ & 33.32--33.43 & 99.82--99.96 & 33.26--33.39 & 3.70 & 8.98--9.02 & 74k \\
$H_3(\mathbb Z/5)^{*}$ & 125 & 5 & $1/5$ & 19.95--19.96 & 99.87--99.97 & 19.92--19.95 & 0.80 & 24.90--24.94 & 148k \\
\midrule
$G_3$ & $\infty$ & 3 & $1/3$ & 33.29--33.40 & 99.96 & 33.28--33.39 & -- & -- & 74k \\
$G_4$ & $\infty$ & 4 & $1/4$ & 24.99--25.11 & 99.95--99.97 & 24.98--25.10 & -- & -- & 74k \\
$G_5$ & $\infty$ & 5 & $1/5$ & 21.15--21.81$^{\dagger}$ & 99.92--99.93 & 21.13--21.79$^{\dagger}$ & -- & -- & 74k \\
\bottomrule
\end{tabular}

\caption{Budget-specific cohorts underlying Table~\ref{tab:law}.
Measured accuracies are in percent; 74k and 148k denote 74,219 and
148,438 updates. $^{\dagger}$The common $G_5$ window includes an early transition
(App.~\ref{app:heisenberg-law}).}
\label{tab:law-expanded}
\end{table}

\paragraph{Time-resolved readouts.} In 35 runs carrying both accuracy
readouts, 3,218 logged evaluations have a largest standardized deviation from
$1/f$ of 3.6 standard errors on the original evaluated windows. Evaluations
are 1,000 updates apart, so this does not exclude intermediate excursions.
The original standard errors treat token observations as Bernoulli samples;
positions within a sequence and successive checkpoints are correlated. This
standardized summary is therefore not a test with independent observations.
A separate distribution study covers 37 checkpoints in eight groups and
two recipes. Its original evaluators include selection on correct quotient
predictions before averaging within-fiber diagnostics. The historical bounds $|\Delta\mathrm{LL}|\le.00722$ and $D\le.0130$ bits
retain those protocols and should not be read as one unconditional test.
Figure~\ref{fig:quotient-output} and Table~\ref{tab:other-groups} instead use
all examples. Within-fiber renormalization and selection on quotient
correctness are distinct operations. Near-uniform residuals under the original
selection rules also occur when overall class accuracy is low,
including $Q_8$ at .59 and .83 and $D_6$ at .82. These cases motivate reporting
class accuracy separately from conditional exact accuracy.

\paragraph{Linear decoding within and between classes.}
We separately train linear probes to predict the quotient class and the fixed
index of a state within its class (its \emph{within-class rank}). These are
readouts of hidden representations, distinct from the model's output accuracy.
In matched $A_4$, $\mathrm{SL}(2,3)$, and $S_4$ models, mean layer-4 rank
accuracies are 24.85\%, 12.60\%, and 8.24\%, close to chance levels of
25\%, 12.5\%, and 8.33\%; mean quotient-class decoding exceeds 99.9\%.
Figure~\ref{fig:linear-probes} retains all four blocks. The protocol and
shuffled-label controls are specified in App.~\ref{app:figure-methods}.
These null results concern this linear readout, not the absence of all
within-class information.

\begin{figure}[ht]
\centering
\includegraphics[width=\linewidth,trim=0 10bp 0 3bp,clip]{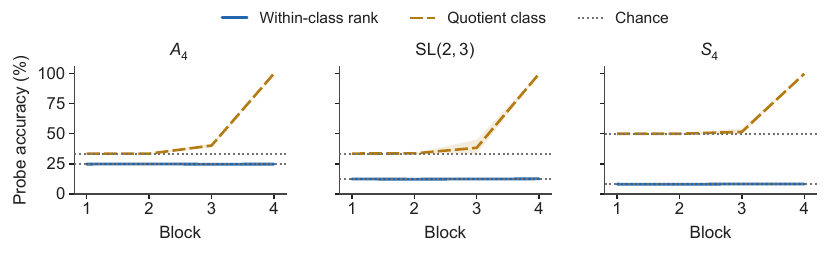}
\caption{\textbf{Linear decoding of quotient classes and within-class ranks.}
Curves and bands are means and ranges across three training seeds per group.
Dotted lines give each task's chance accuracy. Probes use hidden states at
positions 17--100, with training and test sequences disjoint.}
\label{fig:linear-probes}
\end{figure}

\paragraph{Additional probes.} A $D_4$ binary within-fiber probe gives
.4980--.5024 across layers in checkpoints where the quotient class is decodable
at layer 4; shuffled-label controls pass. Nevertheless, a learned character
can be linearly unreadable (.511) when represented as a product of two
linearly readable characters. This is a concrete reason to restrict conclusions
from null probes. In $\mathrm{SL}(2,3)$, three quotient-class means are
114--124 degrees apart at layer 4, and within-class spread is .0024 of
between-class distance. These geometric measurements support class separation
without proving absence of finer nonlinear information.

\paragraph{Finite resolution.} The flatness bounds are empirical tolerances.
For $\mathrm{Dic}_3$, a within-fiber improvement of approximately .0052 bits
has a confidence interval excluding zero. Such probabilistic information is
not excluded by the theorem about exact recovery on every input. The
accuracy correction in Proposition~\ref{prop:ceiling} is a different quantity
from this log-likelihood improvement and cannot be compared directly in bits.
Transition positions and confidently
incorrect outputs are not classified as uniform quotient states.

\section{Intermediate quotient stages and order sensitivity}

\subsection{Reordering under recipe A}
\label{app:recipe-a-reordering}
We test whether the full-group models' predictions use input order under recipe A.
For each of 2,048 sequences per run, we randomly reorder the prefix at each
position and compare the original and reordered output distributions at that
position using Jensen--Shannon divergence in bits. The elements and their
counts are preserved, and reordering preserves the input distribution under
independent uniform full-group sampling. Across 28 runs on $D_4$, $Q_8$,
$A_4$, $D_6$, $S_4$, $\mathrm{SL}(2,3)$, and $C_7\rtimes C_3$, mean divergence
beyond each run's exact frontier ranges from .0012 to .0163 bits. All runs
pass the identity-permutation and within-frontier sensitivity controls.
Position-wise information and reordering readouts reach their low-information
regions within two positions of each other in 23 of 26 runs.

\begin{figure}[htbp]
\centering
\includegraphics[width=\linewidth]{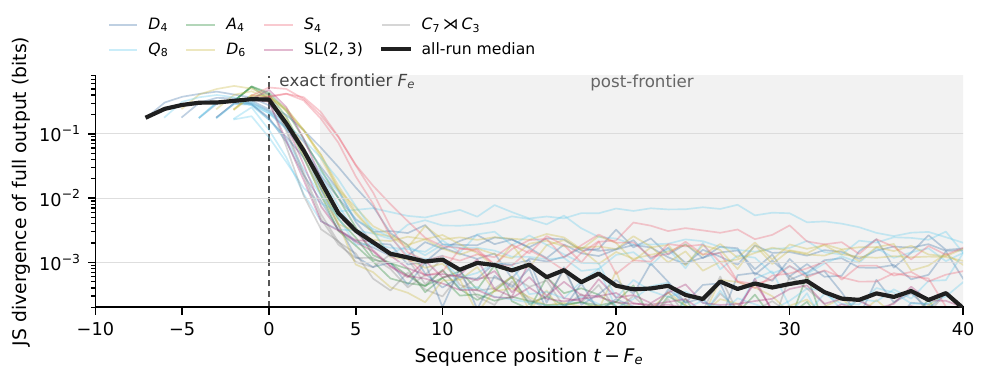}
\caption{\textbf{Full-output sensitivity to prefix reordering under recipe A.}
Each thin curve is one of 28 Transformer checkpoints; colors identify the
seven underlying groups and the black curve is their pointwise median. We align
each curve at its exact frontier $F_e$, the last position in the contiguous
prefix with exact accuracy at least .75. At each position, we randomly reorder
each input prefix without changing its multiset, then compute the Jensen--Shannon
divergence in bits between the original and reordered \emph{full} output
distributions. Sensitivity drops sharply after $F_e$. This behavioral test does
not establish strict order-blindness.}
\label{fig:recipe-a-reordering}
\end{figure}

\subsection{Intermediate quotient stages under recipe B}
\label{app:quotient-stages}

The 21-run study uses recipe B with three seeds on each of seven groups.
Supports are reflection cosets for $D_9$, $D_{15}$, and $D_{27}$; odd
permutations for $S_4$; two commutator cosets for $D_8$ and $Q_{16}$; and
nonmultiples of three for $C_9$. Both the support and recipe differ from
the baseline table. Intermediate behavior occurs in one $S_4$, two $D_{15}$,
and two $Q_{16}$ seeds; no such stage is observed in $D_{27}$. The $D_9$ gaps
close rapidly, while two $D_8$ seeds remain at the abelianization. Of all
21 runs, the frozen endpoint criteria classify five as plateaus, fifteen as
partial, and one as escape. These endpoint labels differ from the five runs
with intermediate quotient behavior; they do not imply that every partial
run is still improving. In this study, escape means exact accuracy at least
.90 over positions 9--100. A plateau requires exact accuracy at most .60,
joint quotient accuracy at least .80, and conditional exact accuracy within
.05 of the baseline $1/f$. The remaining runs in this study are labeled
partial. These are endpoint categories, not guarantees of subsequent training
behavior.

The $S_4$ example is exact through position 6 and has conditional accuracy
approximately .25 inside its $S_3$ classes from positions 7--100. Its $S_3$
class accuracy declines from 1.00 through position 20 to .72 at position 100,
while parity remains accurate throughout. A $D_{15}$ stage gives .201
against the $1/5$ baseline. Runs still progressing can give .8623 on this
same quotient or .6174 on a $Q_{16}$ quotient with baseline $1/2$.

For the main-text $S_4$ example, a fixed-window FP32 replay of the same saved
checkpoint uses 8,192 sequences and positions 17--100. Quotient accuracy is
89.3312\%, exact accuracy is 22.3406\%, and their ratio is 25.0087\%.
Restricting every prediction to its true $S_3$ class before averaging gives
KL .0027219 bits and $\Delta\mathrm{LL}=-.0026327$ bits. This diagnostic
includes all examples, including incorrect quotient predictions. The original
9--100 evaluation gives conditional exact accuracy .2502; the .2500 residual
readout below instead excludes the exact frontier and six further positions.

The residual analysis examines 13 runs and 31 candidate quotient levels.
Three learned intermediate stages give conditional accuracies .2500, .2010,
and .4995, with $|\Delta\mathrm{LL}|\le .00289$ bits. Their coarser classes
retain .43--1.36 bits. Only sufficiently learned class channels are included;
conditioning on correct guesses from an unlearned channel can spuriously
produce information estimates as high as 1.37 bits. The .70 class threshold
was chosen during analysis, and transition-region filtering changes the
number of apparently informative levels. The study therefore does not
establish an exhaustive or preregistered classification of learned partitions.

\paragraph{Reordering under recipe B.}
On the restricted input alphabets used above, the three runs with acquired
non-abelian quotients give .46, .54, and .24 bits on those channels, versus
at most .02 bits on the abelianization. One still-changing run remains
order-sensitive even when its projected channels appear insensitive,
illustrating why both diagnostics and the stage qualification are necessary.
The two recipes differ in architecture, input alphabet, and training budget;
their reordering results do not constitute a controlled comparison.

\section{Additional training and architecture comparisons}
\label{app:training-details}

\paragraph{Training duration.} In the 600k-update $D_4$ study, exact frontiers
end at 20, 22, and 39. The fastest run undergoes three temporary collapses at
325k, 365k, and 415k updates. Fits to its 120 evaluation points do not separate
linear growth from a logarithmic trajectory (RMSE 6.302 versus 6.306).
Extrapolated completion budgets of $10^{6.2}$ and $10^{8.1}$ updates depend on
these indistinguishable fits and are not predictions of asymptotic behavior.
The separate five-million-update run reaches frontier 37 at 3.005M updates
and ends at 21. Mean exact accuracy over positions 17--100 peaks at 62.51\%
and ends at 51.91\%. Quotient accuracy falls to 67.62\% before recovering to
98.71\%; long training can lose either type of information.

\paragraph{Additional five-million-update endpoints.}
\label{app:long-finals}
Four further full-group runs complete five million updates with the four-layer,
width-256 NeoX backbone, batch size 256, and constant learning rate
$5\times10^{-5}$: $D_4$ seed 43, $A_4$ seeds 43 and 44, and $S_4$ seed 43.
This is 1.28 billion training sequences per run, distinct from recipe B's
five-million-sequence budget. These runs are separate from the historical
$D_4$ trajectory in Fig.~\ref{fig:training-context}a. Table~\ref{tab:long-finals}
uses final position-wise accuracy arrays and common fixed windows.
They do not include a per-example output-distribution audit. On $S_4$,
the finer $S_3$ image accuracy is only 33.14\% over positions 17--100,
so the endpoint does not establish that this non-abelian quotient was learned.

\begin{table}[ht]
\centering\small
\begin{tabular}{lrrrr}
\toprule
Group & Exact 17--100 & Quotient 17--100 & Exact 70--100 & Exact 101--128\\
\midrule
$D_4$ & 53.33 & 99.55 & 49.65 & 15.04 \\
$A_4$ (43) & 35.47 & 99.96 & 25.13 & 19.87 \\
$A_4$ (44) & 33.13 & 99.98 & 25.01 & 18.09 \\
$S_4$ & 8.30 & 99.69 & 8.31 & 4.43 \\
\bottomrule
\end{tabular}
\caption{Additional five-million-update final accuracies (\%). Quotient
readouts use the abelianization. Parentheses distinguish the two $A_4$ seeds.}
\label{tab:long-finals}
\end{table}

\paragraph{Horizon and spectrum.} At horizon 24, exact frontiers cover
42--54\% of the horizon; horizons 48 and 100 reach similar absolute positions.
A two-seed continuation at horizon 24 adds 3 and 2 positions, versus a loss of
one position in each horizon-100 continuation, but does not meet the registered
confirmation criterion.
At horizon 24 on $\mathrm{SL}(2,3)$, the preregistered windows contain only
10--11 positions and yield conditional accuracies .1442/.1697/.1585
(seeds 42/43/44); the latter two retain their \texttt{PARTIAL} labels,
although exact accuracy at position 24 is .126/.124/.127 and subsequent
distribution diagnostics identify transition shoulders extending 9--11
positions beyond the exact frontier.\footnote{The horizon interventions use
bf16 on an RTX 5060, while their horizon-100 references use bf16 on RTX 4090s.} Curriculum training provides no clear benefit.
Boosting a two-dimensional Fourier component increases the mean frontier from
4.3 to 9.0--9.5 while reducing quotient accuracy to .37--.41. Comparisons use
the frozen-encoding control, whose quotient accuracy is .843, rather than the
learned-embedding baseline. Direction and dose dependence are supported;
the registered effect-size criterion is ambiguous. The lower-amplitude arm
learns characters approximately 10k updates earlier. The input-encoding intervention remains confined to $D_4$; the horizon
intervention also covers $A_4$ and $\mathrm{SL}(2,3)$.

\paragraph{Backbone, width, and depth.} GPT-2 and Llama reach $D_4$ frontiers
of 15--24 and 18--20 positions, respectively, while preserving post-frontier
conditional accuracy near $1/2$. Removing positions
from the NeoX baseline gives frontier 5 in one underpowered run; adding
sinusoidal positions slows quotient learning. Increasing width to 512 does
not remove the plateau. Under full-group inputs, the 2--6-layer $D_4$ grid comprises 18 runs:
12 parameter-matched runs with counts within 1.1\% of 3,163,648, and six
shared-width runs at width 256; their stored conditional accuracies span
.4991--.5017 across the full grid. Additional depth mainly
improves class accuracy and extends frontiers to 4--8 positions. Budget and depth interventions on $A_4$ and $\mathrm{SL}(2,3)$ comprise
18 runs; doubling the budget and adding two layers give median exact-frontier
gains of 3 and 5 positions on $A_4$, and 2 and 1 on $\mathrm{SL}(2,3)$,
respectively. All 18 stored conditional ratios are within .0012 of the
corresponding class-size baseline on the original evaluation windows:
positions 5--100, 17--100, and 65--100 for two, four, and six layers,
respectively. On a common 17--100 window, the largest excesses are .0275
for an $A_4$ six-layer run and .0144 for an $\mathrm{SL}(2,3)$ six-layer
run, consistent with wider positional transition regions. The six two-layer
runs have incomplete quotient learning and retain their
\texttt{UNDERPOWERED} labels. The original generator-input depth comparison is not parameter-matched; its
largest models remain partially accurate (.69--.79), despite losing the
original plateau. The subsequent parameter-matched generator-input comparison
gives conditional accuracies .5092, .6304, and .6470 for three-layer models
and .9137, .9746, and .9404 for five-layer models (seeds 42/43/44), with the
stored ratios evaluated over positions 9--100 and 33--100, respectively,
rather than a common window.

\paragraph{Depth and input distribution.} A full-group $D_4$ sweep over
2--6 layers contains 18 runs, including 12 parameter-matched runs and six
shared-width runs; their stored conditional accuracies span .4991--.5017. Additional depth improves quotient learning and extends the exact
prefix modestly. Under a generator-input recipe, a separate 3--5-layer sweep
extends the frontier from 3--6 to 51--58 positions, with median conditional
accuracy rising from .530 to .951. The original sweep also increases parameter count, so it does not isolate
depth. In a subsequent $D_4$ generator-input comparison at approximately 6.32 million
parameters, five-layer models reach token accuracies .776--.884 over positions
1--100, compared with .353--.396 for three-layer models across three seeds
each, although all six runs remain classified as \texttt{PARTIAL}. The support and optimization dependence is examined
in \S\ref{sec:crossing}. Reducing the training horizon from 100 to 24 on $A_4$ moves the median exact
frontier from 4 to 8 positions (conditional accuracy .2481--.2500), and
reducing it to 48 on $\mathrm{SL}(2,3)$ moves the median from 3 to 6
(.1284--.1319), while the 24-position $\mathrm{SL}(2,3)$ condition retains
two \texttt{PARTIAL} runs with conditional accuracies .1697 and .1585 in
the preregistered $[F_e(.75)+6,H]$ windows, where $H$ is the training horizon.

\paragraph{Larger models.} A paired Pythia-160M-architecture study uses eight
new seeds with shared block prefixes at 3, 4, 7, and 12 layers. At the 10k-update
classification point, median exact cutoffs are 6.5, 9, 15, and 39, while
quotient-first outcomes occur in 7, 5, 7, and 7 of eight seeds. Parameter count
varies from 98.5M to 162.3M. Three previously observed seeds are separate
replication checks. A width-256 $S_5$ study with 36 paired seeds at 4, 8, and
12 layers finds no associative-like outcome; parity-first counts are 33, 28,
and 31. These are budget-specific observations, not claims about eventual
convergence. Definitions and complete model specifications are in
App.~\ref{app:config}.

\paragraph{Matched recurrent models.} Two-layer LSTM and GRU models have
embedding/hidden widths 444 and 512 and match baseline parameter counts within
.2\% and .15\%. Each family has eighteen runs: three groups, two learning
rates, and three seeds. Fig.~\ref{fig:s4-training}b shows the
LSTM's dense 50-update replay through update 3,000 and the original logs
thereafter (App.~\ref{app:fig4-protocol}).
Fifteen runs meet the abelianization-stage criterion on 2--23 dense evaluations;
four of these runs have only two qualifying evaluations, 50 updates apart.
These are transient stages, not evidence of persistent plateaus. The three $A_4$ runs at $10^{-3}$
cross quotient and exact thresholds in the same interval. Stage accuracies
are .499--.509 on $D_4$, .253--.261 on $A_4$, and .083--.085 followed by
.247--.253 on $S_4$. The GRU curves in Fig.~\ref{fig:training-context} use the
original 1,000-update grid; the coset census below uses denser evaluations.
Recurrent controls use fp32 and shared GPUs, whereas the baseline uses
bf16 autocast. Three groups and one size do not support a general convergence
claim for gated recurrence.

\paragraph{Blind partition census.}
\label{app:blind-census}
For each stored checkpoint we evaluate a fixed set of 2,048 sequences
(seed 248,041) at positions 17--100 and form $P[q,p]$, the mean probability
placed on $p$ when the running product is $q$. We cluster the rows of $P$ by
connecting every pair at total variation below .15 and taking connected
components (single-linkage clustering). We take $H$ to be the block containing the
identity, and test whether $H$ is a subgroup, whether the blocks are its right
cosets $Hq$ or its left cosets $qH$, and whether $H$ is normal. A second diagnostic repeats these tests on the sets
$S(q)=\{p:P[q,p]\ge\frac12\max_{p'}P[q,p']\}$. No step refers to $[G,G]$ or to
any pre-chosen subgroup. We accept the support sets as a coset partition
only when they equal $Hq$ (or $qH$) for every $q$, with $H=S(e)$ a subgroup.
Thus the procedure can return a partition the quotient
readouts of \S\ref{sec:law} cannot express.

We run it on 41 Transformer checkpoints across nine groups and on recurrent
checkpoints from the matched arms. Row clustering returns 38 non-trivial
identity blocks, each equal to $[G,G]$, and three singleton partitions.
The singletons are two long-trained $D_4$ checkpoints whose exact frontier
has entered the evaluation window (exact accuracies .6022 and .6387), and
a nearly solved twelve-layer $S_5$ checkpoint (.9048).
The support procedure also returns 38 non-trivial commutator blocks and
three singleton partitions, but not at the same checkpoints. The two $D_4$
checkpoints retain commutator cosets under this procedure; two twelve-layer
$S_5$ checkpoints with residual errors instead split into singletons.
Neither procedure recovers a non-normal coset partition. These counts concern
this checkpoint census, not every Transformer resolution; the finer normal
quotients in \S\ref{sec:ladder} use a separate recipe and cohort.

The recurrent side returns one. Replaying the $S_4$ GRU runs with a census
every 100 updates ($5\times10^{-5}$: every 250), three runs hold a stage at
exact accuracy .48--.55 with the $S_3$ readout at .99 or above. Writing each
off-diagonal prediction as $p=hq$, the mass concentrates on the single element
$h=(0\,3)(1\,2)$ at .945--1.000; writing it as $p=qh$ instead spreads the same
mass over the three double transpositions at .321--.337 each. That spread is
the prediction rather than a null result: on a right-coset stage the left
reading returns $q^{-1}hq$, which sweeps the conjugacy class of $h$. The blind
census recovers $H=\{e,(0\,3)(1\,2)\}$ with right cosets and no normality on
the same checkpoints. One stage earlier, at exact accuracy .249--.251, the same
tally splits evenly over all three double transpositions at .331--.335, leaving
the whole $V_4$ fiber unresolved; that stage is the ordinary quotient
$S_4/V_4$, and one statistic therefore separates a normal stage from the coset
stage above it.

Three matched LSTM runs on $S_5$ at learning rate $10^{-3}$, replayed with
a census every 50 updates through update 8,000, take different refinement
paths. Two first show the quotient by $A_5$; the third has no observed
intermediate nontrivial quotient stage. We score
all 156 subgroups of $S_5$ at every evaluation rather than reporting a single
candidate: for each $H$ we take the total variation between the observed tally
and the uniform distribution on $H\setminus\{e\}$ that a right-coset stage on
$H$ would produce. A stage is an interval on which the best $H$ has total
variation below .15, exact accuracy within 15\% of $1/|H|$, and a duration of at
least 300 updates.

Seeds 42 and 44 first hold $A_5$, the only non-trivial proper normal subgroup of $S_5$
and the one the sign readout names, for 1,000 and 1,150 updates. Seed 42 then
holds an order-ten subgroup at .0986--.1090 for 2,300 updates and
$H=\{e,(0\,4)(1\,3)\}$ at .4904--.5010 for the remaining 3,250; seed 44 holds
the Klein group $\{e,(0\,1)(2\,4),(0\,2)(1\,4),(0\,4)(1\,2)\}$ at .2276--.2759
for 1,900. Seed 43 has no observed intermediate nontrivial quotient stage. $A_5$ is the only
non-trivial proper normal subgroup of $S_5$, and across all 161 evaluations of
that run the total variation to $A_5$ never falls below .500, while its sign
readout stays between .496 and .839 until the run solves. It instead walks the
point-stabiliser chain: $S_4$ at .0410--.0421 for 400 updates, the copy of
$S_3$ permuting $\{1,3,4\}$ at .1638--.1704 for 900, and $H=\{e,(3\,4)\}$ at
.4805--.5129 for 1,200. Of these eight stages the two $A_5$ stages are normal
and the other six are not, and seed 43 reaches exact tracking by update 3,850 without an observed
intermediate nontrivial quotient stage. At update 8,000, seeds 42 and 44
remain at exact accuracies .4997 and .4868, respectively. The latter is
already above its identified $1/4$ stage and is not classified as a new stage.

Within each of the eight identified stage intervals, exactly one of the 156
subgroups meets the fit threshold at every evaluation. The largest accepted
total variation is .112, while every alternative is at least .503 away.
This separation establishes uniqueness within the accepted intervals, not
throughout training. Of 483 scheduled evaluations, 386 have one fitting
subgroup and 97 have none; 22 of the latter precede the run's first exact
accuracy of .90. Evaluations with no errors do not define a normalized
off-diagonal distribution and are not treated as partial stages. Orders
3, 5, 8, 12, and 20 occur among $S_5$ subgroups but not among the eight stages.

The left reading again lands where a right-coset stage puts it, on the
conjugacy class of $h$: .0666 against $1/15$ for the double transposition and
.0977 against $1/10$ for the transposition, two different predictions on one
group. A blind partition of the stored seed-42 checkpoint at update 8,000 agrees,
returning 60 blocks of two with nearest-neighbour distances .0002--.0008
against the .15 threshold. This test clusters mean softmax rows. In contrast,
the dense replay criterion uses pooled full-state argmax errors and accuracy;
it does not establish per-example softmax uniformity or independently recover
a full row partition at every stage. The non-normal interpretation is supported
by the error structure and subgroup checks, not by reciprocal accuracy alone.

\paragraph{Display of the $S_5$ stages.}
Figure~\ref{fig:long-training-stages}c retains all 161 scheduled evaluations
of the selected LSTM seed, each using 2,048 sequences on positions 17--100.
The log accuracy axis separates the reciprocal levels. Colored segments
mark accepted intervals; intervening segments remain gray. The selection
rule, GRU and Transformer trajectories, and pooled error-statistic panel (d)
are specified in App.~\ref{app:nonsolvable-census}.

These censuses are descriptive and were not preregistered. The stages they
describe are transient, and the recurrent runs carry the fp32 and shared-GPU
disclosures above.

\paragraph{Causal audit of a non-normal coset stage.}
\label{app:causal-coset}
The censuses above read outputs. To ask whether the recurrent state itself
carries the coset, we preregistered a five-seed replication on $S_5$ with the
matched two-layer LSTM, 8,000 updates, a checkpoint every 50 updates, and four
nested claims: the full state is a causal $G$-set state; a class-mean
between-coset subspace is sufficient and necessary; canonical $H\backslash G$
coordinates are causally used; and the raw coordinates satisfy a literal affine
representation law. A selector reading only the output census admitted a run
when one non-normal order-two subgroup fit at seven consecutive evaluations
(300 updates) with total variation below .15, exact accuracy within .075 of
$1/2$, and a runner-up gap of at least .10. It took the first such window and
the seventh checkpoint in it. Confirmation required three of five runs.

One run qualified. Seed 23,204 held $H=\{e,(0\,4)(1\,2)\}$ from update 2,100,
and the audit ran on its update-2,400 checkpoint with fresh length-40 prefixes
and disjoint fit and test splits of 2,880 sequences each, with intervals
clustered by prefix history. Enumerating all 120 continuations of every test
prefix (345,600 resumed forward passes), right-coset top-1 accuracy has a
clustered 95\% lower bound of .998, and the functional stabilizer of the
identity state separates $H$ without being given it: the largest distance
inside $H$ is .008 and the smallest outside is .988. Replacing the full
$(h,c)$ state moves the prediction to the donor coset in every cross-coset
pair, retains the recipient coset in none, and reproduces the clean donor
output. A rank-59 between-coset subspace, fitted from fit-split class means
alone, gives target-coset accuracy .999 when kept and .040 when removed, while
a matched random subspace gives .073 and .999. A decoder of the 60 canonical
coordinates reads .979 held out and .981 on the true next state, and a
minimum-norm edit setting the recipient's coset scores to the donor's moves the
prediction to the donor coset in .992 of pairs, against .000 for the matched
random control.

The literal representation law fails at the same checkpoint. Input-conditioned
affine operators predict held-out one-step coordinates with $R^2$ .958, and
states reconstructed from them keep the target coset in .998 of cases, but the
operator stabilizer at the identity coset does not separate $H$ (largest
distance inside 26.0 against smallest outside 23.0) and $T_xT_y$ does not match
$T_{xy}$ ($R^2=-0.52$). We therefore report a causal coset-valued state with a
strong one-step affine approximation, not a recovered representation. An
earlier audit of a different $S_5$ LSTM checkpoint, not preregistered, gives
the same pattern including this failure.

Incidence, not the interventions, is what the package leaves open. Scored post
hoc by the descriptive criterion used above, the five runs hold thirteen
stages: four are $A_5$ and nine are not normal, at orders 2, 3, 6, and 12. Only
seed 23,204 held an order-two stage, which is what the selector was
preregistered to admit, so the package returned one qualifying run of five
(Wilson 95\% interval .036--.624) and no across-seed replication.

We then preregistered the same audit for two order-three stages in the same
cohort, conditional on output selection rather than on incidence: seeds 23,203
and 23,205 at updates 4,050 and 6,750, with $|H|=3$ and 40 right cosets.
Replacing the full state again moves the prediction to the donor coset in every
cross-coset pair, retains the recipient coset in none, and reproduces the clean
donor output, in both cells, and the between-coset subspace is again sufficient
and necessary in both. The registered thresholds are met in one of the two.
Seed 23,205 falls below the .98 bound on continuation coset accuracy at .976
and below the .95 bound on the held-out coset decoder at .901, so that cell is
scored a miss on two of the four claims and the conditional package returns no
replication either. The affine representation law fails at all three audited
checkpoints.

\paragraph{Bilinear recurrence.} A separate state-128 model covers ten groups
with three seeds each, plus ten-seed follow-ups. All thirteen tested stages
match their $1/|N|$ accuracy levels within the registered band. Among thirty
initial cells, ten pass through the abelianization, eleven through finer
quotients, and nine show no detected stage. The abelian $C_9$ also exhibits a
$C_3$ stage in two seeds. Unlike the transformers and matched LSTMs, this model
acquires a stage across positions within three evaluations in all sixteen
measured cases. It differs in encoding, optimizer, and readout: coset
probabilities are summed before argmax. Its stage accuracies therefore do not
establish the per-example output flatness measured for transformers.

\paragraph{Equal-score encoding control.}
\label{app:score-control}
On $S_3\times C_3$, we match the factors entering the analytic representation
score of \citet{marchetti2026sgc}. The complex one-dimensional $C_3$ character
(with its conjugate) and the real two-dimensional standard representation of
$S_3$ have equal normalized Fourier amplitudes and equal $C_\rho n_\rho=2$:
$2\times1$ and $1\times2$, respectively. Here $n_\rho$ is the representation
dimension and $C_\rho$ is the real/complex type factor. These matches tie the
candidate score at every prefix length. The full Peter--Weyl input encoding
is frozen, with all block amplitudes equal and a random orthogonal mixing
chosen by the run seed; each token embedding has norm .32.

\begin{table}[!htbp]
\centering
\caption{Matched-score encoding control on $S_3\times C_3$ at 74,219 updates.
Accuracies (\%) are averaged over positions 17--100. Each frontier is the
largest $F\leq100$ for which every position $1,\ldots,F$ reaches its stated
accuracy threshold.}
\label{tab:score-control}
\small
\begin{tabular}{@{}lrrrrr@{}}
\toprule
Seed & $C_3$ & Sign & $S_3\mid\mathrm{sign}$ & $F_{C_3}(.90)$ & $F_{S_3}(.75)$ \\
\midrule
42 & 99.88 & 99.89 & 33.37 & 100 & 3 \\
43 & 99.58 & 99.66 & 33.35 & 100 & 2 \\
44 & 98.60 & 99.80 & 33.36 & 100 & 2 \\
\bottomrule
\end{tabular}
\end{table}

Three seeds (42--44) use a four-layer, width-256 GPT-NeoX model with four heads,
rotary positions, and no dropout. Inputs are sampled independently and
uniformly from all 18 group elements, with cross-entropy supervision at each
prefix of length-100 sequences. AdamW uses learning rate $5\times10^{-5}$,
linear decay to zero over 74,219 updates, batch size 256, zero weight decay,
gradient clipping at 1, and BF16 autocast. Final evaluation uses 8,192 fresh
length-128 sequences with a fixed evaluation seed. Table~\ref{tab:score-control}
reports positions 17--100. Readouts sum full-state probabilities over the
other factors before taking argmax; $S_3\mid\mathrm{sign}$ is $S_3$ accuracy
restricted to examples with a correct sign readout. Since
$[S_3\times C_3,S_3\times C_3]=A_3\times\{0\}$, sign information alone leaves
three possible $S_3$ states and gives the $1/3$ reference level.

All three seeds retain the separation despite the matched score factors.
This limits a score-only account of output resolution in this Transformer
setup. It does not test the original two-layer theorem: the architecture,
loss, prefix supervision, and training rule differ, and the deliberate tie
violates its distinct-score assumption. The control does not uniquely identify
order dependence as the cause, nor does an output readout establish the absence
of internal matrix features.

\subsection{Matched state-space models}
\label{app:ssm}

We test whether a selective state-space model learns the same output
resolutions as our Transformer and recurrent controls. We train Mamba-2
\citep{dao2024mamba2} on $D_4$, $A_4$, and $S_4$, retaining all 18 two-layer
runs and all three four-layer $D_4$ runs in Table~\ref{tab:ssm-results}.
No run reaches .90 full-state accuracy on positions 17--100 at the final
evaluation. Several instead predict the abelianization accurately while
their conditional exact accuracy remains near the class-size baseline.

\paragraph{Models and parameter matching.}
The two-layer model has hidden width 496, expansion factor 2, head dimension
32, 31 heads, state size 80, one state group, and a convolution kernel of size 4.
It has 3,164,778, 3,168,746, and 3,180,650 trainable parameters on $D_4$,
$A_4$, and $S_4$, respectively. These are within 0.28\% of the corresponding
four-layer, width-256 NeoX controls. The four-layer $D_4$ model uses width
352, 22 heads, and state size 48, with 3,166,312 parameters. All other
configuration fields are shared. Thus the depth comparison holds total
parameters approximately fixed by changing width and state size as well.
We use \texttt{Mamba2ForCausalLM} from Transformers 5.13.1, with untied
input embeddings and output weights, no cache, and the pure-PyTorch chunked
state-space duality implementation (chunk size 50), without fused kernels.

\paragraph{Training and arm selection.}
We sample length-100 sequences online, independently and uniformly from the
full group, and supervise the product at every position with cross-entropy.
AdamW uses zero weight decay, gradient clipping at 1, no warmup, and linear
learning-rate decay to zero over 74,219 updates. The effective batch size is
256, accumulated over four batches of 64. Training and evaluation use FP32
without autocast, as in the LSTM and GRU controls; recipe-A Transformers use
BF16 autocast. The comparison therefore does not isolate architecture from
numerical precision.

For each group we run seeds 42--44 at initial learning rates
$5\times10^{-5}$ and $10^{-3}$. The preregistered rule assigns the higher-rate
arm to the group-level comparison when at least two of three lower-rate runs
have quotient accuracy below .99. All three groups meet this condition.
The registered four-layer $D_4$ branch is run because the two-layer study
does not reach exact tracking. At $5\times10^{-5}$, only one of its three
seeds falls below .99 quotient accuracy, so no higher-rate four-layer arm
is triggered. The table retains both two-layer arms and every seed,
including runs with incomplete quotient learning.

\paragraph{Readouts and results.}
During training we evaluate 1,024 length-128 sequences every 1,000 updates
using sampling seed $s+148999$ for training seed $s$. Final evaluation uses
8,192 fresh sequences with sampling seed 248041. All values below average
positions 17--100. We project each full-state argmax onto the abelianization
to measure quotient correctness. Since an exact prediction is necessarily
quotient-correct, conditional exact accuracy is the ratio of the two correct
counts on this common window. The finer $S_3$ readout for $S_4$ uses the
same projection rule.

At learning rate $10^{-3}$, two of three seeds per group exceed .99 quotient
accuracy, with conditional exact accuracy within .001 of $1/2$, $1/4$, and
$1/12$ on $D_4$, $A_4$, and $S_4$, respectively. The remaining seed in each
group does not meet the quotient threshold. The two quotient-resolved $S_4$
runs reach only 33.33\% and 33.26\% on the finer $S_3$ quotient.
At $5\times10^{-5}$ on $D_4$, the four-layer configuration improves quotient
accuracy from 41.70--62.54\% to 98.82--99.97\%, while conditional exact
accuracy stays near $1/2$. These results distinguish acquiring a quotient
from resolving states within its classes. They establish finite-budget
outcomes under the reported recipes; accuracy readouts alone do not establish
uniform per-example output probabilities or a complete sequence of transient
quotient stages in Mamba-2.

\begin{table}[!htbp]
\centering
\small
\setlength{\tabcolsep}{5pt}
\begin{tabular}{@{}llrrrrr@{}}
\toprule
Group & Layers & Initial LR & Seed & Exact & Quotient & Exact $\mid$ quotient\\
\midrule
$D_4$ & 2 & $5\times10^{-5}$ & 42 & 31.17 & 62.54 & 49.84\\
$D_4$ & 2 & $5\times10^{-5}$ & 43 & 30.29 & 60.69 & 49.90\\
$D_4$ & 2 & $5\times10^{-5}$ & 44 & 20.83 & 41.70 & 49.97\\
\addlinespace[3pt]
$D_4$ & 2 & $10^{-3}$ & 42 & 49.06 & 98.11 & 50.01\\
$D_4$ & 2 & $10^{-3}$ & 43 & 49.84 & 99.85 & 49.91\\
$D_4$ & 2 & $10^{-3}$ & 44 & 49.86 & 99.91 & 49.91\\
\addlinespace[3pt]
$A_4$ & 2 & $5\times10^{-5}$ & 42 & 10.00 & 40.10 & 24.94\\
$A_4$ & 2 & $5\times10^{-5}$ & 43 & 8.46 & 33.49 & 25.27\\
$A_4$ & 2 & $5\times10^{-5}$ & 44 & 24.77 & 99.38 & 24.92\\
\addlinespace[3pt]
$A_4$ & 2 & $10^{-3}$ & 42 & 15.93 & 63.92 & 24.92\\
$A_4$ & 2 & $10^{-3}$ & 43 & 25.03 & 99.89 & 25.05\\
$A_4$ & 2 & $10^{-3}$ & 44 & 25.02 & 99.86 & 25.05\\
\addlinespace[3pt]
$S_4$ & 2 & $5\times10^{-5}$ & 42 & 5.70 & 68.04 & 8.37\\
$S_4$ & 2 & $5\times10^{-5}$ & 43 & 7.63 & 91.55 & 8.33\\
$S_4$ & 2 & $5\times10^{-5}$ & 44 & 5.34 & 64.92 & 8.23\\
\addlinespace[3pt]
$S_4$ & 2 & $10^{-3}$ & 42 & 5.50 & 65.71 & 8.37\\
$S_4$ & 2 & $10^{-3}$ & 43 & 8.35 & 99.97 & 8.35\\
$S_4$ & 2 & $10^{-3}$ & 44 & 8.35 & 99.71 & 8.37\\
\addlinespace[3pt]
$D_4$ & 4 & $5\times10^{-5}$ & 42 & 49.40 & 98.82 & 50.00\\
$D_4$ & 4 & $5\times10^{-5}$ & 43 & 49.91 & 99.97 & 49.93\\
$D_4$ & 4 & $5\times10^{-5}$ & 44 & 49.79 & 99.57 & 50.00\\
\bottomrule
\end{tabular}

\caption{All matched Mamba-2 endpoints after 74,219 updates. Accuracies are
percentages on positions 17--100. ``Quotient'' is the abelianization readout;
``Exact $\mid$ quotient'' conditions on its correctness. Reference conditional
accuracies are 50\%, 25\%, and $100/12\%$ on $D_4$, $A_4$, and $S_4$.
Each row is one completed run, without selection by accuracy.}
\label{tab:ssm-results}
\end{table}

\section{Support interventions and failed predictions}
\label{app:support-details}

\paragraph{Task and counting solution.} We train models to predict each
running product, using inputs drawn independently and uniformly from a
specified set of group elements. We call this set the input alphabet.
In $D_4$, let $r$ be a quarter-turn and $s$ a reflection, with
$r^4=s^2=e$ and $srs=r^{-1}$. On the input set $\{r,s\}$, let $R_t$ and $S_t$
count rotations and reflections through position $t$, and let $E_t$ count
rotations at even positions. Then
\begin{equation}
q_t=r^{(R_t\bmod2)+2(E_t\bmod2)}s^{S_t\bmod2}.
\label{eq:generator-three-parities}
\end{equation}
Thus three parities recover the full state. We prove the identity below and
test both whether models use these counts and how changing the allowed inputs
affects learning. All support comparisons in this appendix specify their
training recipe; they are separate from the full-group baseline.

The recipe-B generator replication reaches .9992--.9999 at depth 3 and 1.0000
in a depth-4 arm. A single higher-learning-rate run is partial rather than
fully accurate. Cross-support transfer is poor in both directions, so success
on generators does not imply competence on arbitrary full-group inputs.

Removing positions gives chance performance in three seeds. Switching to full
group inputs produces two plateaus and one partial solution; replacing the
optimization block gives frontiers 12, 21, and 22 that do not improve with
3.8 times the budget. Changing MLP width preserves successful tracking but
changes its speed. Reverse comparisons using the baseline model on generators,
or under the alternative optimizer at $10^{-4}$, remain near $1/2$.
At $3\times10^{-4}$, three reverse-comparison seeds improve and then collapse
to uniform predictions. These are conditional ablations of complete recipes,
not universal necessity or sufficiency results.

For two $D_4$ commutator cosets, frontiers are 44, 81, and 100; three cosets
give frontiers 7--11. A lift-bias sweep is retained in the coverage table but
not interpreted as a single causal axis because it also changes exposure and
path statistics. Reflection-coset inputs permit full tracking in two of three
$D_3$ seeds, with the third reaching frontier 85; $D_5$ and $D_7$ have frontiers
30 and 17--42. The unequal seed counts are listed in App.~\ref{app:coverage}.

The matched two-coset $Q_8$ and $D_4$ comparison controls group order, fiber and
quotient sizes, support size, and budget. Both input supports admit the position-weighted counting form characterized by Lemma \ref{lem:support-counting}. All three $Q_8$ runs improve,
with frontiers 17, 21, and 48 and ratios .5483, .5733, and .7026. This rejects
the registered prediction that the decomposition is required for improvement.
A possible speed difference is based on three paired runs with overlapping
cross-seed ranges.

Exact Fourier-degree calculations provide a separate description of the input
supports. Under full-group inputs, both quotient and residual sectors have
pure degree $t$ in the checked groups. On $D_4$ generators, residual degrees
are $\lfloor t/2\rfloor$ and $\lceil t/2\rceil$; on $S_3$ and $D_6$ they span
multiple degrees centered near $t/2$. Balanced $Q_8$ inputs $\{\pm i,\pm j\}$
and two-coset $D_4$ inputs retain degree $t$. These identities distinguish
the supports but do not supply a learning-rate law.

\paragraph{A counting criterion for restricted supports.}
Let $G$ be nilpotent of class at most two, let $K=[G,G]\subseteq Z(G)$, and
let $\pi:G\to Q=G/K$ be the quotient map. Fix a section $s:Q\to G$,
so every element has a unique
representation $s(q)z$ with $z\in K$. We write $K$ additively.
For a nonempty input alphabet $\Sigma\subseteq G$, let $A=\pi(\Sigma)\subseteq Q$ be its
quotient image and define the commutator pairing
\[
\omega(q,q')=[s(q),s(q')]\in K,\qquad [u,v]=uvu^{-1}v^{-1}.
\]
Centrality of $K$ makes this pairing independent of the section.
For a word $w=x_1\cdots x_T$, let $z(w)$ denote the central coordinate of its
product and let $M(w)$ denote its input multiset.

\begin{lemma}[Position-weighted counting on a support]
\label{lem:support-counting}
The following conditions are equivalent:
\begin{enumerate}[leftmargin=*,nosep]
\item There are functions $F:\{\text{multisets over }\Sigma\}\to K$ and $c:A\to K$
such that, for every word over $\Sigma$,
\[
z(w)=F(M(w))+\sum_{i=1}^T i\,c(\pi(x_i)).
\]
\item The commutator pairing has a potential on $A$:
$\omega(q,q')=c(q')-c(q)$ for some $c:A\to K$.
\item For every $q,q',q''\in A$,
$\omega(q,q')+\omega(q',q'')+\omega(q'',q)=0$.
\end{enumerate}
\end{lemma}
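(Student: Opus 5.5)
The plan is to reduce everything to a single identity: how the central coordinate changes under an adjacent transposition of a word. Since $K\subseteq Z(G)$, $K$ is abelian, which justifies writing it additively. For letters $a,b\in G$ and words $u,v$ over $\Sigma$, write $ab=[a,b]\,ba$. Centrality of $[a,b]$ then gives
\[
z(u\,a\,b\,v)=z(u\,b\,a\,v)+[a,b].
\]
The commutator $[a,b]$ depends only on $\pi(a),\pi(b)$. Indeed, changing $a$ to $ak$ with $k\in K$ central leaves $[a,b]$ unchanged. Hence $[a,b]=\omega(\pi a,\pi b)$, and this also confirms that $\omega$ is section-independent. We also record antisymmetry: $\omega(q',q)=[s(q'),s(q)]=[s(q),s(q')]^{-1}=-\omega(q,q')$. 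This swap identity is the only group-theoretic input; the rest is bookkeeping.

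\emph{(1)$\Rightarrow$(2).} Take letters $a,b\in\Sigma$ with $\pi(a)=q$ and $\pi(b)=q'$. Compare the two-letter words $ab$ and $ba$, or these pairs inside any longer word at positions $i,i+1$. They have the same multiset, so $F$ cancels. The left side of (1) changes by $\omega(q,q')$ by the swap identity. The position-weighted sum changes by
\[
i\,c(q)+(i+1)c(q')-i\,c(q')-(i+1)c(q)=c(q')-c(q).
\]
Equating the two changes gives the potential on $A$.

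\emph{(2)$\Rightarrow$(3).} This step is telescoping: the three differences cancel around the cycle.

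\emph{(3)$\Rightarrow$(2).} Fix a base point $q_0\in A$, which exists because $\Sigma$ is nonempty, and set $c(q)=\omega(q_0,q)$. Apply (3) to the triple $(q_0,q,q')$ and use antisymmetry to rewrite $\omega(q',q_0)=-\omega(q_0,q')$. This gives $\omega(q,q')=c(q')-c(q)$.

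\emph{(2)$\Rightarrow$(1).} Define
\[
\Psi(w)=z(w)-\sum_i i\,c(\pi(x_i)).
\]
By the swap identity and the computation in the first step, an adjacent transposition changes both terms of $\Psi$ by the same amount $c(q')-c(q)$, so $\Psi$ is unchanged. Any two words with the same multiset are related by a sequence of adjacent transpositions, because $S_T$ is generated by them. Therefore $\Psi$ factors through $M(w)$, and we set $F(M)=\Psi(w)$ for any word $w$ with multiset $M$.

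I do not expect a substantive obstacle. The points needing care are sign conventions and well-definedness. The direction of the commutator must match the convention $[u,v]=uvu^{-1}v^{-1}$ used in the swap identity. In (1)$\Rightarrow$(2), the potential $c$ is recovered only on $A$, which is all that (2) requires. In (2)$\Rightarrow$(1), $F$ must be well defined on multisets that actually occur as inputs, which is exactly what the transposition argument provides.
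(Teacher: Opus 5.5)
Your proof is correct and follows essentially the same route as the paper: you compare the adjacent-swap identity $z(\cdots ab\cdots)-z(\cdots ba\cdots)=\omega(\pi a,\pi b)$ with the weighted-sum change $c(q')-c(q)$, which gives $(1)\Leftrightarrow(2)$ once adjacent transpositions connect all words with a given multiset, and you use the base-point construction $c(q)=\omega(q_0,q)$ for $(3)\Rightarrow(2)$. Your version also makes explicit the antisymmetry of $\omega$ and the nonemptiness of $A$, which the paper's proof uses without stating.
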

\begin{proof}
For two letters $u,v$, centrality gives
$uv=[u,v]vu$ and hence $z(uv)-z(vu)=\omega(\pi(u),\pi(v))$.
The two words have the same multiset. Under condition 1, subtracting their
weighted terms gives
\[
\big[c(\pi(u))+2c(\pi(v))\big]-\big[c(\pi(v))+2c(\pi(u))\big]
=c(\pi(v))-c(\pi(u)),
\]
which proves $1\Rightarrow2$.

Assume condition 2. Swapping adjacent letters $u,v$ at positions $i,i+1$
changes the product coordinate by $\omega(\pi(u),\pi(v))$ when the difference
is taken as the original word minus the swapped word. The surrounding
prefix and suffix do not change this difference because the commutator is
central. The weighted sum changes by exactly
$c(\pi(v))-c(\pi(u))$. Thus
$z(w)-\sum_i i\,c(\pi(x_i))$ is invariant under adjacent swaps.
Any two words with the same multiset are connected by such swaps, so this
difference defines $F(M(w))$, proving $2\Rightarrow1$.

Condition 2 implies condition 3 by telescoping. Conversely, fix $q_0\in A$
and set $c(q)=\omega(q_0,q)$. Condition 3 applied to
$(q_0,q,q')$ yields $\omega(q,q')=c(q')-c(q)$.
\end{proof}

The proof does not require a special choice of section. Any additional terms
that depend only on the input multiset are absorbed into $F$, so we do not
need an explicit pairwise formula for the central coordinate. The lemma
characterizes the displayed position-weighted counting form, not every
computation that might use counts.

For $D_4$, $K=\langle r^2\rangle\cong C_2$ and $Q\cong C_2\times C_2$.
Writing quotient elements as $(a,b)$, the commutator pairing is
$\omega((a,b),(a',b'))=ab'+ba'$ in $C_2$. Any two quotient classes satisfy
the cycle condition. For three distinct classes $q,q',q''$, their cycle
sum is $\omega(q+q'',q'+q'')=1$, because the two arguments are distinct
nonzero vectors in $C_2\times C_2$. The counting form therefore exists exactly
when the allowed inputs belong to at most two commutator cosets.
Concretely, $r$ and $r^3$ belong to one coset, and $s$ and $r^2s$ to another.
Adding $r^3$ to $\{r,s\}$ therefore preserves the criterion, whereas adding
$e$, which belongs to a third coset, violates it. In the latter case, no
sum of token contributions weighted only by absolute position can recover
the central coordinate, even with an arbitrary multiset-dependent term.
This does not exclude richer computations using input order.

On $\{r,s\}$, let the $k$th $r$ occur at position $p_k$.
There are $p_k-k$ preceding reflections, so the rotation exponent is
\[
a_t=\sum_{k=1}^{R_t}(-1)^{p_k-k}
\equiv R_t-2\sum_{k=1}^{R_t}(p_k-k)
\equiv R_t^2-2E_t
\equiv (R_t\bmod2)+2(E_t\bmod2)\pmod4.
\]
The reflection exponent is $S_t\bmod2$, proving
Eq.~\ref{eq:generator-three-parities} for all lengths.
For $H_3(\mathbb Z/p)$ with odd $p$, the quotient images of
$\{a,a^{-1},b,b^{-1}\}$ violate the cycle condition:
the triple $(a,a^{-1},b)$ has cycle sum $-2\omega(a,b)\ne0$.
This excludes the displayed counting form on that support; the associated
training outcomes are empirical results rather than consequences about
learnability.

\begin{table}[htbp]
\centering
\small
\begin{tabular}{@{}p{.30\linewidth}cccp{.36\linewidth}@{}}
\toprule
Alphabet & Cosets & Form & Runs & Outcome (recipe B) \\
\midrule
$\{r,s\}$ (preregistered) & 2 & yes & 3 & conditional .9997--1.000 at 25--45k updates \\
$\{r,r^3,s,r^2s\}$ (two cosets) & 2 & yes & 3 & frontiers 44, 81, 100 \\
$\{r,r^3,s\}$ & 2 & yes & 2 & frontier 38 rising; 24 stalled (300k) \\
$\{r,s,e\}$ (preregistered) & 3 & no & 3 & frontiers 25, 26, 23; conditional .50 \\
$\{r,s,rs\}$ & 3 & no & 1 & frontier 20 \\
$\{r,s,r^2\}$ & 3 & no & 1 & frontier 14 \\
three nontrivial cosets & 3 & no & 3 & frontiers 7--11 \\
full alphabet & 4 & no & 3 & frontiers 12--22 \\
\bottomrule
\end{tabular}
\caption{Support criterion against $D_4$ outcomes under recipe B. Frontiers are
the number of leading positions with exact accuracy at least .75, read at 150k
updates unless noted; earlier rows come from the support interventions above.
The two preregistered rows use seeds 47--49 under a frozen protocol; an earlier
exploratory set (seeds 42--44) gave the same classification for both
alphabets, and the remaining new rows are exploratory. The criterion characterizes availability of the stated counting form;
it does not guarantee that training learns it.}
\label{tab:support-criterion}
\end{table}

\paragraph{Transfer in both directions.} Twelve generator-trained endpoints
evaluated on the full alphabet give exact accuracy .124--.125 and quotient
accuracy .249--.250, the chance levels, with peaked rather than uniform
outputs; these inputs contain tokens never seen in training. In the reverse
direction all tokens are trained: six recipe-B full-alphabet models evaluated
on $\{r,s\}$ sequences keep conditional accuracy .50--.55 with exact frontiers
7--11, and five of six lose the quotient itself (joint accuracy .29--.35
against a positional chance level of .50).

\paragraph{What the $\{r,s\}$ models compute.} At the input of the last block
of the successful $\{r,s\}$ models, ridge regressions read the running counts of $r$
at odd positions, of $r$ at even positions, of $r$, and of $s$ with $R^2$
.97--.99, with different directions at odd and at even positions; the parities
of these counts are at chance until the final layer. We then replace the
residual stream entering the last block. With a donor that flips one odd
and one even token, replacing all odd-position prefix states moves the output
toward the odd-flip label only (never toward the even-flip label), replacing a
growing fraction of those states moves it monotonically, and replacing states
before the flipped position does nothing. Replacing half the prefix leaves the
takeover incomplete (.21--.69) because the query position's own residual also
carries the base counts; replacing it as well gives takeover .76--1.00 at
positions 32 and 64 in eight models, with cross-class rates at most .03. A
prefix-product representation would instead make the outcome depend on which
flip comes later and would produce the double-flip label, neither of which
occurs. A preregistered held-out test on two new seeds meets all 18 frozen
criteria in one seed and 14 in the other (misses: .760 against .80 at position
64, and three margins of .004--.06 at position 100).
The results above distinguish the exploratory intervention study from its
held-out confirmation; their criteria and outcomes are reported separately.

\section{Heisenberg tracking, prefix counts, and interventions}
\label{app:heisenberg}
The integer Heisenberg task lets us ask how a model recovers state
information that depends on input order. Here we first derive the
computation required for the central coordinate. We then test whether
trained models distinguish words that have the same counts but different
products, whether the required prefix count is readable from their
activations, and how editing count-associated activations changes their
predictions. These tests address behavior, representation, and the effect
of an intervention. However, they do not identify a complete circuit.

\subsection{Task and training}
\paragraph{The counting identity.}
An element of $H_3(\mathbb Z)$ has coordinates $(x,y,z)$, with multiplication
\[
(x,y,z)(u,v,w)=(x+u,\;y+v,\;z+w+xv).
\]
We use the four tokens $a=(1,0,0)$, $a^{-1}=(-1,0,0)$,
$b=(0,1,0)$, and $b^{-1}=(0,-1,0)$. Write token $i$ as
$g_i=(u_i,v_i,0)$ and its running product as
$g_1\cdots g_i=(X_i,Y_i,Z_i)$, starting from $(0,0,0)$.
Thus $u_i$ is the signed increment to the $a$ count and $v_i$
is the signed increment to the $b$ count. Multiplication gives
\[
X_i=X_{i-1}+u_i,\qquad
Y_i=Y_{i-1}+v_i,\qquad
Z_i=Z_{i-1}+X_{i-1}v_i.
\]
The first two coordinates are therefore net counts. Summing the last
update gives
\[
Z_t=\sum_{i\le t}X_{i-1}v_i
   =\sum_{k<i\le t}u_kv_i.
\]
Each $b$ or $b^{-1}$ contributes according to the signed number of
$a$ tokens that preceded it. This derivation follows directly from group multiplication. To give an example how $Z$ differs while $(X,Y)$ is the same, $aab$, $aba$, and $baa$
all have $(X,Y)=(2,1)$, but their products have $Z=2,1,0$,
respectively: the $b$ encounters two, one, or no preceding $a$ tokens.
Final net counts alone do not determine $Z$.
Therefore, a model that predicts \(Z\) accurately for arbitrary words must take their order into account.

\paragraph{Architecture and data.}

We test whether models' predictions distinguish order-sensitive products. We use $H_3(\mathbb Z)$ with the multiplication in
Sec.~\ref{sec:heisenberg-extension}. Each word contains independent uniform
samples from $\{a,a^{-1},b,b^{-1}\}$. The backbone has four GPT-NeoX blocks,
width 256, four attention heads, and 3,160,576 parameters. Separate linear
heads classify $x,y\in[-128,128]$ and $z\in[-512,512]$, with an additional
out-of-window class for $z$, giving 3,554,816 parameters in total. Loss is the
sum of three cross-entropies at every position. A prediction is jointly
correct only when all three coordinates are correct; the abelianization
readout requires both $x$ and $y$ to be correct.

Three models use learning rate $10^{-3}$ and seeds 42--44, with AdamW,
zero weight decay, linear
learning-rate decay without warmup, batch size 256, length 100, 74,219 updates,
and bf16 autocast. The final evaluation uses 8,192 iid length-128 words.
Table~\ref{tab:heisenberg-final} reports the original final evaluations;
subsequent frozen-checkpoint replays can differ slightly with numerical
precision and batch shape. Paired tests below use their own controls and
must not replace these unconditional iid accuracies.

\begin{table}[ht]
\centering\small
\begin{tabular}{lrrrr}
\toprule
Seed & Joint 17--100 & Quotient 17--100 & Joint at 100 & Joint 101--128\\
\midrule
42 & 98.10 & 100.00 & 92.90 & 36.82 \\
43 & 98.17 & 100.00 & 93.74 & 37.32 \\
44 & 98.07 & 100.00 & 92.81 & 27.32 \\
\bottomrule
\end{tabular}

\caption{Integer Heisenberg final accuracy (\%). All models use learning rate $10^{-3}$. Joint accuracy requires all coordinates; quotient
accuracy requires $(x,y)$.}
\label{tab:heisenberg-final}
\end{table}

The underlying group is infinite, but the tested words and output ranges are
bounded. A sufficiently large finite quotient could agree on this window.
These experiments establish order-sensitive tracking at tested lengths, not
unbounded integer computation or a causal benefit of an infinite state space.
Uniform-within-class baselines from the finite-group experiments do not extend
to a uniform distribution on the infinite center. Comparisons with finite
Heisenberg tasks also change support and output parameterization.

\subsection{Reordering tests}

We test whether the models respect several consequences of the Heisenberg
group law. Each condition constructs a pair of input sequences through a
controlled edit. Several tests edit a block within a common prefix $p$ and
suffix $s$; the odd/even-count test instead reorders the whole sequence.
Some edits preserve the final product,
whereas others preserve $(X,Y)$ but change $Z$ by a known amount. We use
$A=a^{-1}$, $B=b^{-1}$, and
\[
c=[a,b]=abAB=(0,0,1).
\]
\paragraph{Adjacent inverse-pair reversal.}
We sample the same random prefix $p$ and suffix $s$ and compare
$pgg^{-1}s$ with $pg^{-1}gs$, where $g$ is one of the four input
tokens. Both inserted pairs multiply to the identity. The two input
sequences therefore have the same length, counts of each token, and final
product, although their intermediate states differ. This local test asks
whether reversing an immediate cancellation changes the model's prediction.

\paragraph{Adjacent noncommuting swap.}
We compare $pghs$ with $phgs$, where $g$ and $h$ act along different
coordinate axes. For example, $ab=(1,1,1)$ whereas $ba=(1,1,0)$.
Exchanging the two tokens preserves their counts and the final $(X,Y)$
coordinates, but changes $Z$ by $1$ or $-1$, depending on their signs.
This test asks whether the model responds to the simplest order-dependent
change in the product.

\paragraph{Odd/even-count-preserving reordering.}
We independently permute the tokens occupying the odd and even positions
of a randomly sampled input sequence, retaining pairs for which $Z$
changes. The original and reordered sequences contain the same number of
each token at odd positions and at even positions, and consequently have
the same final $(X,Y)$. Their different values of $Z$ show that neither
total token counts nor separate odd- and even-position counts determine
the product. This test distinguishes Heisenberg tracking from the
position-parity statistic sufficient for the $D_4$ task.

\paragraph{Long-range cancellation.}
Let $w=g_1\cdots g_8$. Within the same random prefix and suffix, we compare
the locally cancelling block
\[
g_1g_1^{-1}\cdots g_8g_8^{-1}
\]
with the nested block
\[
ww^{-1}=g_1\cdots g_8g_8^{-1}\cdots g_1^{-1}.
\]
Both blocks contain the same tokens and multiply to the identity. In the
first block, every generator is cancelled immediately; in the second, the
matching inverse can occur many positions later. This test asks whether
the longer-range arrangement makes an otherwise identical cancellation
harder to track.

\paragraph{Central-commutator relocation.}
Let $c=[a,b]=abAB=(0,0,1)$. We sample a prefix $p$, a middle segment $m$,
and a suffix $s$, and compare
\[
p\,c^4\,m\,s
\qquad\text{with}\qquad
p\,m\,c^4\,s.
\]
Because $c$ is central in $H_3(\mathbb Z)$, it commutes with the product
of $m$, and the two sequences have the same final product. They introduce
the same central contribution at different positions, however, and hence
follow different intermediate states. This test asks whether relocating
that contribution affects the prediction.

\paragraph{Commutator inversion.}
Within the same random prefix and suffix, we replace $c^4$ by $c^{-4}$.
The two blocks contain equal numbers of $a$, $A$, $b$, and $B$, and both
leave $X$ and $Y$ unchanged. Their contributions to $Z$ are $+4$ and
$-4$, so the replacement changes the final central coordinate by $8$.
This test asks whether the model predicts the direction and magnitude of
that change.

\paragraph{Rectangular commutator rewriting.}
We compare the block
\[
c^4=(abAB)^4
\]
with
\[
a^2b^2A^2B^2(aA)^2(bB)^2.
\]
The rectangular loop $a^2b^2A^2B^2$ and $c^4$ both have product
$(0,0,4)$. The appended inverse pairs multiply to the identity and make
the two blocks equal in length and in their counts of each token. Thus
the complete input sequences have the same final product but express the
central contribution through different local arrangements. This test asks
whether the model recognizes the same group element beyond the repeated
literal pattern $abAB$.

\paragraph{Evaluation protocol.}
We evaluate the three models at lengths 32, 64, and 100, using all seven
transformations above. Matched random controls have the same length, token counts, and final state as each test
word. All prefixes satisfy $|x|,|y|\le32$ and $|z|\le256$; final coordinates
satisfy $|x|,|y|\le16$ and $|z|\le64$. This conditional sample is easier than
the unconditional iid evaluation.

Each condition and length contains 1,024 shared pairs, giving 21,504 distinct
pairs across the battery, reused across models. Before model evaluation,
36 attempted pairs were excluded after failing to obtain a matched control
within 4,096 attempts. Integer $3\times3$ matrix multiplication independently
checks every prefix. We evaluate predictions at the final position of each word.

\paragraph{Results.}
On the odd/even-count-preserving test at position 100,
the models correctly predict both members in 96.875\%
of pairs on average. Their predicted displacement equals the true displacement
in 96.940\%; this latter criterion can hold even if both endpoints are wrong.
At length 100, long-range cancellation, central-commutator relocation, and
rectangular rewriting do not meet the prespecified criterion for degradation
relative to matched controls. That criterion requires the edited-minus-control
accuracy difference to have a 95\% confidence interval entirely below
$-10$ percentage points in at least two of three models.

\paragraph{Sensitivity to central displacement.}
We separately test whether predicted differences in $Z$ track changes of
different magnitudes and remain detectable after a common suffix. This test uses 24,576 fresh pairs, with generator
block exchanges inducing signed differences of magnitude 1, 4, or 16, plus
zero-displacement controls. Frozen coordinate readouts are evaluated after
0, 8, and 32 common suffix tokens. A threshold is set from the 95th percentile
of absolute readout differences on an independent zero-displacement calibration
set. Detection requires exceeding this threshold with the correct sign.
After 32 suffix tokens, response slopes across signed displacements are
.905--.952 across the three models. Detection of magnitude 16 reaches
100\%, with false-positive rates 4.44--4.83\%. Smaller displacements
are not certified in every model. These readouts measure central information,
not causal necessity or a unique update circuit.

Together, the reordering results show that the models distinguish products
that total token counts, including separate odd- and even-position counts,
cannot distinguish. We next ask whether their activations contain the prefix
counts appearing in the update for $Z$.

\subsection{Single-position raw-count probes}
\label{app:heisenberg-raw-counts}
We first ask whether an activation at a single position contains counts
accumulated over the preceding inputs. We fit separate linear readouts for
the numbers of $a$ and $a^{-1}$, and for their difference $X_{t-1}$.
This exploratory follow-up uses one frozen model (learning rate $10^{-3}$,
seed 42), after earlier net-count results were available.

We generate two disjoint sets of length-100 words. We use 2,048 words to fit
the readout and 2,048 new words to test it. For each word, we record the
model's 256-dimensional activation vector at one chosen layer and position.
A \emph{linear probe} is a fitted weighted sum of these 256 activation values,
plus a constant, that estimates a known count. The count is calculated from
the input word, not from the model's prediction. We fit a separate probe for
each quantity, layer, and position, using positions 32, 64, and 100.

Before fitting, we subtract each activation coordinate's fit-set mean and
divide by its fit-set standard deviation, floored at $10^{-8}$. We choose
the readout weights to minimize mean squared prediction error plus .001 times
the sum of squared weights. This penalty is called \emph{ridge regularization};
the additive constant is not penalized. We apply the same transformation and
fitted weights to the test words. Only the probes are fitted; the task model
remains unchanged. Probe fitting uses 64-bit arithmetic, while model
evaluations retain bf16 precision.

Besides the two individual counts and their difference $X_{t-1}$, we test
readouts of the current contribution $X_{t-1}v_t$ and accumulated coordinate
$Z_t$. Probe fitting includes all four possible current tokens.
Table~\ref{tab:heisenberg-counts} reports count predictions from the first
Transformer block's output, evaluated on test words whose current token is
$b$. ``Single-position'' means that the probe sees only one activation vector;
the count it predicts covers the entire preceding prefix. We verify the
target values independently by multiplying the inputs as integer
$3\times3$ Heisenberg matrices.

We measure the mean absolute error (MAE) in count units and the fraction of
predictions that equal the true count after rounding to the nearest integer.
We also report $R^2=1-\sum_r(y_r-\hat y_r)^2/\sum_r(y_r-\bar y)^2$, where
$y_r$ and $\hat y_r$ are the true and predicted values for test word $r$, and
$\bar y$ is the mean true value in that test set. Thus $R^2=1$ is perfect
prediction; $R^2=0$ matches the squared error of always predicting $\bar y$.
Negative values are worse than that constant prediction.

\begin{table}[ht]
\centering\small
\begin{tabular}{crrrrrr}
\toprule
 & \multicolumn{2}{c}{$a$ count} & \multicolumn{2}{c}{$a^{-1}$ count} & \multicolumn{2}{c}{Net $a$ count}\\
Position & MAE & Exact (\%) & MAE & Exact (\%) & MAE & Exact (\%)\\
\midrule
32 & 0.0381 & 100.00 & 0.0366 & 100.00 & 0.0040 & 100.00 \\
64 & 0.0887 & 100.00 & 0.0847 & 100.00 & 0.0069 & 100.00 \\
100 & 0.1513 & 99.22 & 0.1465 & 99.42 & 0.0088 & 100.00 \\
\bottomrule
\end{tabular}

\caption{First-block readout of preceding counts at a $b$ token. MAE is in
count units; exact denotes accuracy after rounding to the nearest integer.
Test subsets contain 532, 499, and 514 examples at positions 32, 64, and 100.}
\label{tab:heisenberg-counts}
\end{table}

At the tested positions, first-block activations support nearly exact
readout of both individual counts and their difference
(Table~\ref{tab:heisenberg-counts}). After rounding, the individual-count
readouts are 99.22--100\% accurate and the net-count readout is 100\% accurate
on the reported current-$b$ subsets. To check that these readouts depend on
learned activations, we fit two control readouts. The first uses only the
current token's embedding, before any Transformer block processes it. The
second randomly permutes the association between fitting activations and
their target counts, while leaving the test set correctly paired.
At position 64, the $a$-count embedding control gives $R^2=-.00046$ and
9.62\% rounded accuracy; the readout fitted with shuffled counts gives $R^2=-.01370$ and
10.22\%. A further linear readout uses only the true prefix coordinates $(X,Y)$
and four indicators identifying the current token, rather than activations.
It gives $R^2=.66939$ and 19.64\%. Thus the hidden state supports a more
accurate linear raw-count readout than this coordinate baseline. The comparison
does not exclude nonlinear coordinate encodings or prove that the output uses
the recovered counts.

Because these probes are fitted separately at each position, they do not
establish that the same linear readout works across positions.

\subsection{A count readout shared across positions}
\label{app:heisenberg-shared-counts}
We next ask whether the net count can be read out using the same linear map
at different positions. For each model and chosen point in the network, such as the output
of its second block, we fit one probe using examples from nine positions.
Each example consists of one word's activation at one position and the true
count at that position. All nine positions share the same readout weights
and additive constant.
This exploratory follow-up uses the three frozen models and independent sets of
4,096 fit and 1,024 test words of length 100, with i.i.d.\ uniform generator
inputs. The sets are disjoint from each other and from the earlier paired data
checked by the analysis script. We collect activations at positions
$8,12,16,24,32,48,64,80,100$. Positions 8, 12, and 16 are diagnostic locations
outside the standard 17--100 evaluation window.

The 4,096 fitting words therefore supply $4096\times9$ activation--count
pairs. We use the standardization and ridge fit described above, with penalty
.001. We test the resulting probe at each position on the 1,024 unseen words,
without changing its coefficients or additive constant. The target is
$X_t=\sum_{i\le t}u_i$, including the current input. The preceding count
required by the central-coordinate update is $X_{t-1}=X_t-u_t$; the known
current token supplies $u_t$. The interventions below target this preceding
count.

Table~\ref{tab:heisenberg-shared-counts} reports second-block outputs after each
activation vector $h$ is divided by its Euclidean length
$\|h\|_2=(\sum_k h_k^2)^{1/2}$, floored at $10^{-8}$. This removes differences
in the overall activation magnitude before fitting the readout. Across models and sampled positions
16--100, $R^2$ ranges from .99351 to .99880. As an additional check, we allow a separate rescaling and additive offset
of this probe's prediction at each position. The fitted multiplicative
factors are .973--1.016 over positions 16--100. The table uses the original
shared probe, without these adjustments.

\begin{table}[ht]
\centering\small
\setlength{\tabcolsep}{3pt}
\begin{tabular}{lrrrrrrrrr}
\toprule
Seed & 8 & 12 & 16 & 24 & 32 & 48 & 64 & 80 & 100 \\
\midrule
42 & 0.9786 & 0.9919 & 0.9942 & 0.9962 & 0.9977 & 0.9983 & 0.9984 & 0.9988 & 0.9986 \\
43 & 0.9767 & 0.9926 & 0.9948 & 0.9977 & 0.9979 & 0.9982 & 0.9984 & 0.9988 & 0.9985 \\
44 & 0.9775 & 0.9911 & 0.9935 & 0.9967 & 0.9973 & 0.9974 & 0.9969 & 0.9980 & 0.9972 \\
\bottomrule
\end{tabular}

\caption{Shared count readout: test $R^2$ at each position for a single probe
per model on normalized second-block outputs. All models use learning rate
$10^{-3}$. No position-specific refitting is used.}
\label{tab:heisenberg-shared-counts}
\end{table}

We also test activations without length normalization, token embeddings,
the separate attention and feedforward outputs within Transformer blocks,
and probes fitted separately at each position. A second shared probe targets
$X_t/t$, the average signed $a$ increment per input token. Across all tested
models, network locations, positions, and normalization choices, the largest
$R^2$ from a probe fitted to randomly reassigned target values is .12844. Both $X_t$ and $X_t/t$ have strong shared readouts at
the second block. Count readability therefore does not identify a unique code
or show that one layer converts an average count into a total count. This test concerns
readability on the sampled words.

For the separate position-specific probes, we fit linear probes, fitted with the same ridge penalty, for
$X_t$ and $Y_t$ from unnormalized second-block outputs at positions
$32,41,61,64,81,100$. Each model uses 4,096 fit words and 1,024 held-out words.
Across the eighteen model--position combinations, $X_t$ has
$R^2=.999777$--$.999940$ and rounded accuracy 99.51--100\%; $Y_t$ has
$R^2=.999704$--$.999962$ and rounded accuracy 99.61--100\%.

These results establish linear readability. We next test whether perturbing
count-associated activations changes the model's prediction of $Z$.

\subsection{Count-direction interventions}
\label{app:heisenberg-count-interventions}
We estimate how first-layer attention activations vary with the preceding net
count, while accounting for other measured input properties. We then add a
multiple of this fitted count direction to the activation and measure the
change in the predicted central coordinate, keeping the input word fixed.
Fig.~\ref{fig:heisenberg-count-interventions} tests the signs of these changes
at one position and their approximate additivity across two positions.
We use the same three models trained at $10^{-3}$, seeds 42--44, as the
behavioral results and shared decoding probes. No weights are retrained.

\paragraph{Fitting the encoding direction.}
\label{app:heisenberg-encoding-fit}
We fit an \emph{encoding} map from known input properties to activations,
then use its count coefficient to perturb the model. This reverses the
direction of the decoding probes above, which map activations to counts;
their weights are not used for these interventions. At position $j$, each
of the first layer's four attention heads produces 64 numbers. We join these
four lists into one vector $h_j\in\mathbb R^{256}$, before the layer's output
projection mixes them. For each input word, these 256 recorded values are
what the encoding regression is fitted to predict. We record them during
ordinary model evaluation with bf16 precision and fit the regression using
64-bit arithmetic.

For panel (a), we use $n=4096$ fitting words of length 100 and fix $j=64$.
Each token is sampled independently and uniformly from the four generators.
For each word, we calculate eight numbers to use as the regression's inputs,
and collect them in a vector
\[
d=(X_{j-1},Y_j,Z_j,S_j,\mathrm{onehot}(g_j)),\qquad
S_j=\sum_{i\le j}|u_i|.
\]
The first four entries are the preceding net count $X_{j-1}$, the current
coordinates $Y_j$ and $Z_j$, and $S_j$, the total number of $a$ or $a^{-1}$
tokens through position $j$, without subtracting inverse tokens. The last
four entries identify the current token. We use
$\mathrm{onehot}(a)=(1,0,0,0)$, $\mathrm{onehot}(a^{-1})=(0,1,0,0)$,
$\mathrm{onehot}(b)=(0,0,1,0)$, and
$\mathrm{onehot}(b^{-1})=(0,0,0,1)$.
All eight numbers come from the input word, not from model predictions.
For example, the prefix $aab$ at $j=3$ gives $d=(2,1,2,2,0,0,1,0)$.
The fitted map predicts the activation vector as
\[
\widehat h_j(d)=\alpha+B^\top d
=\alpha+\beta_x X_{j-1}+\beta_y Y_j+
\beta_z Z_j+\beta_s S_j+B_{\rm tok}\,\mathrm{onehot}(g_j).
\]
Here $\alpha$ is a 256-dimensional constant vector, each $\beta$ is a
256-dimensional coefficient vector, and the four columns of
$B_{\rm tok}$ give token-specific offsets. The matrix $B\in\mathbb R^{8\times256}$
collects these eight coefficient vectors as rows. The extra inputs let the
fit account for changes associated with $Y_j$, $Z_j$, $S_j$, and the current
token instead of assigning all such changes to $X_{j-1}$. This is a linear
adjustment, not a guarantee that the count is causally isolated.

We use one $\beta_x$ for all four current tokens; only the token offsets
change. We fit with the ridge penalty $10^{-3}$ and express the coefficients
in the original count units, as detailed below. Thus $\beta_x$ is the fitted
change in all 256 activation values when the preceding net count increases
by one and the other regression inputs are held fixed. We do not rescale
$\beta_x$ to have unit length.

\paragraph{From the fit to an intervention.}
Let $e_1=(1,0,\ldots,0)$ have the same length as $d$. Adding $\delta e_1$
to $d$ increases only its first entry, the preceding net count, by $\delta$.
The fitted map then changes by
\[
\widehat h_j(d+\delta e_1)-\widehat h_j(d)=\delta\beta_x.
\]
We add this difference to the model's observed activation, rather than
replacing that activation by the regression prediction. Thus $\delta=2$
adds twice the fitted per-count vector. The part of the actual activation
not explained by the regression, $h_j-\widehat h_j(d)$, is left unchanged.
Holding the other regression inputs fixed describes a change within the
fitted linear map; it does not edit the input word or guarantee a pure or exact
change of the model's internally represented count.

\paragraph{Intervention and output measurement.}
We replace only $h_j$ by $h_j+\delta\beta_x$, then apply the original output
projection and all remaining network operations. For predicted probabilities
$p_t(z)$ over the integer classes $\mathcal Z=\{-512,\ldots,512\}$, define
\[
\hat z_t=\frac{\sum_{z\in\mathcal Z}zp_t(z)}
                   {\sum_{z\in\mathcal Z}p_t(z)},\qquad
\Delta\hat z_t=\hat z_t^{\rm pert}-\hat z_t^{\rm clean}.
\]
The labels ``pert'' and ``clean'' denote evaluations with and without the
activation edit. The extra output class for values outside $[-512,512]$ has
no single numerical coordinate, so we omit it and divide by the probability
remaining on the integer classes. Thus $\hat z_t$ is the mean predicted
integer coordinate. It differs from the most probable integer (the argmax),
and $\Delta\hat z_t$ measures a change in this mean rather than accuracy.

\paragraph{One position and the signed response.}
The update $Z_j=Z_{j-1}+X_{j-1}v_j$ motivates comparing the four current
tokens. An ideal increase in the preceding count increases the current $Z$
at $b$, decreases it at $b^{-1}$, and leaves it unchanged at $a^{\pm1}$.
We test how the network responds to the fitted activation perturbation.
For panel (a), we perturb $j=64$ by $\delta=\pm2$ on 1,024 held-out words.
We group test words by their token at position 64. Within each group, we
average $\Delta\hat z_t/\delta$ over test words, perturbation signs, and
output positions $t=64,72,96,100$. Dividing by $\delta$ expresses the output
change per unit of the nominal count change. A filled point is this mean
for one model; bars and strokes average the three models and are not error
bars. Open points use a random activation direction. That original control
has length matched to a count coefficient fitted separately on words with
current token $a$, rather than to the coefficient fitted on all four tokens
and used for the filled points. The two direction lengths therefore need not
match. We plot the measured responses without rescaling either result.

\paragraph{Interpreting the token-dependent response.}
\label{app:heisenberg-a-response}
The count-direction perturbations produce opposite mean responses at $b$ and
$b^{-1}$ (Fig.~\ref{fig:heisenberg-count-interventions}a), consistent with the
signs in the group update. The $a^{\pm1}$ responses are smaller but need not
vanish. To see the distinction, consider two ideal group states just before
position $j$ that differ only in $X_{j-1}$ by $\delta$. They have the same
$Y_{j-1}$ and $Z_{j-1}$ and receive identical remaining inputs. Subtracting
their update equations gives
\[
\Delta X_t=\delta,\qquad \Delta Y_t=0,\qquad
\Delta Z_t=\delta\sum_{i=j}^{t}v_i\quad(t\ge j).
\]
Indeed, $\Delta X_i$ remains $\delta$, so
$\Delta Z_i=\Delta Z_{i-1}+\delta v_i$ with $\Delta Z_{j-1}=0$.
Summing yields the expression above. The immediate change
$\Delta Z_j=\delta v_j$ is zero at $a^{\pm1}$, but later outputs can change
through subsequent $b^{\pm1}$ tokens.

Our intervention edits an activation rather than a group state, and the
plotted response averages outputs at $64,72,96,100$. The ideal immediate
zero response at $a^{\pm1}$ is therefore not a constraint on the measured
response. For seeds 42--44, the signed mean responses $\Delta\hat z/\delta$
at $a$ are $(.0097,.0181,-.0943)$, compared with
$(-.0010,.0023,-.00001)$ for the random direction. At $a^{-1}$ they are
$(-.0398,-.0108,-.0221)$, compared with $(-.0022,-.0034,-.0017)$.
We retain these residuals and controls in the figure under the scaling
specified above. The results support a token-dependent signed response,
but do not establish a pure count edit or identify the source of the residuals.

\paragraph{Two positions and the additivity test.}
Panel (b) asks whether the joint effect of two perturbations is predicted by
their separate effects. For each fixed test word, we perform four forward
passes: clean, perturbing only $j$, perturbing only $k$, and perturbing both.
Write their final predicted means as $\hat z^0$, $\hat z^j$, $\hat z^k$,
and $\hat z^{jk}$. We form
\[
E_j=\hat z^j-\hat z^0,\quad E_k=\hat z^k-\hat z^0,\quad
E_{jk}=\hat z^{jk}-\hat z^0.
\]
The plotted point is $(\mathbb E[E_j]+\mathbb E[E_k],\mathbb E[E_{jk}])$,
with expectations over the same test words. The line $y=x$ is therefore the
prediction of additive \emph{mean} effects; it does not require each individual
word to satisfy additivity. Both axes measure changes in predicted $z$, not accuracies or alternative
settings of the perturbation size.

Length-100 words have fixed tokens at positions 16, 32, 48, 64, and 80,
using either pattern P, $(b,b^{-1},b,b^{-1},b)$, or pattern Q,
$(b^{-1},b,b^{-1},b,b^{-1})$. Pattern Q inverts each specified token without
reversing their order.
Other tokens remain i.i.d.\ uniform. Each pattern has 4,096 fit and 1,024
independent test words, checked against prior data for overlap. We fit
$\beta_x$ separately at each position and pattern using
$(X_{j-1},Y_j,Z_j,S_j)$; the current token is fixed at each fitted site.
We add $+4\beta_x$ at both selected positions and read $\hat z_{100}$.
The same positive count increment is used even at a $b^{-1}$ site, so
opposite response signs are not imposed by choosing opposite injection signs.

The $(b,b)$, $(b^{-1},b^{-1})$, and mixed site pairs are respectively
$(16,48)$, $(32,64)$, and $(48,32)$ for P; they are $(32,64)$, $(16,48)$,
and $(64,16)$ for Q. The implemented mixed pairs differ from the protocol's
example $(48,64)$; the figure reports the executed sites. All 18 points are
retained, one for each combination of three seeds, two patterns, and three
pair types. A mixed pair places one perturbation at $b$ and the other at
$b^{-1}$.
The $(b,b)$ pairs lie above and right of the origin, and the
$(b^{-1},b^{-1})$ pairs below and left. Mixed-pair effects range from
$-2.770$ to $+.127$; they need not vanish because single-site responses
differ across positions, models, and patterns even with the same $\delta=4$.

\paragraph{Additivity results.}
For each of the eighteen plotted conditions, we subtract the sum of the
two mean single-position effects from the mean joint effect. This difference,
$r=\mathbb E[E_{jk}]-\mathbb E[E_j]-\mathbb E[E_k]$, is zero for exactly
additive mean effects. Across the eighteen conditions, the square root of
the mean of $r^2$ is .125 units of $z$, and the largest $|r|$ is .299. The points broadly follow the additivity line, but this
does not imply equal sensitivities at different positions. For example,
seed 42's mixed pair in pattern Q has a joint effect of $-2.770$ against
a single-site sum of $-2.988$. Its substantial nonzero response is distinct
from its .219 additivity residual.
The experiment supports approximate additivity of mean responses, without
establishing exact sample-wise cancellation.

\paragraph{How we fit the encoding map.}
For each fitting word, we have a list of known input quantities $d$ and a
recorded activation vector $h_j$. We arrange these observations as two tables,
with one row per word. The table $D\in\mathbb R^{n\times p}$ contains the
$p$ regression inputs; the table $H\in\mathbb R^{n\times256}$ contains the
256 activation values we want to predict. Thus the same row in $D$ and $H$
always refers to the same word. Panel (a) has $n=4096$ rows and $p=8$ input
columns. Column 1 of $D$, for example, contains $X_{j-1}$ for all fitting words.

We first put the input columns on comparable scales. Let $\bar d_r$ be the
mean of input column $r$ across the fitting words, and let $s_r$ be its
standard deviation, floored at $10^{-8}$. The standard deviation uses divisor
$n$. We subtract the mean and divide by $s_r$ in each column. For each
activation column, we subtract its mean but do not divide by its standard
deviation. In matrix notation these operations are
\[
F_{ir}=\frac{D_{ir}-\bar d_r}{s_r},\qquad
H_c=H-\mathbf1\bar h^\top.
\]
Here $i$ indexes fitting words, $r$ indexes regression inputs, $\bar h$ is
the vector of 256 mean activation values, and $\mathbf1$ is a column of $n$
ones. Thus $F$ contains the rescaled regression inputs and $H_c$ contains
the activations after subtracting their means. This procedure also applies
to the four token-indicator columns.

We fit a coefficient matrix $W\in\mathbb R^{p\times256}$ so that $FW$
predicts $H_c$. We minimize the squared prediction errors across all words
and activation coordinates, with a penalty on large coefficients,
\[
\widehat W=\arg\min_W\left\{
\frac1n\|H_c-FW\|_F^2+\lambda\|W\|_F^2\right\},
\qquad \lambda=10^{-3}.
\]
The notation $\|M\|_F^2$ means the sum of the squared entries of a matrix
$M$. The first term measures fitting error; the second is the ridge penalty.
Setting the derivative with respect to $W$ to zero gives
\[
(F^\top F+n\lambda I_p)\widehat W=F^\top H_c,
\]
where $I_p$ is the $p\times p$ identity matrix. We solve this linear system
for $\widehat W$.

The intervention is specified in actual count units, so we undo the column
rescaling before using the fitted coefficients. Dividing row $r$ of
$\widehat W$ by $s_r$ gives row $r$ of $B$. We restore the mean activation
through the constant $\alpha$ and take the first coefficient vector as
the count direction,
\[
B_{r:}=\widehat W_{r:}/s_r,\qquad
\alpha=\bar h-B^\top\bar d,\qquad
\beta_x=B_{1:}^{\top}.
\]
The notation $B_{r:}$ denotes every entry in row $r$. These definitions give
$\widehat h_j(d)=\alpha+B^\top d$ in the original units. The constant
$\alpha$ is not penalized. All means, scales, and fitted coefficients are
computed from fitting words and kept fixed for the independent test words.

For panel (b), we repeat the same procedure separately for each model,
position, and token pattern, with 4,096 fitting words per pattern. There
are only four input columns, $d=(X_{j-1},Y_j,Z_j,S_j)$, because the current
token is fixed at each chosen position. Consequently these fits do not use
token-indicator columns and do not share a count direction across positions.

\subsection{Activation patching between matched inputs}
The preceding interventions add a fitted direction. Here we instead transfer
activations between two inputs that differ by a controlled change in the
preceding count, and test which replacements transmit the corresponding
change in $Z$. We call the unmodified run the recipient and the run on the
edited input the donor. We use the same three frozen models. Readouts fit
4,096 new words and are checked on 1,024 independent words. Three intervention
rounds each use 2,048 fresh pairs: 6,144 distinct pairs in total, reused across
models. At position $j=32$ or 64, changing $a$ to $a^{-1}$ or conversely at
$j-1$ changes $X_{j-1}$ by $\delta=\pm2$, while preserving $Y_{j-1}$,
$Z_{j-1}$, the current token, and the suffix. The candidate local contribution
therefore changes by $\delta v_j$. We replace selected recipient activations by the corresponding donor
activations, while leaving the recipient's other computation paths intact.

We first test the input to the second layer's feedforward network (MLP).
The quantity of interest is $C_j=X_{j-1}v_j$, the increment added to $Z$
at the current token. We compare replacing the full MLP input with transferring
only a fitted linear component associated with this increment. Subsequent exploratory rounds test attention
values and then queries, keys, and their combinations, using fresh pairs and
protocol amendments written before each round. We check that inserting an
activation's original value leaves the prediction unchanged, restoring the
original MLP input removes the corresponding input-path edit, and replacements
leave the coordinates outside the selected intervention unchanged. Model
weights remain fixed.

To compare the response across the two count changes and current tokens, we
divide the change in predicted mean $z$ by $\delta v_j$, the change in the
task's current central-coordinate increment. We average this ratio over words
with current token $b$ or $b^{-1}$ and output positions $j,j+8,j+32,100$.
A ratio of one means that the mean output response matches that current
increment. It does not mean that individual predictions are correct or that
this increment equals the full change in the true state at later positions.

Table~\ref{tab:heisenberg-causal} separates an internal readout from effects
on the output. The MLP $C$ column reports the change in the probe's estimate of $C_j$
after replacing an MLP input, divided by $\delta v_j$. It measures an internal
readout, not the model's output coordinate.

The isolated-$C$ column measures the downstream
response when transferring only a fitted component associated with $C_j$.
To construct it, we fit an encoding regression for the second-layer MLP
output using $X_{j-1},Y_{j-1},Z_{j-1}$, indicators for three of the four
current tokens, and $C_j$. The omitted token is represented by all-zero
indicators and the fitted constant. We take the coefficient of $C_j$ and
rescale it so that moving one unit along it increases the linear probe's
estimate of $C_j$ by one. We then measure the donor-minus-recipient change
in that estimate and multiply this difference by the rescaled direction.
Adding the resulting vector to the recipient's MLP output transfers the
decoded change without replacing the full output. This construction does
not guarantee that other information in the activation remains unchanged.

The layer-3 KV column instead transfers the donor's full key ($K$) and
value ($V$) vectors at the selected position in the third attention layer.
Keys determine how other positions attend to this position; values supply
the information they receive. Both this column and the isolated-$C$ column
report the normalized change in the model's predicted mean $z$. Its accompanying exact-shift accuracy
measures how often the argmax prediction changes by the expected $\pm2$.

\begin{table}[ht]
\centering\small
\begin{tabular}{lrrrr}
\toprule
Seed & MLP $C$ readout & Isolated $C$ & Layer-3 KV & Exact shift (\%)\\
\midrule
42 & .861 & .00021 & .909 & 83.08\\
43 & .983 & -.00021 & .638 & 27.44\\
44 & .651 & -.00001 & .812 & 61.21\\
\bottomrule
\end{tabular}
\caption{Count-sensitive readouts and interventions. The MLP column measures
the change in the decoded local contribution after an input replacement;
isolated $C$ and KV columns measure changes in the predicted mean $z$.
All three are divided by the task-level increment change $\delta v_j$. Exact shift is
the fraction of argmax predictions changing by the expected $\pm2$ under KV
replacement. All models use learning rate $10^{-3}$.}
\label{tab:heisenberg-causal}
\end{table}

Third-layer KV replacements retain $(x,y)$ predictions in 100\% of
cases, but the isolated linear $C$ component carries only a small fraction
of the predicted effect (Table~\ref{tab:heisenberg-causal}). Full donor vectors
can contain other features; their effect is not specific evidence for $C$.
Differences between seeds' key and value sensitivity do not establish distinct
high-level algorithms. These tests identify activation replacements that affect the central-coordinate
prediction, but do not isolate the count contribution as their causal feature.

Further exploratory probes estimate unsigned counts, which count $a$ and
$a^{-1}$ together rather than subtracting them. However, probes fitted to
randomly reassigned targets reach $R^2=.079$, above the prespecified maximum
of .02 allowed for that control.
We therefore do not treat weak readouts of counts over only a short part of
the preceding sequence as confirmed evidence.
A separate 2,048-word direction-injection test distinguishes net count $X$
from unsigned count $S$. At the first layer and position 64, a nominal $+2$
$X$ injection shifts the current predicted mean $x$ by .0011, .1792, and
.0012 in seeds 42--44, respectively; the same $S$ injection changes the
current predicted mean $x$ by less than $4\times10^{-6}$ in each model.
Larger injections introduce nonspecific disruption. Together, the direction interventions show that count-associated activation
changes affect central-coordinate predictions, and the patching tests show
that selected activation replacements transmit part of the expected change.
They do not yet isolate a complete implementation of the prefix-count update
or exclude other order-sensitive computations.

\section{Gradient conditions, diagnostics, and interventions}
\label{app:gradient-details}

We distinguish three questions in the gradient analysis. First, does the
average loss gradient provide a signal for separating states within a
learned quotient class? Second, under what conditions would that signal
cancel? Third, can changing the learning signal improve tracking? The
measurements, sufficient condition, and interventions below address these
questions separately.

For intuition, consider a class with two possible target states. The loss
on one example favors the first state, while the loss on a related example
favors the second. If parameter changes affect both examples' logits
identically, these opposing target-specific contributions cancel on average.
If the sensitivities differ, the model can instead receive a signal that
separates the two states, even when it currently assigns them equal
probabilities. The formal argument below makes this distinction explicit.

\paragraph{From cross-entropy to class-member contrasts.}
At one input prefix, let $z_\theta$ be the vector of logits, $p$ its softmax,
$y$ the one-hot target, and $J=\partial z_\theta/\partial\theta$ the Jacobian
with respect to all trainable parameters. The cross-entropy gradient is
$J^\top(p-y)$. Define the orthogonal projection $P$ by
$(Pv)_g=|K|^{-1}\sum_{h\in gK}v_h$, where $K=[G,G]$. It replaces each
coordinate by the average over its quotient class. Thus $P(p-y)$ is constant
within classes, whereas $(I-P)(p-y)$ contains only contrasts between their
members. Applying $J^\top$ to the latter gives
\begin{equation}
\label{eq:fiber-gradient-decomposition}
h=J^\top(I-P)(p-y)
 =\underbrace{J^\top(Py-y)}_{\ell\;\text{(Label)}}
 +\underbrace{J^\top(p-Pp)}_{m\;\text{(Model)}}.
\end{equation}
The identity follows by expanding $(I-P)(p-y)$. The Label term compares the
true label with a uniform label on its class. The Model term compares current
probabilities with their within-class averages. Their sum, Net, is a component
of the cross-entropy gradient, not the full gradient or the actual AdamW update.

\paragraph{Constructing examples and averaging.}
We use raw final weights of three recipe-A models per group for $A_4$,
$\mathrm{SL}(2,3)$, and $S_4$, trained for 74,219 updates with seeds 42--44.
For each of 32 fresh i.i.d.\ base sequences (sampling seed 189401), replace
its first input $x_1$ by $k x_1$ for every $k\in K$. At any measured position,
the resulting targets are $kq_t$. They enumerate the complete true class
because $K$ is normal. We call this set of input variants an \emph{orbit}.
The quotient target is unchanged, while every class member becomes a target
exactly once. We compute $\ell,m,h$ on each variant, average vectors over
the complete orbit and then over the 32 base sequences, obtaining
$\bar\ell,\bar m,\bar h=\bar\ell+\bar m$. Norms are taken after averaging.
We never average parameter vectors across separately trained models.

\paragraph{Reading the two panels.}
We measure positions 1--40, 60, 80, and 100 at a fixed checkpoint.
Panel (a) plots $\|\bar\ell_t\|$, $\|\bar m_t\|$, and $\|\bar h_t\|$.
Within each model, all three are divided by the same reference
$S=\max_t\max(\|\bar\ell_t\|,\|\bar m_t\|)$ over this grid.
Panel (b) plots
\[
c_t=\frac{\bar\ell_t^\top\bar m_t}
          {\|\bar\ell_t\|\,\|\bar m_t\|}.
\]
A cosine of $-1$, $0$, or $1$ means opposite, orthogonal, or aligned average
vectors. Cancellation depends on both direction and magnitude, as
\[
\|\bar h_t\|^2=\|\bar\ell_t\|^2+\|\bar m_t\|^2
 +2\|\bar\ell_t\|\|\bar m_t\|c_t.
\]
We compute all quantities within each model before plotting the three-seed
mean and minimum--maximum range. Bands describe training-run variation,
not uncertainty from the 32 sampled sequences. We apply no smoothing.
Regions I--III are spatial guides, not inferred mechanism boundaries. Region
I ends at the group mean of $\max(1.5,F_e-.5)$; region III is positions 17--100.

\begin{figure}[!tb]
\centering
\includegraphics[width=\linewidth,trim=0 10bp 0 4bp,clip]{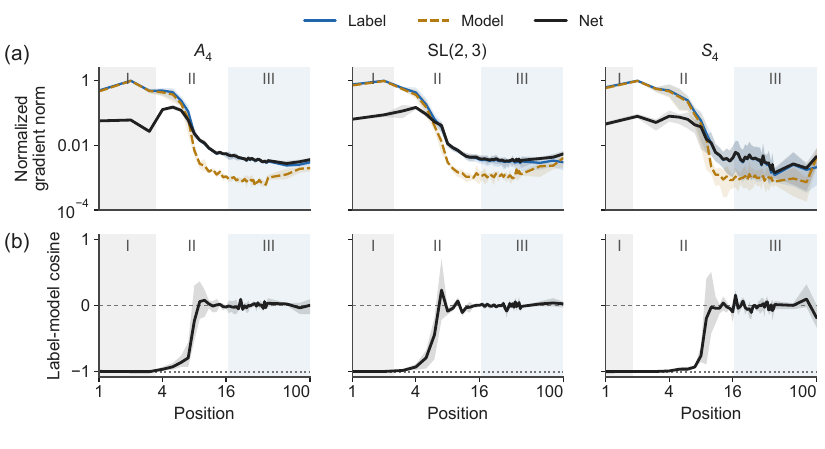}
\caption{\textbf{Learning signals within quotient classes.}
Matched $A_4$, $\mathrm{SL}(2,3)$, and $S_4$ Transformers, three seeds each.
(a) Norms of averaged Label, Model, and Net terms, divided by the common
per-model reference $S$. (b) Cosine between the averaged Label and Model
terms. Each position uses 32 complete input orbits. Lines and bands show
seed means and ranges; positions vary within a fixed checkpoint. The
definitions and averaging procedure are given in this appendix.}
\label{fig:matched-gradients}
\end{figure}

\paragraph{What the spatial profiles establish.}
The late-position net signal is consistently small relative to each model's
peak, but the early profile is not universally a near-zero region followed by
a frontier peak. Table~\ref{tab:gradient-profile-readout} reports all nine runs.
At position 1, all have perfect argmax accuracy, yet their net norms are
30--64\% of their own peak. Cross-entropy can still improve prediction
confidence after accuracy reaches 100\%. For $S_4$ seed 42, the largest net
signal occurs at position 2 with 96.19\% accuracy. Conversely, the position-40
norm is only 0.8--3.5\% of the peak in every run. These ratios use
$\max_t\|\bar h_t\|$, whereas the figure uses $S$ to compare its three terms.

\begin{table}[ht]
\centering\small
\begin{tabular}{llrrrr}
\toprule
Group & Seed & $t_*$ & Accuracy at $t_*$ & $\|\bar h_1\|/\|\bar h_{t_*}\|$ & $\|\bar h_{40}\|/\|\bar h_{t_*}\|$\\
\midrule
$A_4$ & 42 & 5 & 53.44 & 30.33 & 1.93\\
$A_4$ & 43 & 5 & 58.79 & 38.58 & 2.27\\
$A_4$ & 44 & 4 & 81.93 & 45.25 & 2.53\\
$\mathrm{SL}(2,3)$ & 42 & 4 & 50.21 & 40.95 & 2.34\\
$\mathrm{SL}(2,3)$ & 43 & 4 & 36.27 & 46.01 & 3.51\\
$\mathrm{SL}(2,3)$ & 44 & 3 & 88.59 & 37.97 & 1.13\\
$S_4$ & 42 & 2 & 96.19 & 63.56 & 1.82\\
$S_4$ & 43 & 4 & 42.22 & 33.42 & 0.81\\
$S_4$ & 44 & 6 & 12.16 & 50.23 & 1.94\\
\bottomrule
\end{tabular}

\caption{Per-model net-gradient profiles behind
Fig.~\ref{fig:matched-gradients}. $t_*$ maximizes $\|\bar h_t\|$ over the
measured grid. Ratios and accuracy are percentages. Accuracy at position 1 is
100\% in every row.}
\label{tab:gradient-profile-readout}
\end{table}

Two different effects can produce a small net gradient. When $p$ approaches
the correct one-hot label, $p-y$ becomes small and the Label and Model terms
balance. Alternatively, class-member signals can become small only after
averaging examples whose targets differ. These are distinct from cancellation
between the two already-averaged terms. In the late window, absolute
Label--Model cosines have medians .039, .026, and .057 for the three groups;
they do not show two large, oppositely directed mean terms. Small mean vectors
also make their directions sensitive to sampling. These are spatial
measurements at final checkpoints, not a proof of permanent training arrest.

\paragraph{Numerical checks.}
Each orbit-mean proxy is differentiated directly using a shared full-orbit
forward batch, with TF32 disabled. We use FP32 derivatives and FP64
accumulation, independently differentiate the three terms, and check
$\bar h=\bar\ell+\bar m$ relative to $\|\bar\ell\|+\|\bar m\|$.
The original $S_4$ seed-44 profile failed the $10^{-3}$ tolerance at five
positions; its entire 43-position profile is therefore recomputed in FP64.
The original measurements and comparison are retained. The within-orbit
ratio $R_{\mathrm{label}}=\|\mathbb E_k\ell_k\|/\mathbb E_k\|\ell_k\|$
measures cancellation across variants and differs from the pooled two-term
balance $B=\|\bar h\|/(\|\bar\ell\|+\|\bar m\|)$ and from $c_t$.

\paragraph{Cancellation within a class at the matched checkpoints.}
We additionally measure the nine checkpoints above using a dense census
over positions 1--40 (Fig.~\ref{fig:matched-cancellation}). To distinguish
cancellation across examples from cancellation between average terms, we
compute two dimensionless ratios. Within a complete input orbit, let
$\ell_k=J_k^\top(Py_k-y_k)$ be the Label contribution defined in
Eq.~\ref{eq:fiber-gradient-decomposition}. We measure
\[
R_t=\frac{\|\sum_{k\in K}\ell_k\|}{\sum_{k\in K}\|\ell_k\|}.
\]
An aligned set of contributions has $R_t=1$, while exact cancellation gives
$R_t=0$. We report the median of this ratio over eight orbits per position.
The dashed $1/\sqrt f$ levels are geometric references for $f$ equal-norm,
pairwise-orthogonal vectors, not a fitted baseline or a null distribution
for the model's gradients. At position 40, the nine ratios range from
.00796 to .03322, below these references.

For the second ratio we pool Label and Model vectors across the eight
complete orbits and compute
\[
B_t=\frac{\|\bar\ell_t+\bar m_t\|}
           {\|\bar\ell_t\|+\|\bar m_t\|}.
\]
A small $B_t$ indicates cancellation between these two averaged terms;
it does not measure cancellation within an orbit. At position 40,
$B_t=.634$--$.837$, unlike the small within-orbit $R_t$.
Accuracy in this census uses 8,192 sequences, independently of the gradient
samples. All curves use the same frozen weights, and no seed averaging or
smoothing is applied. Both gradient panels in this census use eight orbits;
the earlier norm and cosine profiles use 32 complete orbits. Their estimates
must not be interchanged merely because they use the same checkpoints.

\begin{figure}[htbp]
\centering
\includegraphics[width=\linewidth]{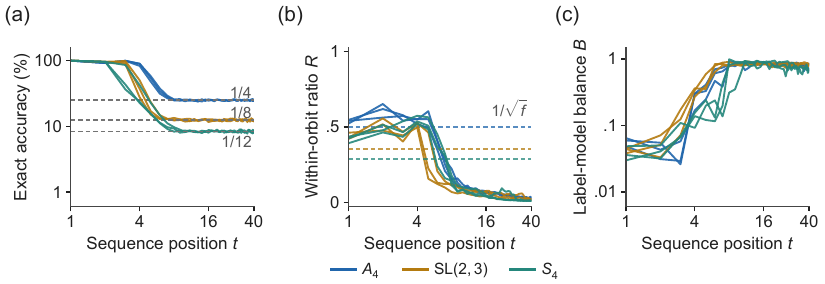}
\caption{\textbf{Cancellation across examples differs from balance between mean terms.}
The same nine $A_4$, $\mathrm{SL}(2,3)$, and $S_4$ checkpoints as
Fig.~\ref{fig:matched-gradients}, using the additional census defined above.
(a) Exact accuracy and class-size references. (b) Within-orbit Label ratio
$R$, with geometric $1/\sqrt f$ references. (c) Label--Model balance $B$
after pooling those orbits. Each line is one seed. The small late-position $R$
supports cancellation across class members, while $B$ distinguishes this
from cancellation of two large mean terms.}
\label{fig:matched-cancellation}
\end{figure}

These measurements identify a candidate obstacle to refinement. They do
not establish that the total gradient vanishes, that no information changes
during a plateau, or that cancellation causes a model to remain there.

\paragraph{Sufficient conditions.} Consider right multiplication of one input
slot by an element of $K=[G,G]$. Independent uniform full-group inputs are
invariant under this action. The target remains in the same $K$-coset, and
averaging over the complete action sends it uniformly through that coset.
Assume (R1) logits and (R2) their parameter Jacobian are invariant under the
action at the position being studied. Orbit averaging then replaces the one-hot
label $y$ by the uniform-fiber label $u$ while preserving the other terms,
giving
\begin{equation}
\mathbb E[J^\top(p-y)]=\mathbb E[J^\top(p-u)].
\label{eq:gradient-cancellation}
\end{equation}

R2 does not follow from R1. For a binary logit $z=\theta x$, with equally likely
$x\in\{-1,1\}$ and label $(x+1)/2$, predictions are identical at $\theta=0$.
Nevertheless, the population cross-entropy gradient is $-1/2$, because the
Jacobian changes sign with $x$. Output uniformity therefore does not by itself
prevent refinement. Our measurements test signatures consistent with the
condition; they do not establish invariance of the complete parameter Jacobian.

\paragraph{Directional identity.} For nonzero gradient contributions $g_i$,
write $v_i=g_i/\|g_i\|$. Their mean pairwise cosine satisfies
\begin{equation}
\label{eq:cancel}
\overline{\cos}=-\frac{1}{f-1}
+\frac{\|\sum_i v_i\|^2}{f(f-1)}.
\end{equation}
This follows by expanding the squared sum. It is an identity for normalized
vectors, distinct from the pooled label/model cosine in
Fig.~\ref{fig:matched-gradients}b. Fourteen of eighteen measurements over
fiber sizes 2--60 are within .0015 of the lower bound; all lie at or above it.
A second prospective $S_5$ checkpoint reproduces the $f=60$ signature
(-.01692 versus $-1/59$). The abelian positive control gives at least .9969.
These checks do not independently prove the population invariance assumptions.

\paragraph{Profile qualifications.} At $S_4$ seed 42's largest net signal,
position 2, exact accuracy is 96.19\%, label/model cosine is -.9975, and
$\|\bar h\|/(\|\bar\ell\|+\|\bar m\|)=.0351$. A net-signal peak can therefore
coexist with substantial cancellation. Late-position cosine estimates vary
across seeds because the mean vectors are small.

\paragraph{Cross-group comparison of the average Label signal.}
A separate study measures 44 checkpoint--input-distribution pairs across
sixteen groups. For each pair, we average the Label gradients over the full
commutator orbit at input position 1 and over 32 base sequences from that
model's input distribution. This gives $\bar\ell_t$ as above. Since the
orbit-averaged label residual $Py-y$ is zero, this quantity measures its
covariance with the logit Jacobian. It differs from the orbit cancellation
ratio $R$, which normalizes by the individual gradient norms.

We choose the measured positions from behavior before measuring gradients.
Let $F$ be the exact frontier at threshold .75. The reference position is
$t_e=\max(2,F+1)$ and the later position is
$t_f=\min(100,\max(40,t_e+20))$. A pair is eligible if $t_f-t_e\ge20$,
quotient accuracy at both positions is at least .90, exact accuracy at
$t_e$ exceeds $1/f+.05$, and exact accuracy at $t_f$ is within .10 of $1/f$.
Twenty eligible pairs had not had these gradient measurements inspected
before the protocol was fixed. Their ratios
$\|\bar\ell_{t_f}\|/\|\bar\ell_{t_e}\|$ are .0009--.0149. Thus the result
concerns positions with an acquired quotient and class-size accuracy,
not every position beyond the frontier.

A ten-pair follow-up instead uses $t_f=100$ and the maximum average Label
norm over $\{\max(2,F-2),\ldots,\min(100,F+2)\}$ as its reference. This
window must contain accuracies both above and below .75; position 100 must
be at least 20 positions beyond the window, with quotient accuracy at
least .90 and exact accuracy within .10 of $1/f$. Seven pairs meet these
conditions and show suppression; three fail them. This is a design repair
after inspecting the first results, not an independent confirmation.
The excluded $D_{15}$ run is still improving and has ratio 1.14; two
excluded runs have unlearned quotients.

For three two-coset $Q_8$ seeds, we use the same frontier-window reference
and select the first position in $\{40,50,\ldots,100\}$ satisfying the
distance, quotient-accuracy and class-size-accuracy conditions. The ratios
are .0012, .0009, and .0284, with the largest on an accelerating trajectory.
These follow-ups use the same 32-sequence orbit averaging, but their
different position-selection rules prevent pooling the ratios into one statistic.

\paragraph{Intervention protocols.} The generic-release continuation starts
from the $D_4$ seed-43 checkpoint at 200k updates. The forward computation is
unchanged; a custom backward rule rotates hidden gradients at the final
layer-normalization output. Opposite signs are assigned to sequences paired
by a central action at input position 7. Identity and pair-shared rotations
provide controls. The rotation magnitude is .1 and applies at positions
14--50 for 10k updates, followed by 5k updates with ordinary backpropagation.
All four arms use the same paired stream, batch 256, length 100, and a freshly
initialized AdamW optimizer at $5\times10^{-5}$ with zero weight decay.
Evaluation uses 2,048 sequences every 1,000 updates and 8,192 at the endpoint.

For the aligned intervention, let $v=(\theta_{200k}-\theta_{250k})/
\|\theta_{200k}-\theta_{250k}\|$, so a gradient step along $-v$ points toward
the observed future checkpoint. With $n$ token losses and active set $B$, the
custom backward rule adds the rank-one term
\[
A_i=\frac{n}{|B|}\frac{r_i^H}{\|r_i^H\|^2}v^\top,
\qquad r_i^H=(I-P)(p_i-y_i),\quad i\in B,
\]
to the parameter-to-logit Jacobian, scaled by the intervention strength.
Thus mean cross-entropy adds a controlled gradient along $v$ while leaving
the forward logits unchanged. The active positions are 14--18. This study
uses SGD at $5\times10^{-5}$ without clipping, with the same batch, evaluation,
10k intervention, and 5k washout schedule. The 30\% dose is calibrated to the
distance between the base and future checkpoints. Lower-dose, opposite-sign,
and orthogonal-direction arms are controls. The last-block comparison restricts
this added direction to the final transformer block or its parameter complement;
it does not replace the block's native hidden-state Jacobian.

\paragraph{Intervention outcomes.} The generic backward perturbation
has 17-fold selectivity in the plateau region versus 1.23 near the frontier.
All four continued arms, including the identity control, end at frontier 13
from a starting frontier of 14. Generic release thus provides no improvement
relative to the control. Alignment with a future-training direction gives
signed, dose-dependent local improvement that survives washout, but does not
cross the discrete frontier at the tested 30\% dose. Because it uses future
information, it is a diagnostic upper bound rather than a training method.
Restoring only last-block parameter support recovers 22--29\% of the effect,
below the 80\% criterion; the upstream complement is three times more efficient
per unit $L_2$ perturbation. The intervention chain rests on four matched pairs.

\paragraph{Larger-model check.} Of three $S_3$ models trained from scratch in
the Pythia-160M architecture with the Li recipe, one enters a parity-first
phase. In that phase, coset mass is at least .99 and within-fiber entropy is
1.580--1.583 bits, close to $\log_2 3$. Orbit ratios are .02--.06 and mean
pairwise cosine is -.50. These statistics use 24 sequences per cell, compared
with 256 for the behavioral check. Other seeds learn associative-like
solutions and can be confidently wrong beyond their exact frontier, with
coset mass .50 and sharply nonuniform within-fiber outputs. This limits the
quotient description to the observed regimes.

\clearpage
\section{Internal geometry of Transformer quotient predictions}
\label{app:transformer-geometry}

We ask whether the quotient classes visible in Transformer predictions also
have an identifiable internal representation. We find that their mean hidden
vectors usually occupy a small subspace whose geometry agrees with the
abelian quotient. Replacing the activation in this subspace transfers the
predicted class at the intervention position. We then test a separate
question, whether replacing one position also transfers that class to later
predictions. This separates a representation used for the current readout
from a state that is carried through subsequent updates.

\subsection{Models and construction of class means}
\label{app:transformer-geometry-protocol}

We analyze 40 saved training endpoints across the eleven groups in
Table~\ref{tab:transformer-geometry}. All use four-layer, width-256 recipe-A
Transformers with uniform i.i.d. full-group inputs. The collection includes
both original- and doubled-budget endpoints from the training cohorts in
App.~\ref{app:rows}; endpoints from the same training seed are not independent
replicates. We reevaluate the frozen models in fp32 on length-100 sequences,
without further training. All measurements below use the 51st input position
($t=51$, zero-based index 50, with no beginning-of-sequence token).
The abelian $C_8$ task serves as a full-state control.

Let $A=G/[G,G]$ be the abelianization, $K=|A|$, and $c_i\in A$ the true
quotient class of the prefix product in sequence $i$, with $n_c$ sequences
assigned to class $c$. We collect the hidden
vector $h_i\in\mathbb R^{256}$ after the final LayerNorm. For each class we
average the vectors assigned to it, and then center these $K$ means,
\begin{equation}
 \mu_c=\frac{1}{n_c}\sum_{i:c_i=c}h_i,\qquad
 \bar\mu=\frac1K\sum_{c\in A}\mu_c,\qquad
 M_{c,:}=(\mu_c-\bar\mu)^\top.
 \label{eq:transformer-class-means}
\end{equation}
We use 4,096 sequences per endpoint, or 16,384 for the two Heisenberg groups.
These are averages over ground-truth classes, not clusters fitted to the
activations. If $M=L\Sigma V^\top$ is its singular value decomposition, we
retain the $r$ columns of $V$ whose singular values satisfy
$\sigma_j/\sigma_1\ge .10$. This threshold is an analysis choice. We denote
these orthonormal directions by $U\in\mathbb R^{256\times r}$ and the
projected mean by $v_c=U^\top(\mu_c-\bar\mu)$. Centering alone guarantees
rank at most $K-1$; our question is which smaller dimension and geometry
the models use.

\subsection{Geometry follows the quotient action}
\label{app:transformer-geometry-action}

Fig.~\ref{fig:transformer-geometry} shows representative class means. The
$A_4$ model places its three quotient classes near a triangle. The $Q_8$
and $\mathrm{Dic}_3$ models both have four classes, but their geometries
support different actions. The former uses two sign directions for
$C_2\times C_2$, while the latter supports a quarter-turn for $C_4$.
The $C_8$ model uses an approximately octagonal arrangement. The
$H_3(\mathbb Z/5)$ example separates its 25 classes using two
two-dimensional planes, each of which groups them into five clusters.

\begin{figure}[t]
 \centering
 \includegraphics[width=\linewidth]{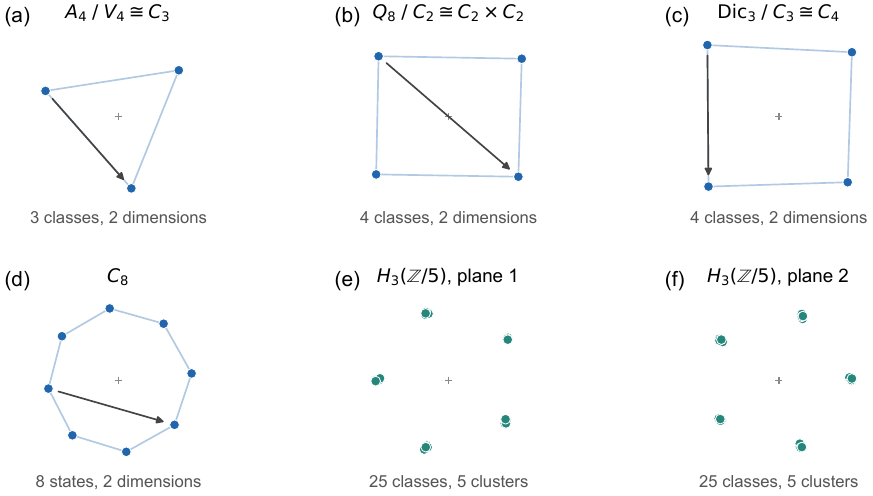}
 \caption{\textbf{Transformer class means reflect the learned abelian quotient.}
 Each dot is the mean hidden vector of one true class at $t=51$.
 Panels (a--d) show the two leading singular directions. Edges are visual
 guides; arrows connect the identity class mean to its image under one
 quotient generator, rather than tracing individual hidden-state updates.
 Panels (e,f) show two character planes of one $H_3(\mathbb Z/5)$ model.
 Several of its 25 means overlap in each plane, while their paired coordinates
 distinguish all 25 classes. Axis orientation and scale are arbitrary and
 differ between panels. Examples use seed 42 for $A_4$, doubled-budget $Q_8$
 and doubled-budget $H_3(\mathbb Z/5)$, and seed 44 for $\mathrm{Dic}_3$ and
 $C_8$. Table~\ref{tab:transformer-geometry} includes all endpoints.}
 \label{fig:transformer-geometry}
\end{figure}

We test these patterns by fitting the action on the class means. For every
nonidentity $a\in A$, we find the orthogonal matrix $R_a$ minimizing
$\sum_c\|R_av_c-v_{ca}\|^2$. Its relative fit error is
\begin{equation}
 \epsilon_a=
 \left(\frac{\sum_c\|R_av_c-v_{ca}\|^2}{\sum_c\|v_{ca}\|^2}\right)^{1/2}.
 \label{eq:transformer-action-fit}
\end{equation}
The largest error across all fitted maps and endpoints is .124.
These fits test whether the geometry accommodates the known quotient
permutations. They do not measure the model's token-by-token transformation
of individual activations.

The low dimensions have a natural algebraic interpretation. A complex
character of an abelian group is a map $\chi:A\to\{z\in\mathbb C:|z|=1\}$
with $\chi(ab)=\chi(a)\chi(b)$. Its real and imaginary parts form a plane
on which group elements act as rotations; a real-valued character gives
a sign direction. A collection is faithful when it distinguishes all
group elements. Thus two independent sign directions suffice for
$C_2\times C_2$, one rotation plane for $C_4$ or $C_8$, and two independent
rotation planes for $C_3^2$ or $C_5^2$.
For Fig.~\ref{fig:transformer-geometry}e,f, we use the two strongest
independent character pairs of $C_5^2$. For each character we form
$w_\chi=\sum_c(\mu_c-\bar\mu)\overline{\chi(c)}$ and orthonormalize its
real and imaginary parts to obtain the displayed plane. This visualization
uses the quotient labels; it is not a label-free discovery procedure.

In 38 of 40 endpoints, the retained rank and the fitted maps' trace and
determinant patterns agree with a faithful real representation of minimum
dimension. We compare multisets of traces within each element-order and
determinant-sign category, with absolute trace tolerance .30; this is a
numerical compatibility check, not a proof of an exact representation.
Determinants matter here. For $C_2^2$, the three nonidentity maps have
traces approximately $0,0,-2$ and determinant signs $-1,-1,+1$; for $C_4$
the traces are the same but all determinants are positive.
Among the 28 endpoints for which $K-1$ exceeds the minimum faithful
dimension, none uses all $K-1$ centered class directions.

We retain both exceptions in Table~\ref{tab:transformer-geometry}.
One $Q_8$ endpoint (seed 44, original budget) has rank one and only .510
quotient accuracy at this position. One doubled-budget
$H_3(\mathbb Z/5)$ endpoint (seed 44) has rank six rather than four,
consistent with an additional character pair. The observed preference
for minimum dimension is therefore common in this collection, not universal.

\begin{table}[t]
 \centering
 \caption{\textbf{Geometry and same-position transfer across all 40 Transformer
 endpoints.} $n$ counts endpoints, $r$ is the retained class-mean rank, and
 $r_{\min}$ is the minimum faithful real dimension of $A$.
 $Q$ is quotient accuracy at $t=51$. The final three columns give donor-class
 agreement after replacing the class-mean subspace (Sub.), its orthogonal
 complement (Comp.), or a same-rank random subspace (Rand.). Values are
 ranges across endpoints; each random value first averages five controls.
 No endpoint is excluded, including the low-accuracy $Q_8$ endpoint.
 The protocol and agreement metric are defined below.}
 \label{tab:transformer-geometry}
 \small
 \setlength{\tabcolsep}{3pt}
 \newcommand{\transformerGeometryRows}{%
$C_8$ & $C_8$ & 3 & 2 & 2 & 0.997--1.000 & 0.930--0.991 & 0.000--0.009 & 0.000--0.001 \\
$D_4$ & $C_2^2$ & 5 & 2 & 2 & 1.000 & 0.999--1.000 & 0.000--0.001 & 0.000--0.001 \\
$A_4$ & $C_3$ & 3 & 2 & 2 & 1.000 & 1.000 & 0.000 & 0.000 \\
$Q_8$ & $C_2^2$ & 4 & 1,2 & 2 & 0.510--1.000 & 0.482--1.000 & 0.000--0.172 & 0.000--0.166 \\
$C_7\rtimes C_3$ & $C_3$ & 3 & 2 & 2 & 1.000 & 1.000 & 0.000 & 0.000 \\
$D_6$ & $C_2^2$ & 4 & 2 & 2 & 1.000 & 1.000 & 0.000--0.001 & 0.000--0.001 \\
$S_4$ & $C_2$ & 3 & 1 & 1 & 1.000 & 1.000 & 0.000 & 0.000 \\
$\mathrm{SL}(2,3)$ & $C_3$ & 3 & 2 & 2 & 1.000 & 1.000 & 0.000 & 0.000 \\
$\mathrm{Dic}_3$ & $C_4$ & 3 & 2 & 2 & 1.000 & 0.999--1.000 & 0.000 & 0.000 \\
$H_3(\mathbb Z/3)$ & $C_3^2$ & 3 & 4 & 4 & 1.000 & 0.999--1.000 & 0.000 & 0.000 \\
$H_3(\mathbb Z/5)$ & $C_5^2$ & 6 & 4,6 & 4 & 0.968--1.000 & 0.974--1.000 & 0.000--0.001 & 0.000--0.003 \\
}

 \begin{tabular}{lcrccrrrr}
 \toprule
 $G$ & $A$ & $n$ & $r$ & $r_{\min}$ & $Q$ & Sub. & Comp. & Rand.\\
 \midrule
 \transformerGeometryRows
 \bottomrule
 \end{tabular}
\end{table}

\subsection{The class-mean subspace controls the current readout}
\label{app:transformer-geometry-patch}

We test whether the identified directions affect predictions by transferring
them between sequences. We draw a fresh pool of 2,048 sequences, split into
1,024 recipient--donor pairs, independently of the sequences used to estimate
$U$. Let $h_r$ and $h_d$ be their activations at $t=51$. We replace only the
recipient's projection onto the class-mean subspace,
\begin{equation}
 h'_r=h_r+UU^\top(h_d-h_r).
 \label{eq:transformer-subspace-patch}
\end{equation}
Donor vectors, basis vectors, and recipient replacements all refer to the
same location, after the final LayerNorm and before the linear output head.
As controls we replace the complementary projection $I-UU^\top$, the whole
vector, or a random rank-$r$ projection. For the latter we orthonormalize
independent Gaussian directions and average five draws per endpoint.
Self-replacement on eight sequences per endpoint reproduces the original
logits exactly at both intervention boundaries used here, across all
endpoints and all three replacement types
(whole vector, subspace, and complement).

We score pairs with different true quotient classes. Donor-class agreement
is the fraction for which the patched model's predicted class equals the
donor's true class; recipient-class agreement is defined analogously.
Throughout this section, we assign the model's highest-probability full
state to its quotient class, rather than taking an argmax after summing
probabilities within classes. This definition also applies to $Q$ in
Table~\ref{tab:transformer-geometry}.

Of the 40 endpoints, 39 have $Q\ge .95$. Within this set, subspace
replacement gives .9297--1.0000 donor agreement, with 35 endpoints at or
above .99. Complement replacement gives at most .0091 donor agreement,
and random replacement at most .0033. The lowest recipient agreement after
complement replacement is .9422. These controls show that
the measured directions selectively transfer the current class readout.
They do not establish that the subspace is necessary, which would require
an erasure or ablation test, or that it carries all information available
in the hidden vector.

\subsection{Does a local replacement propagate to later positions?}
\label{app:transformer-geometry-propagation}

A successful readout intervention need not install a state that the model
then updates. We therefore replace the whole hidden vector at $t=51$
after the penultimate block, where the remaining attention block can still
use it at later positions. We keep all input tokens and all other positions
unchanged. If this intervention installed the donor class $c_d(t)$ in
place of $c_r(t)$ and the model carried it through the recipient suffix,
the predicted class at $u>t$ would become
\begin{equation}
 c_{\mathrm{shift}}(u)=c_d(t)c_r(t)^{-1}c_r(u).
 \label{eq:transformer-carried-class}
\end{equation}
Here $c_r(t)^{-1}c_r(u)$ is the quotient product of the unchanged suffix.
We compare the prediction with both this shifted target and the original
recipient target $c_r(u)$ on the same different-class pairs.

At positions 52, 56, and 71, the median shifted-target agreements across
the 40 endpoints are .0090, .0006, and .0005, respectively; median
original-target agreements are .9678, .9960, and .9982. At position 71,
shifted agreement ranges from zero to .1612, with the largest value in
the low-accuracy $Q_8$ endpoint. Thus the single-position replacement
usually does not produce a sustained change to subsequent quotient
predictions. This test has a specific limit. Even at the intervention
position, donor agreement from the penultimate-block replacement ranges
from .286 to 1.000, so it does not uniformly install a donor state.
Nor does replacing one position rule out a representation distributed
across several positions. We therefore identify a causal contribution to
the current quotient readout, while leaving its sequential implementation
unresolved.

\end{document}